%% file: manuscript.tex
\documentclass[letter,11pt]{article}
\usepackage[margin=1in]{geometry}
\usepackage[american]{babel}

\usepackage[numbers,sort&compress]{natbib}
\usepackage{xcolor}
\usepackage{hyperref}
    
\usepackage{mathtools} % amsmath with fixes and additions
\usepackage{booktabs} % commands to create good-looking tables
\usepackage{tikz} % nice language for creating drawings and diagrams

\usepackage{amsthm}
\usepackage{bbm}

\usepackage{algorithm}

\usepackage{amssymb}    % for \checkmark
\usepackage{algpseudocode}
\usepackage{amsmath}
\usepackage{booktabs}
\usepackage{multirow}

\newtheorem{theorem}{Theorem}

\newtheorem{proposition}{Proposition}
\newtheorem{definition}{Definition}
\newtheorem{assumption}{Assumption}
\newtheorem{corollary}{Corollary}

\usepackage{titletoc}

\DeclareMathOperator*{\argmax}{arg\,max}

\title{\vspace{-10mm}\textbf{Mitigating Spurious Correlations with Memorization-Guided Dataset De-Biasing}}
\date{}
\vspace{-0.05in}
\author{
Arda Fazla, Abolfazl Hashemi\thanks{Authors with the School of Electrical and Computer Engineering, Purdue University, West Lafayette, IN 47907, USA.}}
\begin{document}
\maketitle
%%%%%%%%%%%%%%%%%%%%%%%%%%%%%%%%%%%%%%%%%%%%%%%%%%%%%%%%%%%%%
%%%%%%%%%%%%%%%%%%%%%%%%%%%%%%%%%%%%%%%%%%%%%%%%%%%%%%%%%%%%%

\input{sec/abstract}
\input{sec/intro}

\input{sec/dist_feats}
\input{sec/model}
\input{sec/experiments}
\input{sec/conclusion}

% References
\bibliography{manuscript}

\newpage

\onecolumn

\setcounter{section}{0}
\setcounter{table}{3}
\setcounter{figure}{4}
\setcounter{assumption}{0}
\setcounter{theorem}{-1}
\setcounter{equation}{3}
\setcounter{algorithm}{2}

% ---------- APPENDIX / SUPPLEMENT ----------
\appendix
\section*{Appendix}
\addcontentsline{toc}{section}{Appendix}
\markboth{Appendix}{Appendix}
\startcontents[appendix]
\printcontents[appendix]{l}{1}{\setcounter{tocdepth}{3}}

\newpage
\input{appendix/sec_r}
\input{appendix/sec_a}
\input{appendix/sec_b}
\input{appendix/sec_c}
\input{appendix/sec_d}
\input{appendix/sec_e}
\input{appendix/sec_f}
\input{appendix/sec_g}
\input{appendix/sec_h}
\input{appendix/sec_rcm_anal}
\input{appendix/sec_i}

\end{document}

%% file: sec/abstract.tex
\begin{abstract}
Real-world datasets often contain spurious correlations that are not causally related to the target label. When such correlations dominate the majority of training samples, models tend to rely on them, leading to misclassification of minority samples that do not exhibit the same spurious patterns. While a potential approach is to select subsets of data to better represent the minority samples, this may require access to group labels, which are typically unknown. Furthermore, as we demonstrate, widely used sample scoring functions in the invariant subset or coreset selection literature largely depend on spurious features and therefore fail to accurately capture the importance or difficulty of core, causally relevant features. Accordingly, we propose to mitigate spurious correlations by developing a two-stage sample scoring function that disentangles the learning dynamics of core and spurious features and evaluates their difficulty separately. Based on our proposed metric, we introduce a new algorithm to find and prioritize informative samples both with and without spurious correlations. Extensive experiments demonstrate that a standard ERM model trained on our selected samples achieves superior performance compared to state-of-the-art debiasing techniques, while requiring as little as 10\% of the original training data.
\end{abstract}

%% file: sec/intro.tex
\section{Introduction}

\begin{figure*}[!t]
    \centering
    \includegraphics[width=0.8\textwidth]{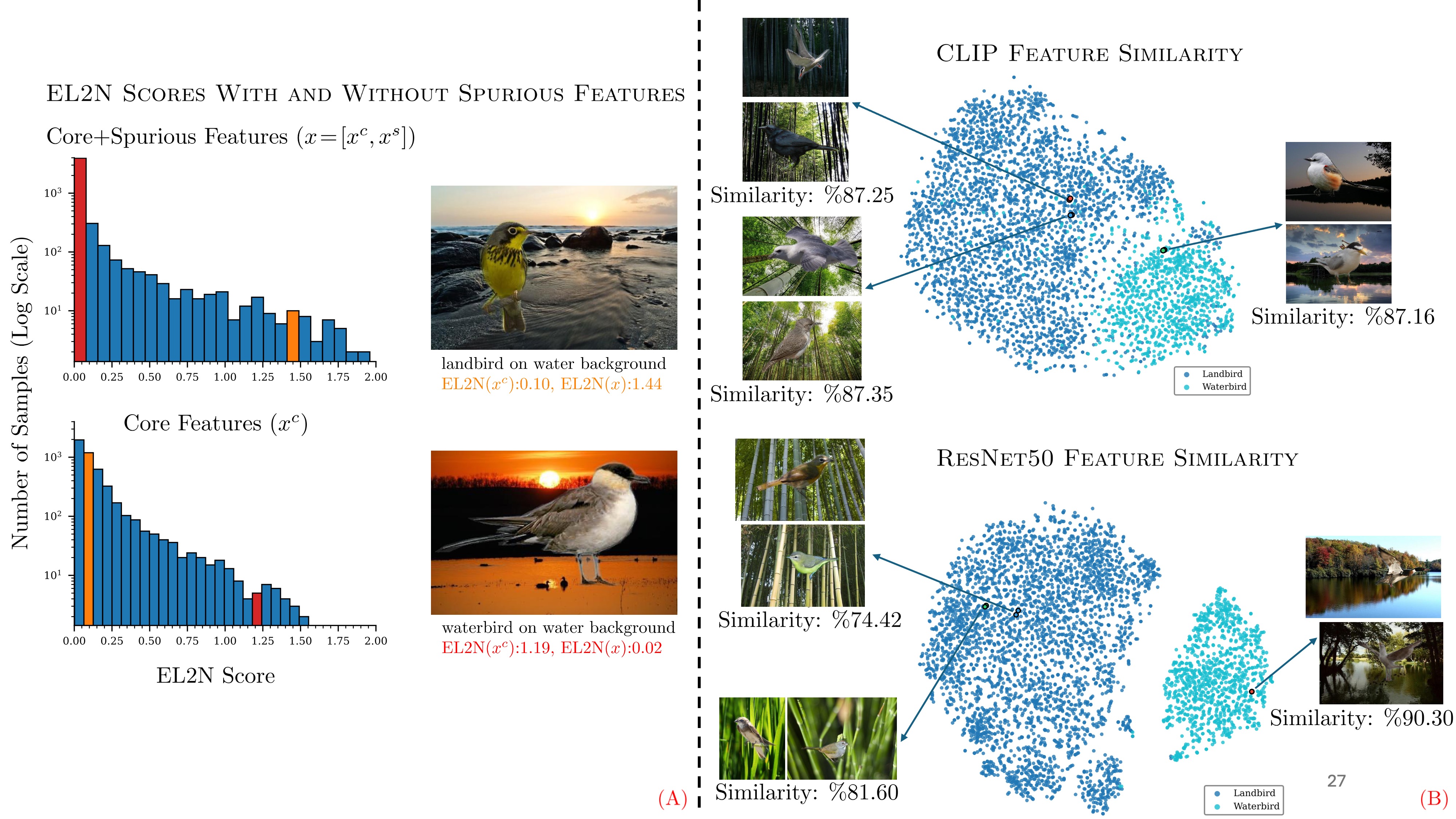}
    \caption{\textit{Comparison of standard sample scores on Waterbirds.} (A) We visualize the EL2N scores for two example images from the Waterbirds dataset, computed on a model trained on the datasets with and without the background. The results demonstrate that the presence of background significantly changes the EL2N score. (B) We present representative image pairs with high similarity based on feature embeddings extracted from a ResNet50 model trained on Waterbirds and from CLIP. In both cases, high similarity is primarily driven by shared background characteristics, even when the bird attributes differ substantially.}
    \label{fig:comp_methods}
\end{figure*}

\begin{figure*}[t]
    \centering    \includegraphics[width=0.8\textwidth]{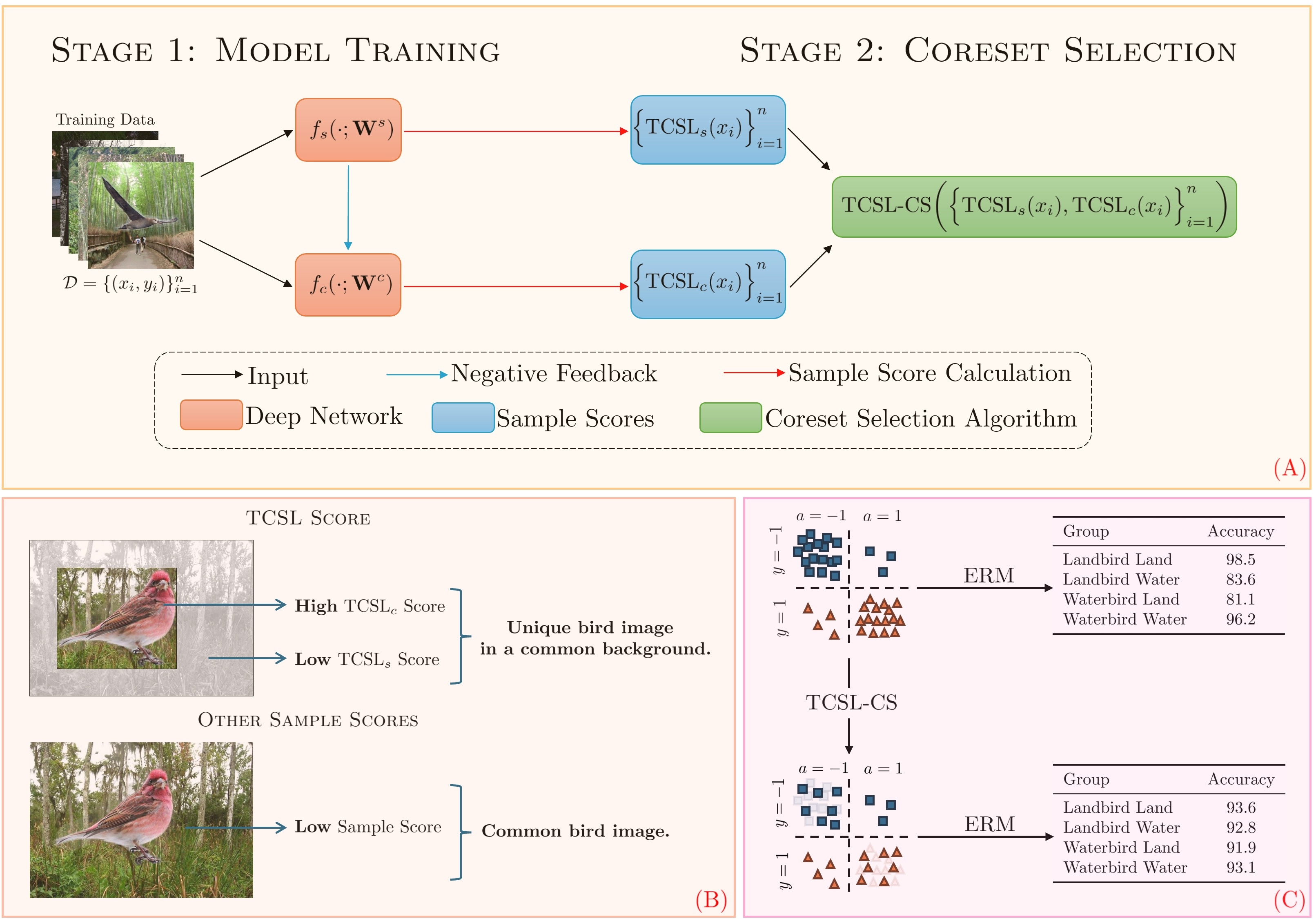}
    \caption{\textit{Overview of our proposed coreset selection algorithm.} (A) We first train a two-stage model to accurately disentangle the learning processes of spurious and core features and compute sample scores for each component separately ($\text{TCSL}_s$ and $\text{TCSL}_c$). We then construct our coreset selection algorithm, the Two-Stage Cumulative Sample Loss (TCSL)-guided Coreset Selection (TCSL-CS), based on the computed scores. (B) We illustrate the advantage of our TCSL score over existing sample scoring functions in the literature, showing its ability to disentangle the difficulty of different feature components within the image. (C) Training an ERM model on the coreset selected by TCSL-CS achieves state-of-the-art WGA on the Waterbirds dataset using only 10\% of the total training data.}
    \label{fig:main_figure}
\end{figure*}

Real-world datasets often contain a large number of samples with spurious correlations that are highly consistent within a class but not predictive of the true class label. Deep models learn to base their predictions on these simpler spurious features rather than the more complex core features \citep{SPARE, SIMP_BIAS_THEORY}, leading to poor worst-group accuracy on test data where the spurious correlations may not hold. Thus, a large body of recent work has focused on developing specialized algorithms to mitigate the bias arising from spurious correlations. However, these approaches deviate from standard ERM training, which remains widely used in practice. This raises the question: can we reduce a model’s reliance on spurious correlations while retaining standard ERM training, without introducing complex, specially designed optimization techniques?

One intuitive approach that could aid with addressing the above question is to leverage existing invariant data or sample selection algorithms designed to construct high-quality subsets of data capable of representing the full training distribution. Leveraging this idea, one may form an informative sample collection, or \emph{coreset}, that represents all groups within the dataset well.
%Another important line of work in deep learning focuses on developing effective coreset selection algorithms that construct high-quality subsets of data capable of representing the full training distribution without sacrificing from model accuracy or generalization. 
%One potential approach to addressing the above question is therefore to construct an informative coreset that adequately represents all groups within the dataset. 
Training a standard ERM model on such a coreset may then achieve competitive performance across all groups. However, we argue that commonly utilized coreset selection algorithms fail to construct strong coresets on spurious datasets, as they are not explicitly designed to ensure high worst-group accuracy, but rather to ensure high average test accuracy. As a result, certain groups within the dataset may be underrepresented in the selected coreset, even when the overall average accuracy remains high. Thus, these coreset selection methods are not directly applicable to spurious datasets. Recent works \citep{CS_LR, DROP} demonstrate that coreset selection strategies built on commonly used sample scores, such as EL2N \citep{EL2N} and SelfSup \citep{SelfSup}, fail to consistently achieve high worst-group accuracy across datasets with known spurious correlations.

Commonly used sample scoring functions and the coreset selection algorithms built upon them typically assign scores to individual samples to reflect their difficulty and derive selection strategies accordingly. We argue that, due to the ``simplicity bias'' phenomenon widely observed in deep learning~\citep{SIMP_BIAS_1, SIMP_BIAS_2, SIMP_BIAS_3, SIMP_BIAS_THEORY}, models tend to learn simple spurious features before capturing more complex core features. Consequently, the behavior of sample scoring functions that rely on model outputs or loss values becomes dominated by spurious features. As a result, samples without spurious correlations are often assigned high scores and classified as ``hard'', whereas samples with spurious correlations receive low scores and are treated as ``easy''. This bias causes commonly used scoring functions in coreset selection to inadequately capture the strength of the underlying core features. We theoretically analyze the impact of simplicity bias on the learning speeds of core and spurious features, as well as on loss-based sample scores, in Section~\ref{sec:problem_setting} and Appendix~\ref{sec:app_theo_anal}.

We illustrate the effects of simplicity bias in Figure~\ref{fig:comp_methods}. In Figure~\ref{fig:comp_methods}(A), we show that the EL2N score changes significantly when the background feature is removed from the dataset, indicating that a common bird image may receive a higher score than a unique bird image solely due to background characteristics. Beyond loss-based sample scoring functions, certain coreset selection algorithms also incorporate sample similarity based on the extracted feature embeddings. Accordingly, we show representative image pairs with high similarity from the Waterbirds dataset in Figure~\ref{fig:comp_methods}(B), identified using cosine similarity of feature embeddings extracted from a ResNet model and the foundation model CLIP. In both cases, similarity is primarily driven by shared background features, even when bird attributes differ substantially. Furthermore, a recent study~\citep{MS_DP} demonstrates that even a small number of samples with simple spurious features and complex core features can cause the model to rely predominantly on the spurious feature for prediction. Motivated by these observations, we argue that commonly used sample scoring functions are dominated by spurious features; consequently, coreset selection algorithms built upon them may fail to distinguish between easy and hard core features, resulting in suboptimal coresets and poor worst-group accuracy.

As a remedy, we propose the Two-Stage Cumulative Sample Loss (TCSL) and the TCSL-guided Coreset Selection (TCSL-CS) algorithm. As illustrated in Figure \ref{fig:main_figure}(A), our approach builds on widely used two-stage training methods from the literature on learning under spurious correlations \citep{LC, LFF, uLA, MAT} and distinguishes the score computation for core and spurious features without requiring access to the spurious attribute (group labels). Accordingly, TCSL consists of two scores, $\text{TCSL}_s$ and $\text{TCSL}_c$, representing the computed difficulty of the spurious and core features, respectively. As shown in Figure \ref{fig:main_figure}(B), while conventional sample scoring functions assign a single score to each image, which is dominated by the spurious feature (e.g., background), our TCSL framework separately evaluates the difficulty of the core (bird) and spurious (background) components. We then design our TCSL-CS algorithm based on TCSL to effectively select coresets that achieve both (1) high average accuracy and (2) high worst-group accuracy. As shown in Figure~\ref{fig:main_figure}(C), a coreset selected by TCSL-CS using only 10\% of the training data improves the worst-group accuracy of a standard ERM model by 11.33\% on the Waterbirds dataset, outperforming baselines that require group labels or complex optimization procedures.

Our contributions and scope: 
\begin{itemize}
    \item We propose the TCSL score, which separately quantifies the learning difficulty of core and spurious features in datasets with spurious correlations.
    \item Our proposed TCSL score enables us to leverage coreset selection as a principled tool to introduce TCSL-CS, an algorithm that selects effective coresets for datasets with spurious correlations, achieving both high average accuracy and high worst-group accuracy.
    \item We provide a strong theoretical analysis of the distinct learning dynamics of core and spurious features.
    \item Through extensive experiments on datasets with spurious correlations, we show that TCSL-CS outperforms existing debiasing and sample scoring baselines without requiring access to group labels.
\end{itemize}

%% file: sec/dist_feats.tex
\section{Problem Formulation}

\subsection{Problem Setting}\label{sec:problem_setting}

Let $\mathcal{D}\!=\!\{(x_1, y_1), \dots, (x_n, y_n)\}$ denote the training dataset of size $n$, where for each data sample we observe an input feature vector $x_i \in \mathbb{R}^d$ and its corresponding label $y_i$. For simplicity and without loss of generality, throughout the analysis, we focus on the binary classification setting where $y_i \in \{\pm1\}$. Each data sample is also associated with an unobserved spurious attribute $a_i \in \{\pm1\}$. We assume that within each class, the training data is partitioned into a majority group and a minority group, where the majority group contains all the samples with $a_i = y_i$, and the minority group contains all the samples with $a_i=-y_i$. The fraction of samples in the majority group within each class is denoted by $\alpha_y$, where $\alpha_y>0.5$ is commonly observed in practice. We denote the fraction of majority samples in the entire dataset by $\alpha$. We consider the setting in which the spurious attribute, and thus the group labels, is unknown to us. We assume each sample $x_i$ consists of core and spurious components, $x_i = [x_i^c, x_i^s]$, where $x_i^c$ is correlated with $y_i$ and $x_i^s$ is correlated with $a_i$. For example, in the Waterbirds~\citep{Waterbirds} dataset, the bird region serves as the core feature $x_i^c$, while the background serves as the spurious feature $x_i^s$.

Deep learning models are typically trained via empirical risk minimization (ERM), where given a model with probability outputs $f(x; \mathbf{W})$ and weights $\mathbf{W}$, we minimize $\mathcal{L}(\mathbf{W}) = \frac{1}{N} \sum_{i=1}^{N} \ell(y_i,f(x_i;\mathbf{W}))$ where $\ell$ can be any suitable loss function, e.g., cross entropy loss or sigmoid loss. Commonly, gradient-based optimization techniques are utilized, such as stochastic gradient descent (SGD), where the parameters of the model are updated at each iteration as $ \mathbf{W}_{t+1} = \mathbf{W}_{t} - \eta_t \nabla \mathcal{L}(\mathcal{B}_t, \mathbf{W}_{t}),$
where $\eta_t$ denotes the learning rate at iteration $t$, and $\mathcal{B}_t$ is the mini-batch of samples from $\mathcal{D}$ used at step $t$. Depending on the context, we use $t$ interchangeably to denote both the update and epoch step, with $T$ denoting the total number of epochs. Here, $\nabla \mathcal{L}(\cdot, \cdot)$ represents the stochastic gradient of batch loss $\mathcal{L}(\mathcal{B}_t, \mathbf{W}_{t})$, which is defined as the average weighted loss over batch:
\begin{equation}\label{eq:mini_batch_loss}
    \mathcal{L}(\mathcal{B}_t, \mathbf{W}_{t}) =  \sum_{i=1}^{|\mathcal{B}_t|} \lambda_i \, \ell(y_i,f(x_i;\mathbf{W}_t)),
\end{equation}
where $\lambda_i$ denotes the weight associated with the $i$-th sample in the batch. Typically, $\lambda_i$ is set to $1/|\mathcal{B}_t|$ to ensure uniform averaging. In the presence of class imbalanced data, $\lambda_i$ is often chosen as the inverse number of samples belonging to the same class, which helps prevent mode collapse, i.e., the model degenerating to predict only the majority class. To maintain consistency across batches, an additional normalization step is applied such that the sum of sample weights within each batch equals one, ensuring that the overall optimization problem remains unchanged.

To evaluate algorithm performance, we use average test accuracy (ACC), defined as
\[
\mathrm{ACC}(\mathbf{W}) =\mathbb{P}_{(x, y) \sim \mathcal{D}_{\text{test}}} \big[\argmax(f(x;\mathbf{W}))=y\big].
\]
With spurious correlations, researchers are often more interested in the worst-group accuracy (WGA):
\[
\mathrm{WGA}(\mathbf{W})\!=\!
\min_{\substack{y \in \{\pm 1\} \\ a \in \{\pm 1\}}}
\mathbb{P}_{(x, y, a) \sim \mathcal{D}_{\text{test}}}
\big[\argmax(f(x;\mathbf{W}))\!=\!y\big],
\]
which measures the worst accuracy of the model among all groups defined by combinations of $y$ and $a$.

Our overall goal is, given a coreset selection ratio $r$, to select $r|\mathcal{D}|$ samples from the training dataset $\mathcal{D}$ that maximize WGA on an unobserved test set. We consider a coreset successful if a standard ERM model trained on it achieves optimal WGA. Our objective is therefore to eliminate the dataset bias induced by spurious correlations and hence improve WGA, without compromising overall generalization performance. Equivalently, the task can be viewed as selecting a coreset of size $c|\mathcal{D}|$ that effectively and equally represents all groups in the training dataset, without access to the group labels.

\subsection{Feature Learning Analysis}\label{sec:prob_form}

\begin{figure}[t]
\centering
\includegraphics[width=0.4\columnwidth]{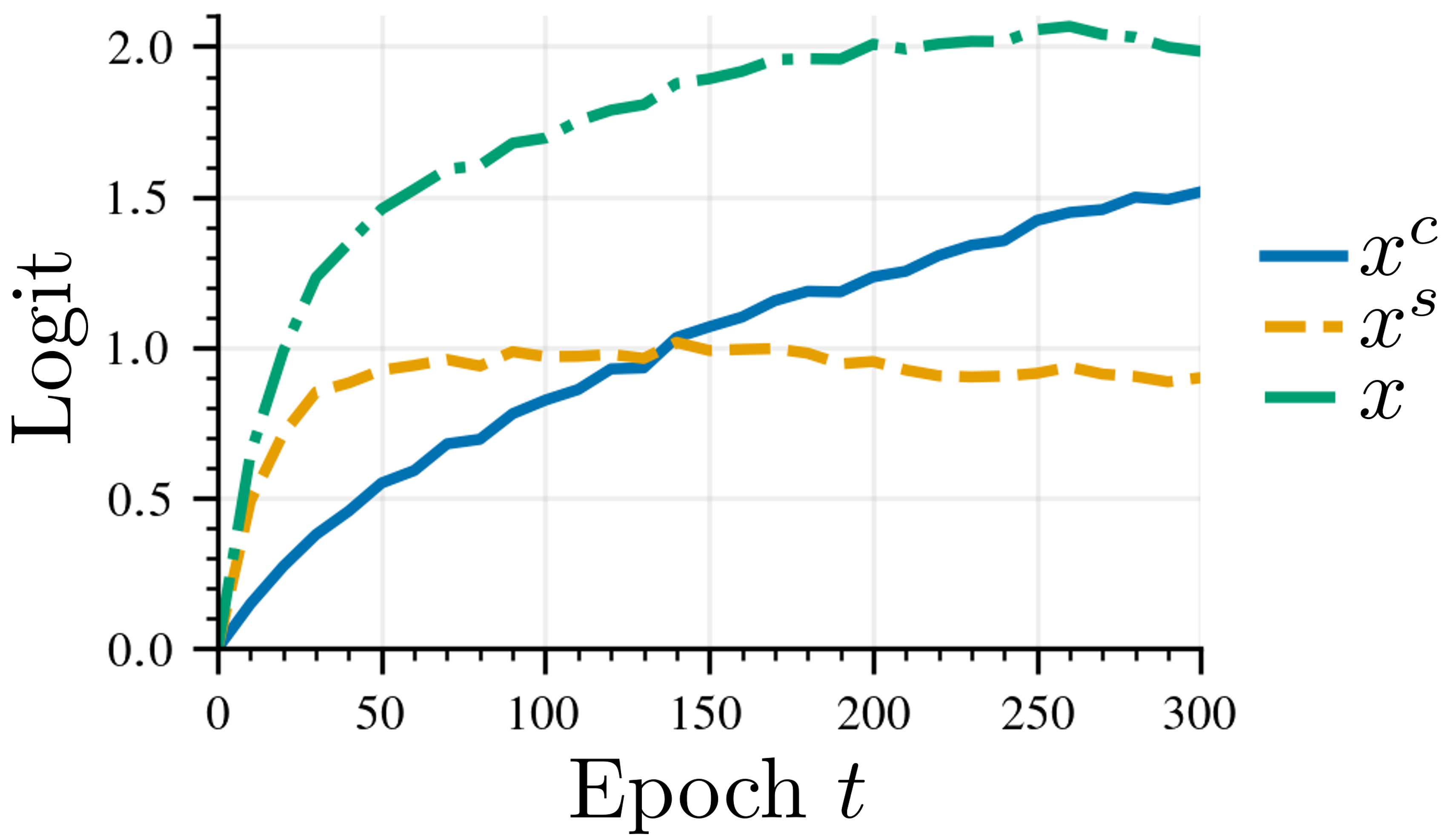}
\vspace{-4pt}
\caption{Average logits computed over the entire dataset. $x$, $x^c$, and $x^s$ denote the logits from full images, core features only and spurious features only, respectively.}
\vspace{-6pt}
\label{fig:core_spur_logit}
\end{figure}

We begin by demonstrating the logit outputs of a ResNet50 model trained with SGD on the Waterbirds dataset, as shown in Figure \ref{fig:core_spur_logit}. Every few epochs, we record the average model predictions for training samples belonging to the largest majority group: landbirds on land backgrounds. The results are collected for the full image, as well as for its core and spurious components, which are manually extracted and denoted as $x$, $x^c$, and $x^s$, respectively. As illustrated in Figure \ref{fig:core_spur_logit}, the logits induced by the spurious features converge much faster than those induced by the core features. This commonly observed behavior in deep learning is called the simplicity bias phenomenon, where models exhibit a strong preference for simpler features over more complex ones \citep{SIMP_BIAS_1, SIMP_BIAS_2, SIMP_BIAS_3}. In our context, when the spurious feature is frequently observed in the dataset and has a higher feature strength than its core counterpart, the model tends to ignore the core feature during the early stages of training, rapidly fitting to the spurious correlations instead~\citep{SPARE, COMP_MATTERS}. While prior work quantitatively defines feature strength in various ways~\citep{SPARE, NTK}, we avoid setting a strict definition and instead treat the strengths of the spurious and core features as unknown, independent, non-negative quantities denoted by $\beta^s$ and $\beta^c$, respectively.

In Theorem~\ref{thm:thm_1} (proved in Appendix \ref{sec:homo-spiked}), we analyze the learning dynamics of a spiked covariance model~\citep{SCM} under the Neural Tangent Kernel (NTK) regime~\citep{NTK_1} and provide theoretical justification showing that the learning speed of spurious features dominates that of core features during the early stages of training. We focus on the NTK regime (see Appendix \ref{sec:app_theo_anal} for details), following similar theoretical studies \citep{NTK, SPARE} as it offers a mathematically tractable and strong approximation of gradient-based training, which has been shown to capture the learning behavior of a broad class of deep learning architectures \citep{NTK_1, NTK_2}. We adopt the spiked covariance model, as it enables a clean separation between core and spurious features while also reflecting key properties of real-world datasets \citep{SCM_1}.

% Theorem 1

\begin{theorem}\label{thm:thm_1}
    Define the simplicity bias condition as $\beta^s (2\alpha - 1) > \beta^c$ and assume this holds. Let $\bar{m}_t(x_i) = \mathbb{E}_{\mathbf{W}_0}[y_i h(x_i; \mathbf{W}_t)]$ be the expected margin of the model.
Then, there exists a time $T' > 0$ such that for all $t \in (0, T')$:
\begin{enumerate}
    \item For any sample from the majority group with  $a_i=y_i$, the expected margin $\bar{m}_t(x_i)$ is positive, and the loss $\ell(\bar{m}_t(x_i))$ is less than $\log(2)$.
    \item For any sample from the minority group with $a_i=-y_i$, the expected margin $\bar{m}_t(x_i)$ is negative, and the loss $\ell(\bar{m}_t(x_i))$ is greater than $\log(2)$.
\end{enumerate}
\end{theorem}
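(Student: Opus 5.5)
The plan is to set up the spiked covariance model in the NTK regime explicitly and compute the expected margin $\bar m_t$ by linearizing the training dynamics around initialization. Write each input as $x_i = [x_i^c, x_i^s]$ with $x_i^c = y_i \sqrt{\beta^c}\, u^c + \xi_i^c$ and $x_i^s = a_i \sqrt{\beta^s}\, u^s + \xi_i^s$. Here $u^c,u^s$ are unit spike directions and $\xi_i$ is isotropic noise, independent of everything else. In the NTK regime the network output evolves as
\[
h(x;\mathbf W_t) = h(x;\mathbf W_0) + \langle \nabla_{\mathbf W} h(x;\mathbf W_0), \mathbf W_t - \mathbf W_0\rangle .
\]
Taking $\mathbb{E}_{\mathbf W_0}$, with a symmetric initialization so that $\mathbb{E}[h(x;\mathbf W_0)]=0$, the expected margin is driven only by the kernel-gradient flow:
\[
\frac{d}{dt}\, h(x_i;\mathbf W_t) = -\frac1n\sum_j K(x_i,x_j)\,\ell'(y_j h(x_j;\mathbf W_t))\, y_j .
\]

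The first step is the short-time expansion. At $t=0$ the expected margin vanishes, and $\ell'(0) = -1/2$ for the logistic loss. Hence
\[
\frac{d}{dt}\bar m_t(x_i)\Big|_{t=0} = \frac{1}{2n}\sum_j \mathbb{E}\big[K(x_i,x_j)\big]\, y_i y_j .
\]
For a kernel depending on inner products, such as the linearized or first-order NTK component, I expect this to reduce (up to a positive constant and a noise term that averages out) to
\[
\frac{1}{2n}\sum_j \Big(\beta^c\, y_i y_j\, y_i y_j + \beta^s\, y_i y_j\, a_i a_j\Big) = \tfrac12\Big(\beta^c + \beta^s\, y_i a_i\, \tfrac1n \textstyle\sum_j y_j a_j\Big).
\]
Since $\frac1n\sum_j y_j a_j = \alpha - (1-\alpha) = 2\alpha-1$, the initial slope is proportional to $\beta^c + y_i a_i\,\beta^s(2\alpha-1)$.

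For majority samples ($a_i=y_i$) this slope is $\beta^c + \beta^s(2\alpha-1) > 0$. For minority samples ($a_i=-y_i$) it is $\beta^c - \beta^s(2\alpha-1)$, which is negative exactly under the simplicity bias condition.

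The second step is to pass from the sign of the derivative at $0$ to the sign of $\bar m_t$ on an interval $(0,T')$. The dynamics are smooth in $t$ because the NTK linear ODE has Lipschitz right-hand side, so $\bar m_t(x_i) = t\, \dot{\bar m}_0(x_i) + O(t^2)$. The $O(t^2)$ constant can be made uniform over samples, because the data and kernel are bounded in expectation, or because there are only finitely many samples. Choosing $T'$ smaller than $\min_i |\dot{\bar m}_0(x_i)| / C$ then gives strict signs for all $t\in(0,T')$. The loss claims follow because the logistic loss $\ell(m)=\log(1+e^{-m})$ is strictly decreasing with $\ell(0)=\log 2$: a positive margin gives loss below $\log 2$, and a negative margin gives loss above it. Here the loss is evaluated at the expected margin, as the statement does.

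The main obstacle is the first step: justifying that $\mathbb{E}_{\mathbf W_0}[K(x_i,x_j)]$ decomposes additively into a core term $\propto \beta^c y_iy_j$ and a spurious term $\propto \beta^s a_ia_j$, with noise cross-terms either vanishing or contributing a sample-independent constant. That constant would cancel in $\sum_j y_iy_j$ only if classes are balanced, or if the class-inverse weights $\lambda_i$ from \eqref{eq:mini_batch_loss} are used. I would first try a two-layer network with a fixed second layer, whose expected NTK is a function of $\langle x_i,x_j\rangle$, and expand it to first order in the inner product. I would argue that higher-order terms are dominated, or state a mild assumption that the kernel's linear component dominates in the spike directions, as in the cited NTK analyses. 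Class balance or reweighting is also needed so that the $\beta^c$ term sums to exactly $\beta^c$.
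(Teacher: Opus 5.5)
Your outer argument is the same as the paper's. You compute the initial velocity of the expected margin from the function-space NTK flow and get $\tfrac12\bigl(\beta^c + y_ia_i\,\beta^s(2\alpha-1)\bigr)$ via $\sum_j y_ja_j = n(2\alpha-1)$. You then read off the signs for the two groups, use $\bar m_0=0$ plus a nonzero derivative to get a short interval (finitely many samples give a common $T'$), and finish with strict monotonicity of $\ell$ and $\ell(0)=\log 2$.

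The gap is the step you yourself flag, and the route you suggest would not close it. The paper does not derive the kernel structure; it assumes it, at the level of the kernel rather than the inputs. First, the logit splits as $h = h_c(x^c;\mathbf W^c) + h_s(x^s;\mathbf W^s)$ with disjoint parameter sets, so the cross-gradient terms vanish and $K = K_c + K_s$ exactly. Second, the ``homogeneous spiked model'' posits exactly $K_c(x_j^c,x_i^c)=\beta^c y_jy_i$ and $K_s(x_j^s,x_i^s)=\beta^s a_ja_i$, with $\beta^c,\beta^s$ defined as these kernel strengths.

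In your setup, a generic two-layer network sees $x^c$ and $x^s$ through a shared first layer. Even with zero noise, its NTK is therefore some function of $\beta^c y_iy_j+\beta^s a_ia_j$, hence of the form $c_0 + c_1y_iy_j + c_2a_ia_j + c_3y_iy_ja_ia_j$, with $c_0,c_3\neq 0$ in general for nonlinear activations. After multiplying by $y_iy_j$ and summing over $j$, the extra terms contribute $c_0\,y_i\sum_j y_j + c_3\,a_i\sum_j a_j$. These depend on the class and attribute marginals, need not vanish, and can override the sign pattern. Even when they vanish, the threshold you get is $c_2(2\alpha-1)>c_1$, not one in your input-level $\beta$'s. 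The noise inner products $\langle\xi_i,\xi_j\rangle$ are fixed once the dataset is fixed, so they do not average out under $\mathbb{E}_{\mathbf W_0}$.

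A ``dominance'' assumption would therefore give only an approximate statement. What you need is that the kernel \emph{equals} $\beta^c\mathbf y\mathbf y^\top+\beta^s\mathbf a\mathbf a^\top$: disjointness removes $c_3$ and the rank-one form removes $c_0$. With that in place, your class-balance/reweighting caveat disappears. The core term never needed balance anyway, since $\frac{1}{2n}\sum_j\beta^c(y_iy_j)^2=\beta^c/2$ for any class sizes.

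A smaller point concerns the factor $\tfrac12$. You justify it by saying the expected margin is zero at $t=0$ and $\ell'(0)=-\tfrac12$, which replaces $\mathbb{E}[\ell'(Z)]$ by $\ell'(\mathbb{E}Z)$. That swap is not valid in general. It holds here only because $Z=y_jh(x_j;\mathbf W_0)$ is symmetric about zero and $\ell'(m)+\tfrac12=\sigma(m)-\tfrac12$ is odd; this is exactly the content of Proposition~\ref{prop:init_grad}. Factoring $\mathbb{E}[K\,\ell']$ into $K\,\mathbb{E}[\ell']$ also requires the kernel to be deterministic, as in the infinite-width limit. Writing $\mathbb{E}[K(x_i,x_j)]$ suggests a random kernel, and then that factorization would need an independence between $K$ and $h(\cdot;\mathbf W_0)$ that does not hold at finite width.
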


\noindent We analyze the expected initial growth rates of the true core margin and the true spurious margin, and show that under the simplicity bias condition $\beta^s (2\alpha - 1) > \beta^c$, the spurious subnetwork is learned faster than the core subnetwork during the initial stages of training. Theorem \ref{thm:thm_1} and Theorem \ref{thm:homo_acceleration} on initial acceleration (stated in Appendix \ref{sec:homo-spiked} due to space limitations) which help explain the phenomenon demonstrated in Figure~\ref{fig:core_spur_logit} are further generalized in Appendix \ref{sec:het-spiked}.

\noindent \textbf{Takeaway.} In the proof of Theorem \ref{thm:thm_1}, we show that the learning of the spurious feature will be faster than the core feature during the early stages of training. Hence, the model output will be dominated by the spurious subnetwork. Consequently, there exists $T' > 0$ such that, for all $t \in (0, T')$, the expected loss of the majority group is upper bounded by $\log(2)$, while the expected loss of the minority group is lower bounded by $\log(2)$ over the same interval. Together, these results indicate that sample losses are shifted to smaller or larger values depending on the spurious attribute, independent of the strength of the core features.

Building on Theorem~\ref{thm:thm_1}, we next show that widely used sample scoring functions in the coreset selection literature cannot be directly employed in the presence of spurious correlations, and motivate the need to separately quantify the difficulty of the core and spurious components of each sample.

\noindent \textbf{Takeaway 2.} Consider any sample scoring function that is non-decreasing in the per-step loss, which captures a wide variety of sample scores employed in the coreset selection literature, e.g., instantaneous loss, average/cumulative loss, area under margin, EL2N, GraNd. Since the spurious feature is learned faster than the core feature, the effect of the spurious component will be evident throughout training. Consequently, by Theorem~\ref{thm:thm_1}, the loss (therefore the assigned sample score) will be dominated by the spurious feature and the (unknown) spurious attribute $a$. Since widely used sample scoring functions~\citep{EL2N, SelfSup, Memorization} assign a single score to each sample, they are inherently unable to assess the feature strengths of the core and spurious components simultaneously. We extend and validate our theoretical findings under the NTK regime to more general deep learning settings through experiments in Section~\ref{sec:exp}. 

%We now examine commonly used sample scoring functions in coreset selection and analyze their behavior under spurious correlations. These functions assign scores to individual samples based on their computed difficulty: strong features corresponding to easy and simple samples receive low scores, while weak features corresponding to hard and complex samples receive high scores~\citep{CS_LR}. However, under spurious correlations, each sample $x$ can be decomposed into core and spurious components whose strengths vary independently. Since widely used sample scoring functions~\citep{EL2N, SelfSup, Memorization} assign a single score to each sample, they are inherently unable to assess the feature strengths of the core and spurious components simultaneously.

%% file: sec/model.tex
\section{TCSL-CS}\label{sec:model}
Our goal is to construct two separate subnetworks to disentangle the learning dynamics of spurious and core components. These subnetworks enable us to independently compute difficulty scores associated with spurious and core features. In Section~\ref{sec:model_train}, we introduce a two-stage network designed to separate the learning processes of the spurious and core components. In Section~\ref{sec:memo_proxy}, we propose a two-stage sample scoring function based on the learned subnetworks. Finally, in Section~\ref{sec:cor_sel}, we present \textbf{Algorithm~\ref{alg:tcsl_coreset}} that integrates information from both core and spurious components to achieve group-robust coreset selection.

\subsection{Model Training}\label{sec:model_train}
To separate the sample scores into spurious and core components, we propose a two-stage model training scheme described in \textbf{Algorithm~\ref{alg:two_stage}}. For notational simplicity, we write
$f_s(x_i) = f_s(x_i;\mathbf{W}_t^s)$ and
$f_c(x_i) = f_c(x_i;\mathbf{W}_t^c)$
and omit the explicit dependence on the parameters whenever it is clear from the context. In the first stage, we train a biased model designed to learn only the spurious features. As supported by our theoretical analysis in Appendix \ref{sec:app_theo_anal} and by prior work on simplicity bias \citep{SPARE, PDE, COMP_MATTERS}, spurious features are learned earlier in training compared to core features. Consequently, samples with strong and easy spurious features are learned the fastest \citep{PDE}. Building on this intuition, at the end of each epoch, we upweight samples with the lowest loss so that they contribute more to the learning process during the next epoch. Hence, we effectively upweight samples for which the spurious attribute agrees with the label ($a = y$), thereby amplifying the learning process of the bias in the data. The proposed architecture is an adapted version of the FLOW algorithm~\citep{FLOW}, originally developed for multi-task learning, which we modify for our task. The median term in the denominator is used to control the range of sample weights across iterations.

\begin{algorithm}[t]
\caption{Two-Stage Model Training}
\label{alg:two_stage}
\textbf{Input:} Dataset $\mathcal{D} = \{(x_i, y_i)\}_{i=1}^n$, training epochs $T_c$ and $T_s$. Let $n_{y_i}$ denote the number of samples in class $y_i$.\\
\textbf{Initialize:} 
\begin{itemize}
    \item Classifiers $f_c$, $f_s$
    \item Sample weights $\lambda_i = \frac{1}{n_{y_i}}$ for all $i \in \{1, \ldots, n\}$ (uniform within each class)
\end{itemize}
\textbf{Output:} Trained models $f_c$, $f_s$.
\begin{algorithmic}[1]
\State \textbf{Stage 1: Train spurious model $f_s$}
    \For{$t = 1$ to $T_s$}
        \State Train $f_s$ on $\mathcal{D}$ using SGD according to Equation~\eqref{eq:mini_batch_loss}
        \State Update sample weights:
        \[
        \lambda_i \gets \exp\!\left(- \frac{\ell(y_i, f_s(x_i))}{\operatorname{median}_j\big(\ell(y_j, f_s(x_j))\big)}\right) \quad \forall i
        \]
    \EndFor
    \State Freeze $f_s$ and reset $\lambda_i = \frac{1}{n_{y_i}}$ for all $i$
\Statex
\State \textbf{Stage 2: Train core model $f_c$}
    \For{$t = 1$ to $T_c$}
        \State Train $f_c$ on $\mathcal{D}$ using SGD according to Equation~\eqref{eq:core_loss}
    \EndFor
\end{algorithmic}
\end{algorithm}

After training the spurious (biased) model $f_s$ for $T_s$ epochs, we freeze its weights and reset the sample weights $\lambda_i$. We then initialize the core model $f_c$ and train it for $T_c$ epochs using SGD, where the cross-entropy loss over a random mini-batch $\mathcal{B}_t$ is adjusted to
\begin{align}
    \mathcal{L}(\mathcal{B}_t, \mathbf{W}_{t}^c) 
    = \sum_{i=1}^{|\mathcal{B}_t|} \lambda_i \, \ell\bigg( 
    &y_i, f_c(x_i;\mathbf{W}_{t}^c) + \log\!\big(f_s(x_i)\big)\bigg).
    \label{eq:core_loss}
\end{align}
Only the core model $f_c$ is updated through Equation~\eqref{eq:core_loss}, as $f_s$ is frozen. Since $f_s$ outputs probabilities,  we have $f_s(x) \in [0,1]$. Intuitively, applying the $\log$ operator yields negative values that reflect the strength of the learned spurious relationships captured by $f_s$. Consequently, $f_s$ acts as a negative feedback mechanism and $f_c$ is pushed towards learning the core features.

Similar two-stage debiasing architectures have been proposed in the literature under the name ``logit correction''~\citep{LC,uLA,LFF,MAT}. In contrast to prior work, we do not employ a debiasing architecture to optimize model performance, but rather to compute sample scores associated exclusively with the spurious and core features, which are then used for coreset selection.

\subsection{Two-Stage Cumulative Sample Loss}\label{sec:memo_proxy}
For our two-stage per-sample score, we build on the literature on the memorization score, initially proposed by~\cite{Memorization}. Since computing the memorization score is computationally inefficient, we instead adopt its recently proposed proxy, the Cumulative Sample Loss (CSL)~\citep{CSL}, where the authors show that the originally proposed memorization score is bounded by the accumulated sample losses over epochs. The original definitions of these scores are provided in Appendix \ref{sec:app_memo_csl}.
We compute the CSL scores for the two learned networks separately. Our Two-Stage Cumulative Sample Loss (TCSL) is defined as:
\begin{equation}
    \text{TCSL}(x) = \left[\text{TCSL}_s(x), \text{TCSL}_c(x)\right],
\end{equation}
where $\text{TCSL}_s(x) = \frac{1}{T_s}\sum_{t=1}^{T_s}\ell(y,f_s(x;\mathbf{W}_t^s))$ and $\text{TCSL}_c(x) = \frac{1}{T_c}\sum_{t=1}^{T_c}\ell(y,f_c(x;\mathbf{W}_t^c))$. As TCSL requires sample losses after every epoch, we compute and store the loss of each samples during subnetwork training in $\textbf{Algorithm \ref{alg:two_stage}}$. Intuitively, for samples where the spurious attribute disagrees with the label ($a=-y$), the $\text{TCSL}_s$ score should be high, as the spurious model $f_s$ is likely to misclassify such samples. In contrast, the $\text{TCSL}_c$ score should depend solely on the difficulty of the core features. We verify these observations experimentally in Section~\ref{sec:exp}.

\subsection{Group Robust Coreset Selection}\label{sec:cor_sel}

%==================  Main: TCSL-guided Coreset Selection  =====================
\begin{algorithm}[t]
\caption{TCSL-guided Coreset Selection}
\label{alg:tcsl_coreset}
\textbf{Input:} TCSL scores for $\mathcal{D}=\{(x_i,y_i)\}_{i=1}^{n}$, selection ratio $r$, loss curves of $f_s$ for every sample, number of bins $B$, threshold $\tau$.\\
\textbf{Output:} Coreset $\mathcal{D}'$ of size $r|\mathcal{D}|$.
\begin{algorithmic}[1]
\State \textbf{Stage 1: Identify groups (by TCSL$_s$)}
\State Set $\mathrm{w}_i = \text{TCSL}_s(x_i), \;
\mathrm{L}_{i,t} = \ell\big(y_i, f_s(x_i;\mathbf{W}_t^s)\big)$
\State $\{G_{\text{high}}, G_{\text{low}}\} \gets \textproc{wKMeans}(\mathbf{w}, \mathbf{L})$
\State $\mathcal{D}' \gets \textproc{UnifRandSelect}\!\left(G_{\text{high}},\, \min\!\big(|G_{\text{high}}|,\, r|\mathcal{D}|\big)\right)$
\Statex
\State \textbf{Stage 2: Fill remaining quota (by TCSL$_c$)}
\If{$|\mathcal{D}'| < r|\mathcal{D}|$}
    \State $G_{\text{low}} \gets \textproc{Sort}\big(G_{\text{low}};\ \text{ascending by TCSL}_c\big)$
    \State $\tilde{n} \gets r|\mathcal{D}| - |\mathcal{D}'|$
    \If{$r \ge \tau$}
        \State $S_{\text{low}} \gets \textproc{SelectBot}(G_{\text{low}},\, \tilde{n})$
    \Else
        \State $S_{\text{low}} \gets \textproc{SelectHist}(G_{\text{low}},\, \tilde{n},\, B)$
    \EndIf
    \State $\mathcal{D}' \gets \mathcal{D}' \cup S_{\text{low}}$
\EndIf
\State \textbf{Return} $\mathcal{D}'$
\end{algorithmic}
\end{algorithm}
\vspace{-3pt}
We propose a group-robust coreset selection algorithm based on the TCSL score, termed TCSL-guided Coreset Selection (TCSL-CS), described in \textbf{Algorithm~\ref{alg:tcsl_coreset}}. We first identify samples with high $\text{TCSL}_s$ scores (samples without strong spurious features) using the $\textproc{wKMeans}$ algorithm on the $\text{TCSL}_s$ scores and the loss curves of $f_s$. $\textproc{wKMeans}$ applies a weighted variant of the k-means algorithm~\citep{KMEANS}, where each sample is represented by its losses computed by $f_s$ over $T_s$ epochs and each sample is assigned a weight given by its $\text{TCSL}_s$ score. Since the sample losses are already obtained during the computation of $\text{TCSL}_s$, this step introduces no additional computational cost but provides the $\textproc{wKMeans}$ algorithm with additional information for clustering. Details of $\textproc{wKMeans}$ are provided in Appendix \ref{sec:app_helper_algs}. Samples with high $\text{TCSL}_s$ scores are grouped into $G_{\text{high}}$, from which we randomly select samples until the quota $\min\!\big(|G_{\text{high}}|,\,\!r|\mathcal{D}|\big)$ is filled. As $\alpha\!>\!0.5$ by definition, the majority of samples are assigned to group $G_{\text{low}}$.

Next, we fill the remaining quota ($r|\mathcal{D}| - |\mathcal{D}'|$) with samples from $G_{\text{low}}$. Following recent consensus in the coreset selection literature, we follow two different selection strategies depending on the selection ratio $r$. Since the simplicity bias arises primarily from samples with simple spurious but complex core features, our goal is to eliminate such samples. Therefore, for $r \geq \tau$, we use $\textproc{SelectBot}$ to select samples with low $\text{TCSL}_c$ scores. However, when the coreset selection ratio is small ($r < \tau$), selecting only samples with low $\text{TCSL}_c$ scores reduces diversity and is thus suboptimal. In this case, we use $\textproc{SelectHist}$, which partitions the samples into $B$ bins based on the $\text{TCSL}_c$ scores such that each bin contains an equal number of samples and randomly samples from each bin until the quota is filled. Finally, we combine the samples selected from both stages to form the final coreset $\mathcal{D}'$. Details of $\textproc{SelectBot}$ and $\textproc{SelectHist}$ are provided in Appendix \ref{sec:app_helper_algs}. We provide the runtime, computational complexity, and memory usage analysis of our algorithm in Appendix \ref{sec:app_rcsm}.

%% file: sec/experiments.tex
\section{Experiments}\label{sec:exp}

\subsection{Performance Comparison}

In this section, we evaluate the effectiveness of the proposed TCSL-CS algorithm on four computer vision benchmark datasets known to exhibit spurious correlations: Waterbirds~\citep{Waterbirds}, cMNIST~\citep{Cmnist}, MetaShift~\citep{Metashift}, and UrbanCars-B~\citep{Urbancars}. Additional details on the datasets are provided in Appendix \ref{sec:app_dataset_details} and training details for each dataset are provided in Appendix \ref{sec:app_hyperparams}.

First, we evaluate TCSL-CS as a debiasing algorithm by comparing the performance of an ERM model trained on the coreset identified by TCSL-CS with coreset selection ratio $r\!=\!0.1$ against state-of-the-art debiasing methods from the literature. We choose this ratio to assess the effectiveness of our approach, as it highlights the two key capabilities of our proposed method: (1) clustering samples into high and low spuriosity regions as identified by the $\text{TCSL}_s$ score and (2) ranking high spuriosity samples based on their core feature difficulty, as captured by the $\text{TCSL}_c$ score. Results for different coreset selection ratios are presented in Appendix \ref{sec:app_add_exp}. Additional information on the baseline methods is provided in Appendix \ref{sec:app_info_baseline}.

Table~\ref{tab:tab_1_1} and Table~\ref{tab:tab_1_2} present the worst-group accuracy (WGA) and average accuracy (AVG) on the test datasets for a standard ERM model trained on the coreset identified by our TCSL-CS algorithm, compared with other baseline debiasing methods. Training a standard ERM model on the TCSL-CS coreset substantially improves WGA, achieving performance on par with state-of-the-art baselines across multiple datasets, even outperforming models that have access to group labels (spurious attribute) in some cases. For instance, TCSL-CS improves the WGA of ERM by 10.44\% on the Waterbirds dataset, outperforming GroupDRO, a strong baseline that requires access to the spurious attribute.

Next, we compare the performance of TCSL-CS against other coreset selection algorithms from the literature. To the best of our knowledge, TCSL-CS is the first coreset selection method specifically designed to operate under spurious correlations and achieve group robustness. For completeness, we follow the experimental setup of recent work~\citep{CS_LR} and use EL2N, Random, RGbal, and SelfSup as our baselines for computing sample scores. We then construct coreset selection strategies under three different settings: (Bot) selects samples with the lowest scores, (Top) selects samples with the highest scores and (Hist) selects samples using a histogram-based approach. We also provide four versions of the state-of-the-art D2 coreset selection algorithm \citep{D2}, which effectively combines sample scores with feature similarity for enhanced data selection and coverage. Further details for these algorithms are provided in Appendix \ref{sec:app_info_baseline}.

The results are shown in Table~\ref{tab:tab_2}. Overall, TCSL-CS outperforms all other methods across all datasets, except on Waterbirds, where RGbal achieves slightly better performance with access to group labels. Moreover, on the cMNIST dataset, where spurious correlations are particularly strong ($\alpha = 0.995$), TCSL-CS surpasses the next best method by more than 11\%. The poor performance of baseline sample scores supports our theoretical analysis in Section~\ref{sec:prob_form} and aligns with recent findings~\citep{CS_LR} showing that widely used coreset selection algorithms perform poorly when spurious correlations are dominant in the data. Additional results for different coreset selection ratios are provided in Appendix \ref{sec:app_add_exp}.

\begin{table}[t]
\centering
\small
\setlength{\tabcolsep}{3pt} % reduce column spacing
\resizebox{0.8\linewidth}{!}{%
\begin{tabular}{@{}lcccc@{}}
\toprule
\textbf{Method} & \multicolumn{2}{c}{\textbf{Group Info}} & \textbf{Waterbirds} & \textbf{cMNIST} \\
& Train & Val & WGA (AVG) & WGA (AVG) \\
\midrule
CB ERM             & x & x & $81.15{\scriptsize\pm2.26}$ ($97.97{\scriptsize\pm0.03}$) & $56.61{\scriptsize\pm3.41}$ ($76.01{\scriptsize\pm2.54}$) \\
GB ERM             & \checkmark & \checkmark & $89.98{\scriptsize\pm0.77}$ ($98.20{\scriptsize\pm0.03}$) & $\underline{81.46}{\scriptsize\pm3.33}$ ($91.83{\scriptsize\pm1.10}$) \\
GroupDRO*          & \checkmark & \checkmark & $91.40{\scriptsize\pm1.10}$ ($93.50{\scriptsize\pm0.30}$) & $58.66{\scriptsize\pm5.46}$ ($76.56{\scriptsize\pm2.26}$) \\
LC*                & x & \checkmark & $90.5{\scriptsize\pm1.1}$ & $71.25{\scriptsize\pm3.17}$ \\
DFR*               & x & \checkmark & $\boldsymbol{92.1}$ ($96.7$) & $74.2$ ($93.7$) \\
CNC*               & x & \checkmark & $88.5{\scriptsize\pm0.3}$ ($90.9{\scriptsize\pm0.1}$) & $77.4{\scriptsize\pm3.0}$ ($90.9{\scriptsize\pm0.6}$) \\
LfF*               & x & \checkmark & $75.2$ ($97.5$) & $77.0$ \\
JTT*               & x & \checkmark & $86.0$ ($93.6$) & $74.04{\scriptsize\pm1.33}$ \\
ULA*               & x & \checkmark & $86.1{\scriptsize\pm1.5}$ ($91.5{\scriptsize\pm0.7}$) & $75.13{\scriptsize\pm0.78}$ \\
EIIL*              & x & x & $77.2{\scriptsize\pm1.0}$ ($96.5{\scriptsize\pm0.2}$) & $72.8{\scriptsize\pm6.8}$ ($90.7{\scriptsize\pm0.9}$) \\
GEORGE*            & x & x & $76.2{\scriptsize\pm2.0}$ ($95.7{\scriptsize\pm0.5}$) & $76.4{\scriptsize\pm2.3}$ ($89.5{\scriptsize\pm0.3}$) \\
TCSL-CS (ERM)      & x & x & $\underline{91.91}{\scriptsize\pm0.35}$ ($92.83{\scriptsize\pm0.55}$) & $\boldsymbol{83.37}{\scriptsize\pm1.33}$ ($91.76{\scriptsize\pm0.44}$) \\
\bottomrule
\end{tabular}
}
\caption{We compare the WGA of different methods on the Waterbirds and cMNIST datasets. We provide the AVG in parentheses, if available. * indicates the original scores reported by the authors or subsequent work. The best WGA for each dataset is shown in \textbf{bold} and the second best value is \underline{underlined}.}
\label{tab:tab_1_1}
\end{table}

\begin{table}[t]
\centering
\small
\setlength{\tabcolsep}{3pt} % reduce column spacing
\resizebox{0.8\linewidth}{!}{%
\begin{tabular}{@{}lcccc@{}}
\toprule
\textbf{Method} & \multicolumn{2}{c}{\textbf{Group Info}} & \textbf{MetaShift} & \textbf{UrbanCars-B} \\
& Train & Val & WGA (AVG) & WGA (AVG) \\
\midrule
CB ERM      & x & x & $73.15{\scriptsize\pm2.63}$ ($87.27{\scriptsize\pm0.27}$) & $66.00{\scriptsize\pm2.80}$ ($83.30{\scriptsize\pm0.36}$) \\
GB ERM      & \checkmark & \checkmark & $75.69{\scriptsize\pm2.50}$ ($88.48{\scriptsize\pm0.50}$) & $74.53{\scriptsize\pm3.63}$ ($87.23{\scriptsize\pm1.17}$) \\
GroupDRO    & \checkmark & \checkmark & $74.07{\scriptsize\pm2.89}$ ($87.62{\scriptsize\pm0.20}$) & $71.33{\scriptsize\pm1.29}$ ($85.63{\scriptsize\pm0.68}$) \\
LC          & x & \checkmark & $\underline{76.16}{\scriptsize\pm1.75}$ ($88.71{\scriptsize\pm0.63}$) & $\underline{78.67}{\scriptsize\pm1.29}$ ($87.40{\scriptsize\pm0.4}$) \\
DFR         & x & \checkmark & $75.23{\scriptsize\pm0.80}$ ($95.17{\scriptsize\pm0.40}$) & $77.50{\scriptsize\pm1.10}$ ($81.00{\scriptsize\pm0.40}$) \\
TCSL-CS (ERM) & x & x & $\boldsymbol{79.40}{\scriptsize\pm2.23}$ ($84.95{\scriptsize\pm1.52}$) & $\boldsymbol{84.27}{\scriptsize\pm1.67}$ ($86.93{\scriptsize\pm0.78}$) \\
\bottomrule
\end{tabular}
}
\caption{We compare the WGA of different methods on the MetaShift and UrbanCars-B datasets. We also provide the AVG in parentheses. Results are averaged over $3$ seeds. The best WGA for each dataset is shown in \textbf{bold} and the second best value is \underline{underlined}.}
\label{tab:tab_1_2}
\end{table}

\begin{table*}[!hbtp]
\centering
\resizebox{\textwidth}{!}{
\begin{tabular}{lccccc}
\toprule
\textbf{Method} & {\textbf{Group Info}} & \textbf{Waterbirds} & \textbf{cMNIST} & \textbf{MetaShift} & \textbf{UrbanCars-B} \\
& Train & WGA (AVG) & WGA (AVG) & WGA (AVG) & WGA (AVG) \\
\midrule
EL2N (Bot)     & x & $39.88${\scriptsize$\pm0.93$} ($93.90${\scriptsize$\pm0.16$}) & $0.00${\scriptsize$\pm0.00$} ($15.90${\scriptsize$\pm1.08$}) & $53.47${\scriptsize$\pm4.81$} ($78.76${\scriptsize$\pm2.01$}) & $35.60${\scriptsize$\pm4.33$} ($68.07${\scriptsize$\pm3.59$}) \\
EL2N (Top)     & x & $88.13${\scriptsize$\pm3.32$} ($89.53${\scriptsize$\pm2.61$}) & $52.67${\scriptsize$\pm5.84$} ($76.58${\scriptsize$\pm2.34$}) & $35.19${\scriptsize$\pm19.60$} ($59.61${\scriptsize$\pm9.74$}) & $28.53${\scriptsize$\pm4.74$} ($56.63${\scriptsize$\pm2.43$}) \\
EL2N (Hist)    & x & $64.94${\scriptsize$\pm9.30$} ($95.97${\scriptsize$\pm0.33$}) & $1.72${\scriptsize$\pm1.12$} ($24.10${\scriptsize$\pm5.07$}) & $64.12${\scriptsize$\pm4.93$} ($81.48${\scriptsize$\pm2.27$}) & $54.00${\scriptsize$\pm1.74$} ($75.33${\scriptsize$\pm0.61$}) \\
SelfSup (Bot)       & x & $59.81${\scriptsize$\pm0.52$} ($91.01${\scriptsize$\pm0.59$}) & $0.00${\scriptsize$\pm0.00$} ($18.48${\scriptsize$\pm0.26$}) & $38.19${\scriptsize$\pm6.01$} ($77.14${\scriptsize$\pm1.90$}) & $38.80${\scriptsize$\pm1.20$} ($74.90${\scriptsize$\pm0.46$}) \\
SelfSup (Top)       & x & $75.39${\scriptsize$\pm0.82$} ($96.25${\scriptsize$\pm0.28$}) & $58.15${\scriptsize$\pm3.09$} ($77.45${\scriptsize$\pm1.44$}) & $37.96${\scriptsize$\pm5.31$} ($75.35${\scriptsize$\pm3.01$}) & $27.47${\scriptsize$\pm5.72$} ($64.10${\scriptsize$\pm1.87$}) \\
SelfSup (Hist)      & x & $74.97${\scriptsize$\pm0.64$} ($96.11${\scriptsize$\pm0.21$}) & $3.94${\scriptsize$\pm3.72$} ($24.57${\scriptsize$\pm2.83$}) & $68.06${\scriptsize$\pm3.67$} ($84.26${\scriptsize$\pm1.28$}) & $57.87${\scriptsize$\pm10.51$} ($79.00${\scriptsize$\pm3.22$}) \\
D2 (EL2N+ResNet) & x & $89.92${\scriptsize$\pm2.14$} ($91.44${\scriptsize$\pm2.55$}) & $31.77${\scriptsize$\pm11.91$} ($58.74${\scriptsize$\pm4.54$}) & $26.85${\scriptsize$\pm5.57$} ($55.61${\scriptsize$\pm4.76$}) & $53.20${\scriptsize$\pm4.00$} ($66.27${\scriptsize$\pm0.78$}) \\
D2 (SelfSup+ResNet) & x & $75.49${\scriptsize$\pm1.67$} ($96.37${\scriptsize$\pm0.08$}) & $22.89${\scriptsize$\pm10.66$} ($55.98${\scriptsize$\pm5.44$}) & $37.50${\scriptsize$\pm11.47$} ($75.17${\scriptsize$\pm2.73$}) & $30.13${\scriptsize$\pm3.80$} ($68.77${\scriptsize$\pm1.14$}) \\
D2 (EL2N+CLIP) & x & $90.39${\scriptsize$\pm0.74$} ($96.82${\scriptsize$\pm0.37$}) & $28.18${\scriptsize$\pm10.38$} ($58.19${\scriptsize$\pm1.76$}) & $\underline{78.01}${\scriptsize$\pm0.40$} ($82.47${\scriptsize$\pm1.67$}) & $58.40${\scriptsize$\pm0.40$} ($65.13${\scriptsize$\pm3.17$}) \\
D2 (SelfSup+CLIP) & x & $75.49${\scriptsize$\pm1.41$} ($97.31${\scriptsize$\pm0.07$}) & $32.39${\scriptsize$\pm6.14$} ($61.57${\scriptsize$\pm2.31$}) & $46.53${\scriptsize$\pm7.89$} ($77.89${\scriptsize$\pm2.09$}) & $28.67${\scriptsize$\pm7.64$} ($65.80${\scriptsize$\pm1.91$}) \\
Random             & x & $62.15${\scriptsize$\pm1.84$} ($96.69${\scriptsize$\pm0.12$}) & $1.82${\scriptsize$\pm2.96$} ($24.76${\scriptsize$\pm1.15$}) & $65.97${\scriptsize$\pm1.84$} ($81.77${\scriptsize$\pm0.90$}) & $51.07${\scriptsize$\pm3.49$} ($76.07${\scriptsize$\pm1.67$}) \\
RGbal              & \checkmark & $\boldsymbol{92.67}${\scriptsize$\pm0.58$} ($93.68${\scriptsize$\pm0.41$}) & $\underline{71.68}${\scriptsize$\pm13.83$} ($87.59${\scriptsize$\pm2.24$}) & $70.14${\scriptsize$\pm2.78$} ($81.13${\scriptsize$\pm0.44$}) & $\underline{84.00}${\scriptsize$\pm1.20$} ($86.57${\scriptsize$\pm0.06$}) \\
TCSL-CS           & x & $\underline{91.91}${\scriptsize$\pm0.35$} ($92.83${\scriptsize$\pm0.55$}) & $\boldsymbol{83.37}${\scriptsize$\pm1.33$} ($91.76${\scriptsize$\pm0.44$}) & $\boldsymbol{79.40}${\scriptsize$\pm2.23$} ($84.95${\scriptsize$\pm1.52$}) & $\boldsymbol{84.27}${\scriptsize$\pm1.67$} ($86.93${\scriptsize$\pm0.78$}) \\
\bottomrule
\end{tabular}
}
\caption{We compare the WGA and AVG of different coreset selection methods at $r\!=\!0.1$ across datasets. Results are averaged over $3$ seeds. We use CB ERM for retraining on all of the identified coresets. The best WGA for each dataset is shown in \textbf{bold} and the second best value is \underline{underlined}.
}
\label{tab:tab_2}
\end{table*}

\subsection{Ablation Study}

We conduct an ablation study on the Waterbirds dataset, as it is synthetically constructed and allows for an easy separation of core and spurious features. Here, we evaluate the accuracy of the TCSL score in disentangling the computation of spurious and core feature difficulties. Additional ablation studies on $\text{TCSL}_s$ and $\text{TCSL}_c$, as well as cross-architecture evaluations and robustness of the hyperparameters of $\text{TCSL-CS}$, are provided in Appendix~\ref{sec:app_abb_study}.

In Figure~\ref{fig:ablation_study}, we compute the cosine similarity of the CSL scores under different training schemes. $\text{CSL}(x)$ denotes the class-balanced ERM score on the full dataset, $\text{CSL}(x^c)$ and $\text{CSL}(x^s)$ are scores from ERM models trained on core-only and spurious-only features, respectively. Figure~\ref{fig:ablation_study} demonstrates that $\text{TCSL}_s$ and $\text{TCSL}_c$ achieve high cosine similarity with $\text{CSL}(x^s)$ and $\text{CSL}(x^c)$, respectively. These results confirm that TCSL successfully disentangles the learning processes of core and spurious features. We further evaluate the similarity for the Top 1{,}000 and Top 100 samples with the highest $\text{TCSL}_c$ scores. For the Top 100 samples, $\text{CSL}(x^c)$ and $\text{TCSL}_c$ are nearly identical, with a cosine similarity of 0.918. This further demonstrates that $\text{TCSL}_c$ accurately captures core feature difficulty, effectively identifying samples with the most challenging core features, regardless of the spurious attribute. We additionally include density visualizations of EL2N and TCSL scores across different groups in Appendix~\ref{sec:app_abb_study}.

\begin{figure}[t]
    \centering
    \includegraphics[width=0.8\linewidth]{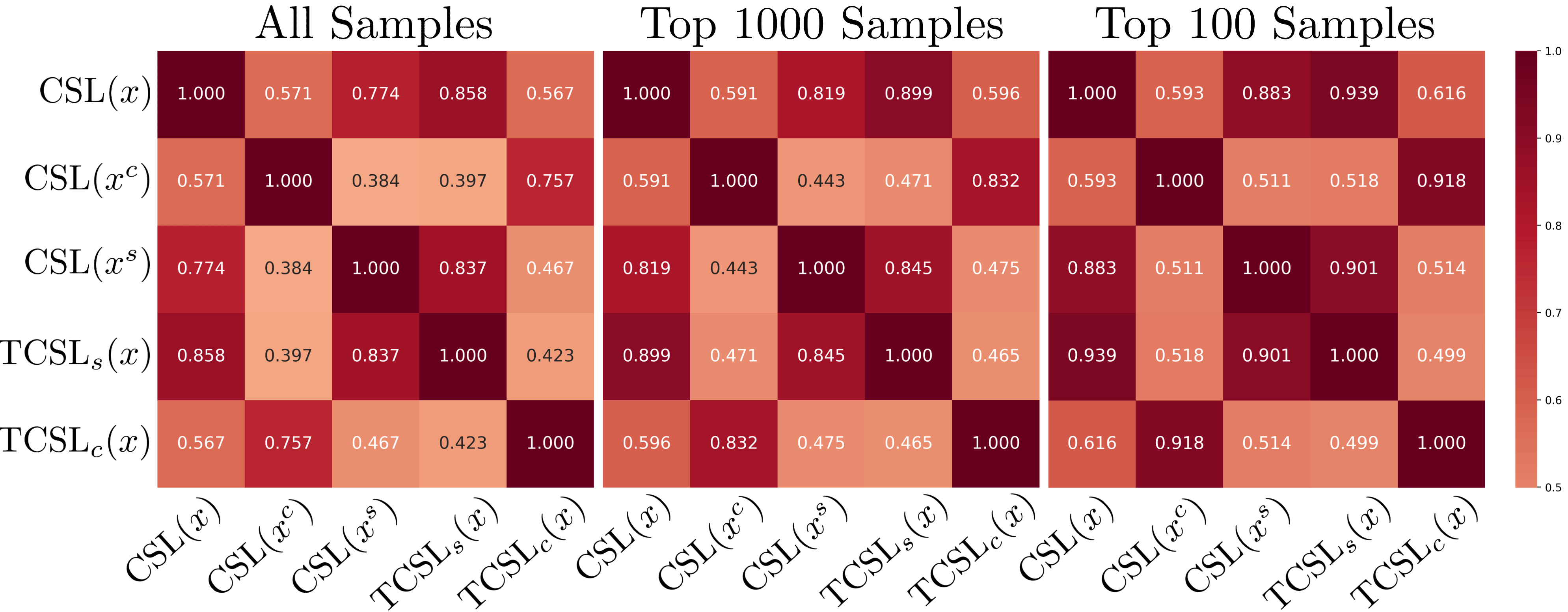}
    \caption{Cosine similarity between the TCSL score components and the CSL scores computed on the Waterbirds dataset for ERM models trained with and without the core and spurious parts of images. Samples with the highest core feature difficulty, as identified by the $\text{TCSL}_c$ score, are specifically evaluated.}
    \label{fig:ablation_study}
\end{figure}

%% file: sec/conclusion.tex
\section{Conclusion}\label{sec:conc}
We present TCSL-CS, a debiasing framework based on coreset selection, specifically built for datasets with strong spurious correlations. TCSL-CS is built on top of our proposed TCSL score, which disentangles the sample score computation for the core and spurious features of images and provides difficulty estimates for each part separately. Across a variety of datasets, we show that a standard ERM model trained on the coresets identified by TCSL-CS achieves performance on par with or exceeding state-of-the-art baselines across multiple datasets, even outperforming models that have access to group labels.

We highlight that our debiasing method based on coreset selection has a significant advantage over other debiasing methods: standard ERM trained on only $10\%$ of the data achieves state-of-the-art performance on highly spurious datasets. Hence, the identified coreset is broadly applicable, for example, it can be used as part of larger training pipelines. The expensive training step is performed once, after which the resulting coreset can be reused as a standalone product. Thus, TCSL-CS is significantly more scalable than other debiasing methods.

%% file: appendix/sec_r.tex
\section{Related Work}\label{sec:app_related_work}

\noindent \textbf{Simplicity Bias.} The simplicity bias phenomenon of deep learning models trained on SGD, where the model learns simpler features before more complex ones even when the latter are more predictive, has been both theoretically analyzed and empirically observed across various settings and architectures \citep{SIMP_BIAS_1, SIMP_BIAS_2, SIMP_BIAS_3}. Recent studies have further extended the notion of simplicity bias to learning under spurious correlations, showing that deep models tend to learn spurious features before capturing the more complex core features \citep{SPARE,COMP_MATTERS,PDE,SIMP_BIAS_THEORY}.

\noindent \textbf{Mitigating Spurious Correlations.} A variety of methods have been proposed to mitigate the learning of spurious correlations during training. SPARE \citep{SPARE} leverages the simplicity bias phenomenon by upsampling underrepresented groups that are identified early in training. Feature reweighting based methods \citep{DFR, AFR} finetune the final layer of an ERM model using a group balanced validation set. Two-stage approaches \citep{JTT, LFF} train two models, where the second model is designed to debias the first, typically by upweighting the ``hard'' samples identified by the initial model. Among these, the methods most similar to our sample score computation scheme are the logit-correction approaches \citep{LC, uLA, MAT}, in which the second model is built directly on top of the first model’s predictions, treating them as prior knowledge. The authors of MAT \citep{MAT} further show that their logit-corrected model assigns similar self-influence scores to majority and minority groups, computed using an external influence estimation method called TRAK \citep{TRAK}. These prior methods highlight the importance of two-stage modeling, which directly motivates the computation of our Two-Stage Cumulative Sample Loss (TCSL).

\noindent \textbf{Coreset Selection.} The goal of coreset selection is to identify a subset of samples that accurately represents the entire dataset, thereby reducing training data size while maintaining model performance \citep{CS_LR}. Most coreset selection methods rely on sample scores that quantify the importance or difficulty of individual samples. These scores vary in formulation but are typically functions of the model’s output computed at one or multiple epochs. Commonly used sample scores include EL2N \citep{EL2N}, Memorization \citep{Memorization} and SelfSup \citep{SelfSup}, all of which have proven effective in various datasets and are widely used as baselines in the literature. Based on these scores, researchers have developed numerous coreset selection strategies, though the optimal selection policy remains an open question. Recent studies \citep{SelfSup, HistCS} have shown that the optimal policy depends on the coreset selection ratio: higher ratios favor pruning ``easy'' samples, while lower ratios favor pruning ``hard'' ones. Furthermore, histogram-based selection approaches \citep{HistCS} have gained popularity for their ability to capture the overall data distribution by sampling from both ``easy'' and ``hard'' regions. In addition, recent work has explored combining score-based functions with feature similarity metrics to improve coreset coverage. For example, the state-of-the-art D2 pruning method~\citep{D2} employs a message-passing algorithm to integrate both scoring- and similarity-based importance metrics.

\noindent \textbf{Coreset Selection Meets Spurious Correlations.} A growing body of recent work has explored the intersection between coreset selection and learning under spurious correlations. \citep{MS_DP} investigates the setting where group labels (spurious attributes) are available and experimentally demonstrates that removing a small fraction of samples with spurious correlations, particularly those with complex (hard) core features, significantly improves worst-group accuracy. \citep{DROP} proposes a coreset selection algorithm aimed at achieving consistently high accuracy across all classes and further shows that their method can be extended to the group-level setting when group labels are available. Most recently and most closely related to our work, \citep{CS_LR} conducts a comprehensive study analyzing how commonly used EL2N and SelfSup scores, and the coreset selection strategies built on them, behave under spurious correlations.

To the best of our knowledge, our method is the first coreset selection algorithm specifically designed to achieve both high average accuracy and high worst-group accuracy, without requiring access to group labels. Our dual objectives are (1) mitigating spurious correlations and (2) reducing training data size. Next, we formalize the problem setting and illustrate why commonly used coreset selection algorithms fail to achieve these objectives.

%% file: appendix/sec_a.tex
\section{Theoretical Analysis}\label{sec:app_theo_anal}

In this section, we provide theoretical justifications for the simplicity bias phenomenon by resorting to the Neural Tangent Kernel (NTK) regime \citep{NTK_1} and a series of spiked data covariance models \citep{paul2007asymptotics,wang2024nonlinear}. This analysis provides a rigorous, dynamic foundation for the claims made in the main paper. We provide a brief introduction to NTK and spiked covariance models at the end of this section to make our theoretical analysis more accessible.

Recall from \citep{NTK_1} (here we simplify and adopt the notation to our setup) that the NTK regime emerges in the infinite-width limit of a neural network. The key consequences of this limiting setting are: 

\begin{enumerate}
    \item The network's output function at initialization $h(x; \mathbf{W}_0)$ (which we refer to as the logit) becomes a draw from a Gaussian Process (Proposition 1 in \citep{NTK_1}).
    \item The Neural Tangent Kernel $K(x, x'; \mathbf{W}) := \nabla_\mathbf{W} h(x; \mathbf{W}) \cdot \nabla_\mathbf{W} h(x'; \mathbf{W})$ converges to a deterministic, positive semi-definite kernel $K(x, x')$ that is constant in time (Theorem 1 in \citep{NTK_1}).
    \item The evolution of the logit outputs $h(x_i; \mathbf{W}_t)$ for the $n$ training samples under gradient flow for the empirical loss $\mathcal{L}(\mathbf{W}) = \frac{1}{n} \sum_{i=1}^n \ell(y_i, h(x_i; \mathbf{W}_t))$ is governed by an exact, deterministic, non-linear Ordinary Differential Equation (ODE) in function space (Theorem 2 in \citep{NTK_1}). For a specific logit $h_j(t) \equiv h(x_j; \mathbf{W}_t)$, the dynamic is
\begin{equation}\label{eq:flow_ode}
\frac{\partial h(x_j; \mathbf{W}_t)}{\partial t} = -\frac{1}{n} \sum_{i=1}^n K(x_j, x_i) \frac{\partial \ell(y_i, h(x_i; \mathbf{W}_t))}{\partial h(x_i; \mathbf{W}_t)}
\end{equation}
\end{enumerate}

To analyze the dynamics, we require a set of foundational assumptions. These are standard in the theoretical analysis of deep learning and are necessary to make the problem analytically tractable.

The following is a standard assumption in NTK, as it is a direct consequence of the Central Limit Theorem applied to wide networks initialized with zero-mean symmetric weights (e.g., Gaussian). It allows us to analyze the deterministic expected dynamics rather than a single stochastic trajectory.

\begin{assumption}[Symmetric Initialization]\label{assum:sym_init}
    The initialization distribution $p_\mathbf{W}$ is symmetric such that the resulting Gaussian Process $h(x; \mathbf{W}_0)$ has a zero mean function, i.e., $\mathbb{E}_{\mathbf{W}_0}[h(x; \mathbf{W}_0)] = 0$ for all $x$.
\end{assumption}

The following proposition shows that under symmetric initialization, the expected initial gradient of the logistic loss is non-zero and points exactly in the direction of the negative label with a constant factor of $1/2$.

\begin{proposition}\label{prop:init_grad}
    Suppose Assumption \ref{assum:sym_init} holds. Let $\ell(y, h) = \log(1 + e^{-y h})$ be the logistic loss. Let $g_i(t) = \frac{\partial \ell(y_i, h(x_i; \mathbf{W}_t))}{\partial h(x_i; \mathbf{W}_t)} = -y_i \sigma(-y_i h(x_i; \mathbf{W}_t))$, where $\sigma(z) = (1 + e^{-z})^{-1}$ is the sigmoid function. Then,
\begin{equation}
    \mathbb{E}_{\mathbf{W}_0}[g_i(0)] = -\frac{1}{2} y_i
\end{equation}
\end{proposition}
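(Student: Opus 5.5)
The plan is to compute $\mathbb{E}_{\mathbf{W}_0}[\sigma(-y_i h(x_i;\mathbf{W}_0))]$ directly. We will show it equals $1/2$ using only two facts: the initial logit has a symmetric distribution, and the sigmoid satisfies $\sigma(-z) = 1 - \sigma(z)$. Since $g_i(0) = -y_i\,\sigma(-y_i h(x_i;\mathbf{W}_0))$ and $y_i \in \{\pm 1\}$ is deterministic given the sample, linearity of expectation gives $\mathbb{E}_{\mathbf{W}_0}[g_i(0)] = -y_i\,\mathbb{E}_{\mathbf{W}_0}[\sigma(-y_i h(x_i;\mathbf{W}_0))]$. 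It therefore suffices to prove that this last expectation equals $1/2$.

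\textbf{Step 1: the initial logit is symmetric.} Let $Z := h(x_i;\mathbf{W}_0)$. In the NTK regime (item 1 above), $Z$ is a marginal of a Gaussian process, so $Z$ is a univariate Gaussian. By Assumption~\ref{assum:sym_init} its mean is zero, so $Z \sim \mathcal{N}(0, v)$ for some $v = K_0(x_i,x_i) \ge 0$. A centered Gaussian satisfies $Z \overset{d}{=} -Z$. Since $y_i \in \{\pm1\}$, the variable $U := -y_i Z$ also satisfies $U \overset{d}{=} Z \overset{d}{=} -U$.

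This is the one point needing care. The assumption as literally stated only gives a zero mean, and zero mean alone does not force $\mathbb{E}[\sigma(U)] = 1/2$. Symmetry in distribution is what is really needed. I would obtain it from the Gaussianity supplied by the NTK limit, or equivalently from the symmetric initialization distribution $p_{\mathbf{W}}$ mentioned in the assumption. The degenerate case $v = 0$ (so $Z = 0$ almost surely) is trivially symmetric and also directly gives $\sigma(0) = 1/2$.

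\textbf{Step 2: apply the sigmoid identity.} Because $\sigma$ is bounded in $[0,1]$, all expectations below exist. Using $U \overset{d}{=} -U$ and then $\sigma(-u) = 1 - \sigma(u)$,
\begin{equation*}
\mathbb{E}[\sigma(U)] = \mathbb{E}[\sigma(-U)] = \mathbb{E}[1 - \sigma(U)] = 1 - \mathbb{E}[\sigma(U)],
\end{equation*}
so $\mathbb{E}[\sigma(U)] = 1/2$. Substituting back gives $\mathbb{E}_{\mathbf{W}_0}[g_i(0)] = -\tfrac{1}{2} y_i$, as claimed.

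The main obstacle is therefore only Step 1: justifying distributional symmetry rather than just a zero mean. Once that is in hand, the remaining argument is a one-line symmetry computation.
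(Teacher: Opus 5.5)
Your proof is correct and follows the paper's argument: both set $U=-y_i h(x_i;\mathbf{W}_0)$, combine $U \overset{d}{=} -U$ with $\sigma(-z)=1-\sigma(z)$ to get $\mathbb{E}[\sigma(U)]=1/2$, and pull out $-y_i$ by linearity. Your point that a zero mean alone does not suffice is a fair sharpening, since the paper simply asserts distributional symmetry ``by Assumption~\ref{assum:sym_init}''; the Gaussian marginal from the NTK limit supplies symmetry cleanly, whereas symmetry of $p_{\mathbf{W}}$ gives it only under extra structure (e.g., a symmetric output layer initialized independently of the rest), so ``equivalently'' is slightly too strong.
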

\begin{proof}
    Let $Z_i = -y_i h(x_i; \mathbf{W}_0)$. By Assumption \ref{assum:sym_init}, $h(x_i; \mathbf{W}_0)$ has a distribution $p(h)$ that is symmetric about zero (i.e., $p(h) = p(-h)$). Thus, the distribution of $Z_i$, $q(z)$, is likewise symmetric about zero. 
Recall that the sigmoid function satisfies the identity $\sigma(-z) = 1 - \sigma(z)$. Thus, by linearity of expectation $\mathbb{E}[\sigma(-Z_i)] = 1 - \mathbb{E}[\sigma(Z_i)]$.
Since the distribution $q(z)$ is symmetric, the random variables $Z_i$ and $-Z_i$ are identically distributed. Thus, $\mathbb{E}[\sigma(Z_i)] = \mathbb{E}[\sigma(-Z_i)]$.
Substituting this into the previous identity yields $\mathbb{E}[\sigma(Z_i)] = 1/2$.
Therefore, 
$\mathbb{E}_{\mathbf{W}_0}[g_i(0)] = -y_i \mathbb{E}_{\mathbf{W}_0}[\sigma(-y_i h(x_i; \mathbf{W}_0))] = -y_i (1/2)$.
\end{proof}

The following assumption is necessary to formally disentangle the learning of core and spurious features. It allows us to analyze their dynamics independently by partitioning the model's parameters into those that process core features and those that process spurious features.
\begin{assumption}[Model Decomposition]\label{assum:decomp}
    The model architecture additively separates core and spurious features, $h(x; \mathbf{W}) = h_c(x^c; \mathbf{W}^c) + h_s(x^s; \mathbf{W}^s)$, where the parameter sets $\mathbf{W}^c$ and $\mathbf{W}^s$ are disjoint. 
\end{assumption}

The following proposition formally proves that an additive decomposition of the logit function, combined with disjoint parameters, directly implies an additive decomposition of the Tangent Kernel.
\begin{proposition}\label{prop:decomp_kernel}
    Suppose Assumption \ref{assum:decomp} holds. Then, the NTK is additively decomposable, i.e., 
    \begin{equation}
        K(x, x') = K_c(x^c, x'^c) + K_s(x^s, x'^s)
    \end{equation}
     where 
     \begin{equation}
         K_c(x, x') = \nabla_{\mathbf{W}^c} h_c(x^c; \mathbf{W}^c) \cdot \nabla_{\mathbf{W}^c} h_c(x'^c; \mathbf{W}^c)
     \end{equation}
     is the NTK of the core subnetwork, computed with respect to its own parameters $\mathbf{W}^c$,
and  
\begin{equation}
    K_s(x, x') = \nabla_{\mathbf{W}^s} h_s(x^s; \mathbf{W}^s) \cdot \nabla_{\mathbf{W}^s} h_s(x'^s; \mathbf{W}^s)
\end{equation}
is the NTK of the spurious subnetwork, computed with respect to its own parameters $\mathbf{W}^s$.
\end{proposition}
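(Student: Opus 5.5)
The plan is to compute the NTK directly from its definition, $K(x,x';\mathbf{W}) = \nabla_{\mathbf{W}} h(x;\mathbf{W}) \cdot \nabla_{\mathbf{W}} h(x';\mathbf{W})$, using the block structure of the parameter vector that Assumption~\ref{assum:decomp} provides. Since $\mathbf{W}^c$ and $\mathbf{W}^s$ are disjoint, I will order coordinates so that $\mathbf{W} = (\mathbf{W}^c, \mathbf{W}^s)$. The full gradient then splits into two blocks, $\nabla_{\mathbf{W}} h = (\nabla_{\mathbf{W}^c} h, \nabla_{\mathbf{W}^s} h)$.

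The first step is to evaluate each block. Using $h(x;\mathbf{W}) = h_c(x^c;\mathbf{W}^c) + h_s(x^s;\mathbf{W}^s)$, the term $h_s$ does not depend on $\mathbf{W}^c$ and the term $h_c$ does not depend on $\mathbf{W}^s$. Hence
\begin{equation*}
\nabla_{\mathbf{W}^c} h(x;\mathbf{W}) = \nabla_{\mathbf{W}^c} h_c(x^c;\mathbf{W}^c), \qquad \nabla_{\mathbf{W}^s} h(x;\mathbf{W}) = \nabla_{\mathbf{W}^s} h_s(x^s;\mathbf{W}^s).
\end{equation*}

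The second step is to expand the inner product blockwise. Because the coordinate blocks are disjoint, the dot product has no cross terms:
\begin{equation*}
K(x,x';\mathbf{W}) = \nabla_{\mathbf{W}^c} h_c(x^c;\mathbf{W}^c)\cdot\nabla_{\mathbf{W}^c} h_c(x'^c;\mathbf{W}^c) + \nabla_{\mathbf{W}^s} h_s(x^s;\mathbf{W}^s)\cdot\nabla_{\mathbf{W}^s} h_s(x'^s;\mathbf{W}^s).
\end{equation*}
The two summands are exactly $K_c(x^c,x'^c)$ and $K_s(x^s,x'^s)$ as defined in the statement. This identity holds for every $\mathbf{W}$, so it holds in particular along the training trajectory.

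The final step, and the only one that needs care, is passing to the deterministic infinite-width kernels. Applying the NTK convergence result of \citep{NTK_1} separately to each subnetwork, $K_c(\cdot,\cdot;\mathbf{W}^c)$ and $K_s(\cdot,\cdot;\mathbf{W}^s)$ each converge to deterministic, time-constant limits. Their sum then converges to the sum of the limits. By uniqueness of the limit, the limiting kernel $K$ equals $K_c + K_s$.

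I do not expect a real obstacle. The one subtlety is that the limit argument implicitly requires both subnetworks to be taken to infinite width. I will state this as part of the standing NTK setup rather than prove it.
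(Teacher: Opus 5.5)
Your proposal is correct and matches the paper's proof essentially line by line: split $\nabla_{\mathbf{W}} h$ into the $\mathbf{W}^c$ and $\mathbf{W}^s$ blocks, use that $h_s$ does not depend on $\mathbf{W}^c$ and $h_c$ does not depend on $\mathbf{W}^s$, and expand the inner product blockwise with no cross terms. Your last step, passing to the deterministic infinite-width kernels, is not in the paper's proof, which proves the identity only at fixed $\mathbf{W}$ and leaves the limit to the standing NTK setup; the extra step is harmless and consistent with how the kernel is used later.
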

\begin{proof}
    By definition, $K(x, x'; \mathbf{W}) = \nabla_\mathbf{W} h(x; \mathbf{W}) \cdot \nabla_\mathbf{W} h(x'; \mathbf{W})$. The total gradient $\nabla_\mathbf{W} h(x; \mathbf{W})$ is the concatenation $[\nabla_{\mathbf{W}^c} h(x; \mathbf{W}) \, ; \, \nabla_{\mathbf{W}^s} h(x; \mathbf{W})]$.
    By Assumption \ref{assum:decomp}, the sub-functions are functionally independent of the other's parameters: $\nabla_{\mathbf{W}^s} h_c(x^c; \mathbf{W}^c) = \mathbf{0}$ and $\nabla_{\mathbf{W}^c} h_s(x^s; \mathbf{W}^s) = \mathbf{0}$.
    Thus, $\nabla_{\mathbf{W}^c} h(x; \mathbf{W}) = \nabla_{\mathbf{W}^c} (h_c + h_s) = \nabla_{\mathbf{W}^c} h_c(x^c; \mathbf{W}^c)$.
    Similarly, $\nabla_{\mathbf{W}^s} h(x; \mathbf{W}) = \nabla_{\mathbf{W}^s} h_s(x^s; \mathbf{W}^s)$.
    The total gradient is $\nabla_\mathbf{W} h(x; \mathbf{W}) = [\nabla_{\mathbf{W}^c} h_c \, ; \, \nabla_{\mathbf{W}^s} h_s]$.
    Let $g_c(x) = \nabla_{\mathbf{W}^c} h_c(x)$ and $g_s(x) = \nabla_{\mathbf{W}^s} h_s(x)$. The inner product is therefore
    \begin{align*}
    K(x,x') 
      &= [g_c(x); g_s(x)] \cdot [g_c(x'); g_s(x')] \\
      &= g_c(x)\cdot g_c(x') + g_s(x)\cdot g_s(x') \\
      &= K_c(x,x') + K_s(x,x'),
    \end{align*}
    which completes the proof.
\end{proof}

The following assumption formally defines the setting of spurious correlation, where the dataset is imbalanced such that the spurious attribute $a_i$ is predictive of the true label $y_i$ for a fraction $\alpha>1/2$ of the data.
\begin{assumption}[Data Structure]\label{assum:data_structure}
    The dataset $\mathcal{D}$ of size $n$ is partitioned into $G_1 = \{i : y_i = a_i\}$ of size $n_1 = \alpha n$ and $G_2 = \{i : y_i = -a_i\}$ of size $n_2 = (1-\alpha) n$, with $\alpha \in (1/2, 1]$.
\end{assumption}

\subsection{Homogeneous Spiked Model}\label{sec:homo-spiked}

We begin with a simple model that assumes all samples have uniform feature strength. The strength of this setting is in its simple statements, which we find valuable for pedagogical reasons.

The following assumption is inspired by the Spiked Covariance Model from high-dimensional statistics. It provides an idealized and analytically simple model to isolate the competition between the core signal (strength $\beta^c$) and the spurious signal (strength $\beta^s$).
\begin{assumption}[Homogeneous Spiked Model]\label{assum:homo_spike}
    The NTK components are perfectly aligned with the latent data structure and have a uniform, rank-1 spiked structure:
    \begin{equation}
        K_c(x_j^c, x_i^c) = \beta^c y_j y_i, \qquad \beta^c > 0
    \end{equation}
and 
\begin{equation}
    K_s(x_j^s, x_i^s) = \beta^s a_j a_i, \qquad \beta^s > 0
\end{equation}
\end{assumption}

The following theorem provides the exact initial velocities of the expected logits. It shows that the core subnetwork always learns in the direction of the true label, while the spurious subnetwork learns in the direction of the spurious attribute, with a velocity amplified by the data imbalance $2\alpha-1$.

\begin{theorem}[Initial Velocity in Homogeneous Spiked Model]\label{thm:homo_velocity}
    Define the expected logit $\bar{h}_t(x) = \mathbb{E}_{\mathbf{W}_0}[h(x; \mathbf{W}_t)]$ and let Assumptions \ref{assum:sym_init}-\ref{assum:homo_spike} hold. Then, the initial velocities of the expected subnetworks are
    \begin{equation}
        \frac{\partial \bar{h}_t^c(x_j)}{\partial t} \Big|_{t=0} = \frac{\beta^c}{2} y_j,
    \end{equation}
    and
\begin{equation}
    \frac{\partial \bar{h}_t^s(x_j)}{\partial t} \Big|_{t=0} = \frac{\beta^s (2\alpha - 1)}{2} a_j
\end{equation}
where $\bar{h}_t^c(x) = \mathbb{E}_{\mathbf{W}_0}[h_c(x; \mathbf{W}_t)]$ and $\bar{h}_t^s(x) = \mathbb{E}_{\mathbf{W}_0}[h_s(x; \mathbf{W}_t)]$.
\end{theorem}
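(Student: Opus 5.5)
We plan to evaluate the NTK flow ODE \eqref{eq:flow_ode} at $t=0$ separately for each subnetwork and then take expectations over $\mathbf{W}_0$ using Proposition~\ref{prop:init_grad}. By Assumption~\ref{assum:decomp} and Proposition~\ref{prop:decomp_kernel}, the parameter sets $\mathbf{W}^c$ and $\mathbf{W}^s$ are disjoint. Since gradient flow on $\mathbf{W}^c$ only sees $\nabla_{\mathbf{W}^c}h = \nabla_{\mathbf{W}^c}h_c$, the core logit evolves as
\begin{equation*}
\frac{\partial h_c(x_j^c;\mathbf{W}_t^c)}{\partial t} = -\frac{1}{n}\sum_{i=1}^n K_c(x_j^c,x_i^c)\, g_i(t),
\end{equation*}
and analogously for $h_s$ with $K_s$. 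The coupling between the two subnetworks enters only through $g_i(t)$, which depends on the full logit $h = h_c + h_s$. Summing the two equations recovers \eqref{eq:flow_ode} with $K = K_c + K_s$, which is a useful consistency check.

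Next we take $\mathbb{E}_{\mathbf{W}_0}$ and interchange the time derivative with the expectation. This interchange is justified because $K_c$ and $K_s$ are deterministic in the NTK limit, and $|g_i|\le 1$ gives dominated convergence. At $t=0$, Assumption~\ref{assum:sym_init} applies to the full logit $h(x_i;\mathbf{W}_0)$, so Proposition~\ref{prop:init_grad} yields $\mathbb{E}[g_i(0)] = -y_i/2$. The point to verify here is that symmetry is needed only for the full logit $h$ (the argument of $g_i$), not for each subnetwork individually, so Proposition~\ref{prop:init_grad} applies directly.

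We then substitute the homogeneous spikes from Assumption~\ref{assum:homo_spike}. For the core subnetwork,
\begin{equation*}
\frac{\partial \bar h^c_t(x_j)}{\partial t}\Big|_{t=0} = -\frac{1}{n}\sum_i \beta^c y_j y_i\left(-\frac{y_i}{2}\right) = \frac{\beta^c}{2}y_j,
\end{equation*}
using $y_i^2=1$. For the spurious subnetwork we obtain $\frac{\beta^s}{2n}a_j\sum_i a_i y_i$. By Assumption~\ref{assum:data_structure}, $a_iy_i = 1$ on $G_1$ and $a_iy_i=-1$ on $G_2$, so $\sum_i a_iy_i = n_1 - n_2 = (2\alpha-1)n$. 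This gives $\frac{\beta^s(2\alpha-1)}{2}a_j$.

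The main obstacle is rigor rather than algebra. Two things must be checked: that the expectation commutes with the time derivative at $t=0$, and that Proposition~\ref{prop:init_grad} is applied to the correct random variable. Once the decomposed ODEs are established, the remaining computation is a direct counting argument.
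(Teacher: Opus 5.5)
Your proposal is correct and follows essentially the same route as the paper: take the expectation of the flow ODE at $t=0$, use Proposition~\ref{prop:init_grad} to replace $\mathbb{E}_{\mathbf{W}_0}[g_i(0)]$ by $-y_i/2$, and then substitute the rank-one spikes together with $\sum_i a_i y_i = n(2\alpha-1)$. The one difference is that you derive the per-subnetwork ODEs directly from gradient flow on the disjoint parameter blocks, which makes explicit a step the paper leaves implicit when it splits the total velocity $\frac{1}{2n}\sum_i K(x_j,x_i)y_i$ into its $K_c$ and $K_s$ parts.
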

\begin{proof}
Let $\bar{h}_{t,j} = \bar{h}_t(x_j)$. The dynamics of $\bar{h}_{t,j}$ are $\frac{\partial \bar{h}_{t,j}}{\partial t} = \mathbb{E}_{\mathbf{W}_0} \left[ \frac{\partial h(x_j; \mathbf{W}_t)}{\partial t} \right]$. At $t=0$
\begin{equation}
   \frac{\partial \bar{h}_{t,j}}{\partial t} \Big|_{t=0} = \mathbb{E}_{\mathbf{W}_0} \left[ -\frac{1}{n} \sum_{i=1}^n K(x_j, x_i) g_i(0) \right] 
\end{equation}
By linearity of expectation and the fact that $K$ is deterministic
\begin{equation}
    \frac{\partial \bar{h}_{t,j}}{\partial t} \Big|_{t=0} = -\frac{1}{n} \sum_{i=1}^n K(x_j, x_i) \mathbb{E}_{\mathbf{W}_0}[g_i(0)]
\end{equation}
Leveraging the result of Proposition \ref{prop:init_grad}
\begin{equation}
\begin{aligned}
\left.\frac{\partial \bar{h}_{t,j}}{\partial t}\right|_{t=0}
&= -\frac{1}{n} \sum_{i=1}^n K(x_j, x_i)\left(-\frac{y_i}{2}\right) \\
&= \frac{1}{2n} \sum_{i=1}^n K(x_j, x_i) y_i.
\end{aligned}
\end{equation}
Using Proposition \ref{prop:decomp_kernel}, we analyze the subnetworks. For the core subnetwork
\begin{equation}
\begin{aligned}
\frac{\partial \bar{h}_t^c(x_j)}{\partial t} \Big|_{t=0}
    &= \frac{1}{2n} \sum_{i=1}^n K_c(x_j^c, x_i^c) y_i \\
    &= \frac{1}{2n} \sum_{i=1}^n (\beta^c y_j y_i) y_i \\
    &= \frac{\beta^c y_j}{2n} \sum_{i=1}^n y_i^2 \\
    &= \frac{\beta^c}{2} y_j,
\end{aligned}
\end{equation}
where we used Assumption \ref{assum:homo_spike} and $y_i^2 = 1$. Similarly, for the spurious subnetwork
\begin{equation}
\begin{aligned}
\frac{\partial \bar{h}_t^s(x_j)}{\partial t} \Big|_{t=0}
    &= \frac{1}{2n} \sum_{i=1}^n K_s(x_j^s, x_i^s) y_i \\
    &= \frac{1}{2n} \sum_{i=1}^n (\beta^s a_j a_i) y_i \\
    &= \frac{\beta^s a_j}{2n} \sum_{i=1}^n a_i y_i .
\end{aligned}
\end{equation}
By Assumption \ref{assum:data_structure}, $\sum_{i=1}^n a_i y_i = n_1 - n_2 = n(2\alpha - 1)$. This yields $\frac{\beta^s a_j}{2n} (n(2\alpha-1)) = \frac{\beta^s (2\alpha - 1)}{2} a_j$.
\end{proof}

The following corollary provides the precise condition for simplicity bias in the homogeneous model. It is a competition between the core feature strength $\beta^c$ and the spurious feature strength $\beta^s$ modulated by the data imbalance $2\alpha-1$.
\begin{corollary}\label{cor:homo_bias_condition}
    Let the expected initial growth rate of the true core margin be $R_c := \frac{\partial}{\partial t} \mathbb{E}_{\mathbf{W}_0}[y_j h_c(x_j; \mathbf{W}_t)] \Big|_{t=0}$.
Let the expected initial growth rate of the true spurious margin be $R_s := \frac{\partial}{\partial t} \mathbb{E}_{\mathbf{W}_0}[a_j h_s(x_j; \mathbf{W}_t)] \Big|_{t=0}$.
Then $R_c = \beta^c / 2$ and $R_s = \frac{\beta^s (2\alpha - 1)}{2}$.
The model exhibits simplicity bias ($R_s > R_c$) if and only if $\beta^s (2\alpha - 1) > \beta^c$.
\end{corollary}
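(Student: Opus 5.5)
The statement to prove is Corollary~\ref{cor:homo_bias_condition}; it follows almost directly from Theorem~\ref{thm:homo_velocity}, so the plan is to convert the logit velocities given there into margin velocities. First, I will use linearity of expectation and of the time derivative to move the fixed, deterministic sign factors $y_j$ and $a_j$ outside. The labels and spurious attributes do not depend on $\mathbf{W}_0$ or on $t$, so
\begin{equation*}
R_c = y_j \,\frac{\partial \bar{h}_t^c(x_j)}{\partial t}\Big|_{t=0}, \qquad R_s = a_j \,\frac{\partial \bar{h}_t^s(x_j)}{\partial t}\Big|_{t=0}.
\end{equation*}
Interchanging $\frac{\partial}{\partial t}$ with $\mathbb{E}_{\mathbf{W}_0}$ is exactly what the proof of Theorem~\ref{thm:homo_velocity} already did. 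I will take that interchange as given here. It is justified because, in the NTK regime, the flow \eqref{eq:flow_ode} has bounded right-hand side: $|g_i|\le 1$ and $K$ is deterministic and bounded. Dominated convergence therefore applies.

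Second, I will substitute the velocities from Theorem~\ref{thm:homo_velocity}:
\begin{equation*}
R_c = y_j\cdot\frac{\beta^c}{2}y_j = \frac{\beta^c}{2}, \qquad R_s = a_j\cdot\frac{\beta^s(2\alpha-1)}{2}a_j = \frac{\beta^s(2\alpha-1)}{2}.
\end{equation*}
Both simplifications use $y_j^2 = a_j^2 = 1$. The resulting rates are independent of $j$, so they hold uniformly over every sample, whether it lies in the majority or the minority group.

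Third, the equivalence: $R_s > R_c$ holds exactly when $\frac{\beta^s(2\alpha-1)}{2} > \frac{\beta^c}{2}$. Multiplying by the positive constant $2$ gives $\beta^s(2\alpha-1) > \beta^c$, and since this is a strict inequality multiplied by a positive constant, the implication runs in both directions. I should also note that Assumption~\ref{assum:data_structure} gives $2\alpha-1\in(0,1]$. This does not affect the equivalence, but it does mean $R_s>0$, so both margins grow initially.

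I do not expect a real obstacle here. The only point needing care is the derivative/expectation interchange inside the definitions of $R_c$ and $R_s$, and I will handle it by citing the same justification used in Theorem~\ref{thm:homo_velocity}. Everything else is algebra.
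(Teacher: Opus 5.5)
Your proposal is correct and follows the paper's route: the corollary is read off from Theorem~\ref{thm:homo_velocity} by multiplying the subnetwork velocities by $y_j$ and $a_j$, using $y_j^2=a_j^2=1$, and then scaling the strict inequality by $2$. Your dominated-convergence justification for swapping $\partial_t$ and $\mathbb{E}_{\mathbf{W}_0}$ is a minor addition; the paper uses that interchange in the proof of Theorem~\ref{thm:homo_velocity} without comment.
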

The following theorem corresponds to Theorem 1 in the main paper. We demonstrate the direct consequence of the velocity imbalance. At the start of training, the expected loss on the majority group immediately decreases, while the expected loss on the minority group immediately increases, demonstrating the simplicity bias.
\begin{theorem}[Initial Loss Divergence in Homogeneous Spiked Model]\label{thm:homo_loss_divergence}
    Let the simplicity bias condition $\beta^s (2\alpha - 1) > \beta^c$ from Corollary \ref{cor:homo_bias_condition} hold. Let $\bar{m}_t(x_j) = \mathbb{E}_{\mathbf{W}_0}[y_j h(x_j; \mathbf{W}_t)]$ be the expected margin.
Then, there exists a time $T > 0$ such that for all $t \in (0, T)$:
\begin{enumerate}
    \item For $j \in G_1$ (majority group, $y_j = a_j$), the expected margin $\bar{m}_t(x_j)$ is positive, and the loss $\ell(\bar{m}_t(x_j))$ is less than $\log(2)$.
    \item For $j \in G_2$ (minority group, $y_j = -a_j$), the expected margin $\bar{m}_t(x_j)$ is negative, and the loss $\ell(\bar{m}_t(x_j))$ is greater than $\log(2)$.
\end{enumerate}
\end{theorem}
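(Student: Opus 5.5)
The argument is a first-order Taylor expansion of the expected margin at $t=0$, driven by the initial velocities in Theorem~\ref{thm:homo_velocity}. The starting value is zero: by Assumption~\ref{assum:sym_init}, $\bar{m}_0(x_j)=y_j\,\mathbb{E}_{\mathbf{W}_0}[h(x_j;\mathbf{W}_0)]=0$ for every $j$. By Assumption~\ref{assum:decomp}, $\bar{m}_t(x_j)=y_j\bar{h}^c_t(x_j)+y_j\bar{h}^s_t(x_j)$, and Theorem~\ref{thm:homo_velocity} gives
\begin{equation*}
\frac{\partial \bar{m}_t(x_j)}{\partial t}\Big|_{t=0}=\frac{\beta^c}{2}y_j^2+\frac{\beta^s(2\alpha-1)}{2}a_jy_j=\frac{\beta^c+\beta^s(2\alpha-1)\,a_jy_j}{2}.
\end{equation*}
I then split into the two groups of Assumption~\ref{assum:data_structure}.
\begin{itemize}
\item For $j\in G_1$, $a_jy_j=1$, so the derivative equals $(\beta^c+\beta^s(2\alpha-1))/2>0$. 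This holds unconditionally.
\item For $j\in G_2$, $a_jy_j=-1$, so the derivative equals $(\beta^c-\beta^s(2\alpha-1))/2$. This is strictly negative exactly under the simplicity bias condition of Corollary~\ref{cor:homo_bias_condition}.
\end{itemize}

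Next comes the step from derivative to sign. I need $t\mapsto\bar{m}_t(x_j)$ to be differentiable at $0$; continuous differentiability near $0$ is enough. Given a derivative $v_j$ at $0$ with $\bar{m}_0=0$, we have $\bar{m}_t=v_jt+o(t)$. So there is $T_j>0$ such that $\bar{m}_t$ has the sign of $v_j$ for all $t\in(0,T_j)$. Because the dataset is finite, I set $T=\min_j T_j>0$, which works for all samples at once. In fact only two distinct velocity values occur, so the sample-dependence in the $o(t)$ term is harmless.

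For the loss claim I use $\ell(m)=\log(1+e^{-m})$, which is strictly decreasing with $\ell(0)=\log 2$. Hence $\bar{m}_t>0$ gives $\ell<\log 2$, and $\bar{m}_t<0$ gives $\ell>\log 2$.

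The main obstacle is justifying regularity of the expected dynamics and the exchange of expectation and time derivative. Theorem~\ref{thm:homo_velocity} already uses this exchange at $t=0$, but I also need $\partial_t\bar{m}_t$ to exist and be continuous in a neighbourhood of $0$, so that the first-order expansion is legitimate.

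I would argue this from the ODE \eqref{eq:flow_ode}. Each $|g_i|\le 1$ and $K$ is deterministic and bounded. So $|\partial_t h(x_j;\mathbf{W}_t)|\le \frac1n\sum_i|K(x_j,x_i)|$ uniformly in $t$ and in the initialization. Dominated convergence then allows differentiating under $\mathbb{E}_{\mathbf{W}_0}$ for all $t$.

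For continuity of the derivative, note that $g_i(t)$ is Lipschitz in $h$ and $h$ is Lipschitz in $t$ with a uniform constant. Therefore $\mathbb{E}[g_i(t)]\to\mathbb{E}[g_i(0)]=-y_i/2$ as $t\to0$, and this transfers to $\partial_t\bar{m}_t$. That is enough to run the sign argument above.
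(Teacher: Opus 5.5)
Your proposal is correct and matches the paper's proof: zero initial expected margin, the initial margin velocity $\tfrac{1}{2}\big(\beta^c+\beta^s(2\alpha-1)a_jy_j\big)$ from Theorem~\ref{thm:homo_velocity}, a sign argument on a short interval, a minimum of the resulting times, and strict monotonicity of $\ell$ with $\ell(0)=\log 2$. Your regularity argument (differentiating under $\mathbb{E}_{\mathbf{W}_0}$ using the uniformly bounded right-hand side of the flow ODE) fills in a step the paper leaves implicit, though differentiability at $t=0$ already suffices for the sign argument, so you do not need continuity of the derivative near $0$.
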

\begin{proof}
By Assumption \ref{assum:sym_init}, the initial expected logit is $\bar{h}_0(x_j) = 0$, so the initial expected margin is $\bar{m}_0(x_j) = 0$. We compute the initial time-derivative of the expected margin
$$\frac{\partial \bar{m}_t(x_j)}{\partial t} \Big|_{t=0} = y_j \left( \frac{\partial \bar{h}_t^c(x_j)}{\partial t} \Big|_{t=0} + \frac{\partial \bar{h}_t^s(x_j)}{\partial t} \Big|_{t=0} \right)$$
Using Theorem \ref{thm:homo_velocity} and $y_j^2=1$
\begin{equation}
\begin{aligned}
\frac{\partial \bar{m}_t(x_j)}{\partial t} \Big|_{t=0}
    &= y_j \left( \frac{\beta^c}{2} y_j + \frac{\beta^s (2\alpha - 1)}{2} a_j \right) \\
    &= \frac{\beta^c}{2} y_j^2 + \frac{\beta^s (2\alpha - 1)}{2} (y_j a_j) \\
    &= \frac{\beta^c}{2} + \frac{\beta^s (2\alpha - 1)}{2} (y_j a_j).
\end{aligned}
\end{equation}
Note that the loss function $\ell(\bar{m}) = \log(1 + e^{-\bar{m}})$ is strictly monotonically decreasing in $\bar{m}$.
\begin{enumerate}
    \item For $j \in G_1$, $y_j a_j = 1$. The initial velocity is $R_c + R_s = \frac{\beta^c}{2} + \frac{\beta^s(2\alpha-1)}{2} > 0$. Since $\bar{m}_0(x_j) = 0$ and $\frac{\partial \bar{m}_t(x_j)}{\partial t} \Big|_{t=0} > 0$, there exists $T_1 > 0$ such that $\bar{m}_t(x_j) > 0$ for $t \in (0, T_1)$. Thus, $\ell(\bar{m}_t(x_j)) < \ell(0) = \log(2)$.
    \item For $j \in G_2$, $y_j a_j = -1$. The initial velocity is $R_c - R_s = \frac{\beta^c}{2} - \frac{\beta^s(2\alpha-1)}{2} < 0$ by the simplicity bias condition. Since $\bar{m}_0(x_j) = 0$ and $\frac{\partial \bar{m}_t(x_j)}{\partial t} \Big|_{t=0} < 0$, there exists $T_2 > 0$ such that $\bar{m}_t(x_j) < 0$ for $t \in (0, T_2)$. Thus, $\ell(\bar{m}_t(x_j)) > \ell(0) = \log(2)$.
\end{enumerate}
Let $T = \min(T_1, T_2)$. For $t \in (0, T)$, both statements hold.
\end{proof}

The following theorem characterizes the initial curvature of the learning path. It shows how the model begins to decelerate, or saturate, as a function of the feature strengths. We need to define some notation and make an approximation.

Let $\mathbf{h}_t = [h(x_1; \mathbf{W}_t), ..., h(x_n; \mathbf{W}_t)]^\top$ be the vector of logits, and let $\mathbf{g}_t = [g_1(t), ..., g_n(t)]^\top$. The flow ODE in vector form is $\dot{\mathbf{h}}_t = -\frac{1}{n} K \mathbf{g}_t$. Let $\bar{\mathbf{h}}_t = \mathbb{E}[\mathbf{h}_t]$.
    The expected dynamics are $\dot{\bar{\mathbf{h}}}_t = -\frac{1}{n} K \mathbb{E}[\mathbf{g}_t]$.
    From Theorem \ref{thm:homo_velocity}, we have $\dot{\bar{\mathbf{h}}}_0 = \frac{1}{2n} K \mathbf{y}$.
    The expected acceleration is $\ddot{\bar{\mathbf{h}}}_t = \frac{\partial}{\partial t} \dot{\bar{\mathbf{h}}}_t = -\frac{1}{n} K \mathbb{E}[\dot{\mathbf{g}}_t]$.
    Element-wise, $\dot{g}_i(t) = \frac{\partial g_i(t)}{\partial h_i(t)} \dot{h}_i(t) = \sigma'(-y_i h_i(t)) \dot{h}_i(t)$.
    We will approximate by linearization $\mathbb{E}[\sigma'(-y_i h_i(0))] \approx \sigma'(-y_i \bar{h}_i(0)) = \sigma'(0) = 1/4$, leading to $\mathbb{E}[\dot{\mathbf{g}}_0] \approx \frac{1}{4} \dot{\bar{\mathbf{h}}}_0$.
    
\begin{theorem}[Initial Acceleration in Homogeneous Spiked Model]\label{thm:homo_acceleration}
    Let $\bar{a}_{0,j} := \frac{\partial^2 \bar{h}_t(x_j)}{\partial t^2} \Big|_{t=0}$ denote the initial acceleration of the expected logit for sample $j$. Under Assumptions \ref{assum:sym_init}-\ref{assum:homo_spike}, the total acceleration is
    \begin{align}
    \bar{a}_{0,j} = -\frac{1}{8} \Big[ \big( (\beta^c)^2 + \beta^c \beta^s (2\alpha - 1)^2 \big) y_j + (2\alpha - 1) \big( \beta^c \beta^s + (\beta^s)^2 \big) a_j \Big]
    \end{align}
    The subnetwork accelerations are
    \begin{align}
        \bar{a}_{0,j}^c &= -\frac{1}{8} \left[ (\beta^c)^2 + \beta^c \beta^s (2\alpha - 1)^2 \right] y_j \\
        \bar{a}_{0,j}^s &= -\frac{1}{8} (2\alpha - 1) \left[ \beta^c \beta^s + (\beta^s)^2 \right] a_j
    \end{align}
\end{theorem}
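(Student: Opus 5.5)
The plan is to differentiate the expected flow ODE once more in time and evaluate at $t=0$, using the linearization stated just before the theorem. Starting from the vector form $\dot{\bar{\mathbf{h}}}_t = -\frac{1}{n} K\,\mathbb{E}[\mathbf{g}_t]$, with $K$ deterministic and constant in time in the NTK regime, I get $\ddot{\bar{\mathbf{h}}}_t = -\frac{1}{n} K\,\mathbb{E}[\dot{\mathbf{g}}_t]$.

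Next I compute $\dot g_i$. Since $g_i = -y_i\sigma(-y_i h_i)$, the chain rule gives $\partial g_i/\partial h_i = (-y_i)\,\sigma'(-y_i h_i)\,(-y_i) = \sigma'(-y_i h_i)$, using $y_i^2=1$. Hence $\dot g_i = \sigma'(-y_i h_i)\,\dot h_i$.

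At $t=0$ I invoke the stated approximation: replace $\sigma'(-y_i h_i(0))$ by its value at the expected logit, $\sigma'(0)=1/4$. This treats it as deterministic, so it factors out of the expectation, and gives $\mathbb{E}[\dot{\mathbf{g}}_0]\approx \frac14\dot{\bar{\mathbf{h}}}_0$. Therefore
\[
\ddot{\bar{\mathbf{h}}}_0 \approx -\tfrac{1}{4n} K\,\dot{\bar{\mathbf{h}}}_0 .
\]
From Theorem~\ref{thm:homo_velocity}, the initial velocity is $\dot{\bar h}_{0,i} = \frac{\beta^c}{2}y_i + \frac{\beta^s(2\alpha-1)}{2}a_i$.

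To obtain the subnetwork accelerations, I split $K=K_c+K_s$ via Proposition~\ref{prop:decomp_kernel}. The core part of the acceleration is $-\frac{1}{4n}K_c\dot{\bar{\mathbf{h}}}_0$ and the spurious part is $-\frac{1}{4n}K_s\dot{\bar{\mathbf{h}}}_0$. Each subnetwork's logits evolve only through its own kernel, exactly as in the proof of Theorem~\ref{thm:homo_velocity}. I then substitute the rank-one forms from Assumption~\ref{assum:homo_spike} and evaluate the sums with Assumption~\ref{assum:data_structure}, using
\[
\tfrac1n\textstyle\sum_i y_i^2=\tfrac1n\sum_i a_i^2=1,\qquad \tfrac1n\sum_i a_iy_i=2\alpha-1 .
\]
For the core part, $\frac1n\sum_i\beta^c y_jy_i\,\dot{\bar h}_{0,i} = \beta^c y_j\bigl[\frac{\beta^c}{2}+\frac{\beta^s(2\alpha-1)^2}{2}\bigr]$. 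Multiplying by $-\frac14$ gives $\bar a^c_{0,j}$.

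For the spurious part, $\frac1n\sum_i\beta^s a_ja_i\,\dot{\bar h}_{0,i} = \beta^s a_j\bigl[\frac{\beta^c(2\alpha-1)}{2}+\frac{\beta^s(2\alpha-1)}{2}\bigr]$. Multiplying by $-\frac14$ gives $\bar a^s_{0,j}$. Summing the two parts yields the total $\bar a_{0,j}$.

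The algebra is routine. The one delicate point is the step $\mathbb{E}[\sigma'(-y_ih_i(0))\,\dot h_i(0)]\approx\frac14\,\mathbb{E}[\dot h_i(0)]$. Two things are being discarded here: the curvature of $\sigma'$ (a Jensen-type gap, since $h_i(0)$ has nonzero variance) and any correlation between $\sigma'(-y_ih_i(0))$ and $\dot h_i(0)$.

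I will state explicitly that the result holds under this linearization, i.e.\ to leading order in small initial logit variance, rather than claim exactness. As a partial justification, I will note that $\sigma'$ is even. So under the symmetric initialization of Assumption~\ref{assum:sym_init}, the first-order error term vanishes and the error is second order in the spread of $h(x_i;\mathbf{W}_0)$.
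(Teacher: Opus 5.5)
Your proposal is correct and follows the paper's argument. The paper also differentiates the expected flow ODE and uses the linearization $\mathbb{E}[\dot{\mathbf g}_0]\approx\frac14\dot{\bar{\mathbf h}}_0$ to get $\ddot{\bar{\mathbf h}}_0\approx-\frac{1}{4n}K\dot{\bar{\mathbf h}}_0$, then evaluates the rank-one sums; it computes $K^2\mathbf y$ for the total first and isolates the $K_c,K_s$ parts afterwards, while you compute the two subnetwork pieces and add them, which is the same calculation. Your explicit caveat that the identity holds only under this linearization is more careful than the paper, which states an equality but proves it only with $\approx$.
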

\begin{proof}
    
    Adopting the approximation described in the preceding discussion,
    \begin{equation}
        \ddot{\bar{\mathbf{h}}}_0 \approx -\frac{1}{4n} K \dot{\bar{\mathbf{h}}}_0
    \end{equation}
    Substituting $\dot{\bar{\mathbf{h}}}_0 = \frac{1}{2n} K \mathbf{y}$ we have
    \begin{equation}
        \ddot{\bar{\mathbf{h}}}_0 \approx -\frac{1}{8n^2} K^2 \mathbf{y}
    \end{equation}
    We compute $K^2 \mathbf{y}$ using $K = \beta^c \mathbf{y}\mathbf{y}^\top + \beta^s \mathbf{a}\mathbf{a}^\top$ and the inner products $\mathbf{y}^\top\mathbf{y}=n$, $\mathbf{a}^\top\mathbf{a}=n$, and $\mathbf{y}^\top\mathbf{a}=n(2\alpha-1)$. We have
    $K\mathbf{y} = \beta^c \mathbf{y}(\mathbf{y}^\top \mathbf{y}) + \beta^s \mathbf{a}(\mathbf{a}^\top \mathbf{y}) = n\beta^c \mathbf{y} + n\beta^s(2\alpha-1)\mathbf{a}$. Also,
    $K\mathbf{a} = \beta^c \mathbf{y}(\mathbf{y}^\top \mathbf{a}) + \beta^s \mathbf{a}(\mathbf{a}^\top \mathbf{a}) = n\beta^c(2\alpha-1)\mathbf{y} + n\beta^s\mathbf{a}$.
    \begin{align*}
    K^2\mathbf{y}
    &= K\bigl(n\beta^c \mathbf{y} + n\beta^s(2\alpha-1)\mathbf{a}\bigr) \\
    &= n\beta^c(K\mathbf{y}) + n\beta^s(2\alpha-1)(K\mathbf{a}) \\
    &= n\beta^c \bigl[n\beta^c \mathbf{y} + n\beta^s(2\alpha-1)\mathbf{a}\bigr] + n\beta^s(2\alpha-1)\bigl[n\beta^c(2\alpha-1)\mathbf{y} + n\beta^s\mathbf{a}\bigr] \\
    &= \bigl[n^2(\beta^c)^2 + n^2\beta^c\beta^s(2\alpha-1)^2\bigr]\mathbf{y} + \bigl[n^2\beta^c\beta^s(2\alpha-1) + n^2(\beta^s)^2(2\alpha-1)\bigr]\mathbf{a}.
    \end{align*}
    Dividing by $-8n^2$ gives the total acceleration $\bar{a}_{0,j}$. The subnetwork accelerations $\bar{a}_{0,j}^c = -\frac{1}{4n} (K_c \dot{\bar{\mathbf{h}}}_0)_j$ and $\bar{a}_{0,j}^s = -\frac{1}{4n} (K_s \dot{\bar{\mathbf{h}}}_0)_j$ follow by isolating the respective kernel terms.
\end{proof}

The following corollary breaks down the acceleration by group. The majority group, with aligned signals, always decelerates. The minority group, with conflicting signals, can accelerate if the spurious signal is strong. When both decelerate, the majority group decelerates faster.
\begin{corollary}[Group-Specific Acceleration Dynamics]\label{cor:homo_accel_dynamics}
    Let $A_{maj}$ and $A_{min}$ be the scalar acceleration of the margin ($A_j = y_j \bar{a}_{0,j}$) for the majority ($j \in G_1$) and minority ($j \in G_2$) groups, respectively.
    \begin{enumerate}
        \item \textbf{Universal Majority Deceleration:} The majority group always decelerates (saturates):
        \begin{align}
        A_{maj}
        &= -\frac{1}{8} \Big[
            (\beta^c)^2
            + \beta^c \beta^s (2\alpha - 1)^2 \notag
            + (2\alpha - 1)\big( \beta^c \beta^s + (\beta^s)^2 \big)
        \Big] < 0
        \end{align}
        \item \textbf{Minority Acceleration Regime:} The minority group accelerates ($A_{min} > 0$) if the spurious feature is sufficiently strong:
        \begin{equation}
            (2\alpha - 1) (\beta^s)^2 + (2\alpha - 1)\beta^c\beta^s > (\beta^c)^2 + \beta^c\beta^s(2\alpha - 1)^2
        \end{equation}
        \item \textbf{Relative Saturation Strength:} In the regime where $A_{min} < 0$ (both groups decelerate), the majority group decelerates with a strictly greater magnitude:
        \begin{equation}
            |A_{maj}| > |A_{min}|
        \end{equation}
    \end{enumerate}
\end{corollary}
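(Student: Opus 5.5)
The plan is to read off the margin accelerations from Theorem~\ref{thm:homo_acceleration} by using the group identities, and then compare the resulting expressions. Write $\gamma := 2\alpha-1 \in (0,1]$, so that by Assumption~\ref{assum:data_structure} we have $\gamma>0$ (with $\gamma=1$ allowed when $\alpha=1$). Since $A_j = y_j \bar{a}_{0,j}$ and $y_j^2=1$, Theorem~\ref{thm:homo_acceleration} gives
\begin{equation*}
A_j = -\frac{1}{8}\Big[ (\beta^c)^2 + \beta^c\beta^s\gamma^2 + \gamma\big(\beta^c\beta^s + (\beta^s)^2\big)(y_j a_j)\Big].
\end{equation*}
Set $P := (\beta^c)^2 + \beta^c\beta^s\gamma^2 > 0$ and $Q := \gamma\big(\beta^c\beta^s + (\beta^s)^2\big) > 0$, where positivity follows from $\beta^c,\beta^s>0$ and $\gamma>0$. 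For $j\in G_1$ we have $y_ja_j=1$, hence $A_{maj} = -\tfrac18(P+Q)$. For $j\in G_2$ we have $y_ja_j=-1$, hence $A_{min} = -\tfrac18(P-Q)$.

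With these two formulas the three items follow directly.

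\textbf{Item 1.} $A_{maj} = -\tfrac18(P+Q) < 0$ because $P+Q>0$. Expanding $P+Q$ reproduces exactly the displayed expression in the statement.

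\textbf{Item 2.} $A_{min} > 0$ holds if and only if $Q > P$. Written out, this is $\gamma(\beta^s)^2 + \gamma\beta^c\beta^s > (\beta^c)^2 + \beta^c\beta^s\gamma^2$, which is precisely the stated condition. I will present it as an equivalence, which in particular gives the stated ``if'' direction.

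\textbf{Item 3.} In the regime $A_{min}<0$ we have $P>Q$, so $|A_{min}| = \tfrac18(P-Q)$ while $|A_{maj}| = \tfrac18(P+Q)$. Their difference is $|A_{maj}|-|A_{min}| = \tfrac14 Q > 0$. This strict inequality needs $Q>0$, which holds because $\gamma>0$ and $\beta^s>0$.

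There is no real obstacle here; the argument is bookkeeping. The only points needing care are the sign conventions (the factor $y_j$ in $A_j$ and the identity $y_j a_j=\pm1$) and remembering that Theorem~\ref{thm:homo_acceleration} rests on the linearization $\mathbb{E}[\sigma'(\cdot)]\approx 1/4$. I will therefore state that the corollary inherits this approximation, so its conclusions hold in the same approximate sense as that theorem.
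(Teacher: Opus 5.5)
Your proposal is correct and matches the intended argument. The paper gives no separate proof; it treats the corollary as a direct read-off from Theorem~\ref{thm:homo_acceleration} via $y_j a_j = \pm 1$, and your gap $|A_{maj}|-|A_{min}| = \tfrac14 Q$ is the homogeneous analogue of the $\tfrac{1}{4n^2}\beta^s_k C_{spur}$ gap stated in Corollary~\ref{cor:global_accel_dynamics}. Your caveat that the conclusions inherit the $\mathbb{E}[\sigma'(\cdot)]\approx 1/4$ linearization is appropriate, since the theorem's proof relies on it even though the statement is written as an equality.
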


\subsection{Heterogeneous Spiked Model}\label{sec:het-spiked}

We now relax the assumption of uniform feature strength, allowing each sample to have its own strength. This model is more realistic as it allows for ``hard'' (low $\beta$) and ``easy'' (high $\beta$) samples within each group, but retains a rank-1 structure that permits exact closed-form solutions.

\begin{assumption}[Heterogeneous Spiked Model]\label{assum:het_spike}
    The NTK components decompose into rank-1 matrices based on sample-specific strengths:
    \begin{equation}
    \begin{aligned}
    K_c(x_j, x_i) &= (\beta^c_i \beta^c_j) y_i y_j \\
    K_s(x_j, x_i) &= (\beta^s_i \beta^s_j) a_i a_j,
    \end{aligned}
    \end{equation}
    where $\beta^c_i > 0$ and $\beta^s_i > 0$ are the core and spurious strengths of sample $i$. We assume these are bounded, $b^c \le \beta^c_i \le B^c$ and $b^s \le \beta^s_i \le B^s$.
\end{assumption}
\begin{definition}[Global Dataset Statistics]\label{def:global_stats}
    We define the following scalar summaries of the dataset's feature structure:
    \begin{align}
        S_{cc} &= \sum_{i=1}^n \beta^c_i &  \\ 
        S_{sc} &= \sum_{i=1}^n \beta^s_i a_i y_i = \sum_{i \in G_1} \beta^s_i - \sum_{i \in G_2} \beta^s_i &  \\
        Q_{cc} &= \sum_{i=1}^n (\beta^c_i)^2  \\
        Q_{ss} &= \sum_{i=1}^n (\beta^s_i)^2 \\
        Q_{cs} &= \sum_{i=1}^n \beta^c_i \beta^s_i a_i y_i = \sum_{i \in G_1} \beta^c_i \beta^s_i - \sum_{i \in G_2} \beta^c_i \beta^s_i
    \end{align}
    Further, we define the global energy constants
    \begin{equation}
    \begin{aligned}
    C_{core} &= S_{cc} Q_{cc} + S_{sc} Q_{cs}, \\
    C_{spur} &= S_{cc} Q_{cs} + S_{sc} Q_{ss}.
    \end{aligned}
    \end{equation}
\end{definition}
These scalar quantities summarize the aggregate ``global'' properties of the dataset under the heterogeneous Spiked model. Note that $S_{sc}$ and $Q_{cs}$ are positive by Assumption \ref{assum:data_structure}.

The following theorem shows that the initial velocity for a sample is a product of its {local} feature strength ($\beta_k$) and the {global} alignment of the entire dataset ($S_{cc}$ or $S_{sc}$).

\begin{theorem}[Initial Velocities in Heterogeneous Spiked Model]\label{thm:global_velocity}
    Let $\bar{h}_t(x_k)$ be the expected logit for sample $k$. Under Assumptions \ref{assum:sym_init}-\ref{assum:data_structure} and \ref{assum:het_spike}, the initial velocities of the subnetwork logits are
    \begin{equation}
    \begin{aligned}
    \left.\frac{\partial \bar{h}_t^c(x_k)}{\partial t}\right|_{t=0}
    &= \frac{S_{cc}}{2n} \beta^c_k y_k, \\
    \left.\frac{\partial \bar{h}_t^s(x_k)}{\partial t}\right|_{t=0}
    &= \frac{S_{sc}}{2n} \beta^s_k a_k.
    \end{aligned}
    \end{equation}
\end{theorem}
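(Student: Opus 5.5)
The argument follows the proof of Theorem~\ref{thm:homo_velocity} line by line, with Assumption~\ref{assum:homo_spike} replaced by Assumption~\ref{assum:het_spike}. We start from the function-space ODE \eqref{eq:flow_ode} evaluated at sample $k$ and take the expectation over $\mathbf{W}_0$. Because the limiting NTK is deterministic, the expectation passes through the kernel and lands only on the loss derivatives. This gives
\begin{equation*}
\left.\frac{\partial \bar{h}_t(x_k)}{\partial t}\right|_{t=0} = -\frac{1}{n}\sum_{i=1}^n K(x_k,x_i)\,\mathbb{E}_{\mathbf{W}_0}[g_i(0)] = \frac{1}{2n}\sum_{i=1}^n K(x_k,x_i)\,y_i,
\end{equation*}
where the second equality uses Proposition~\ref{prop:init_grad}, i.e.\ $\mathbb{E}_{\mathbf{W}_0}[g_i(0)] = -y_i/2$ under Assumption~\ref{assum:sym_init}.

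Next we split into the two subnetworks. By Assumption~\ref{assum:decomp}, the parameters $\mathbf{W}^c$ and $\mathbf{W}^s$ are disjoint, so $h_c$ evolves only through $K_c$ and $h_s$ only through $K_s$. By Proposition~\ref{prop:decomp_kernel} this yields the same identity separately for each part:
\begin{equation*}
\partial_t \bar{h}^c_t(x_k)\big|_{t=0} = \frac{1}{2n}\sum_i K_c(x_k,x_i)\,y_i, \qquad \partial_t \bar{h}^s_t(x_k)\big|_{t=0} = \frac{1}{2n}\sum_i K_s(x_k,x_i)\,y_i.
\end{equation*}

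Now substitute the rank-one forms from Assumption~\ref{assum:het_spike}.
\begin{itemize}
\item Core part: $\sum_i \beta^c_i\beta^c_k\, y_i y_k\, y_i = \beta^c_k y_k \sum_i \beta^c_i = \beta^c_k y_k\, S_{cc}$, using $y_i^2=1$.
\item Spurious part: $\sum_i \beta^s_i\beta^s_k\, a_i a_k\, y_i = \beta^s_k a_k \sum_i \beta^s_i a_i y_i = \beta^s_k a_k\, S_{sc}$, by Definition~\ref{def:global_stats}.
\end{itemize}
Multiplying each by $\frac{1}{2n}$ gives exactly the two stated formulas.

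The only step that needs care is justifying that the expectation commutes with the time derivative and with the sum in the ODE. For the sum this is immediate, since it is finite and $K$ is deterministic. For the derivative, we treat it exactly as in Theorem~\ref{thm:homo_velocity}: the ODE is pathwise exact in the NTK limit, and $|g_i|\le 1$ gives the integrability needed to differentiate under the expectation. Everything else is direct algebra.
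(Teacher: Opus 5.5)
Your proposal is correct and follows the paper's proof: the paper writes $K_c=\mathbf{u}_c\mathbf{u}_c^\top$ and $K_s=\mathbf{u}_s\mathbf{u}_s^\top$ with $(\mathbf{u}_c)_i=\beta^c_i y_i$, $(\mathbf{u}_s)_i=\beta^s_i a_i$, and computes $\frac{1}{2n}\mathbf{u}_c(\mathbf{u}_c^\top\mathbf{y})=\frac{S_{cc}}{2n}\mathbf{u}_c$ and $\frac{1}{2n}\mathbf{u}_s(\mathbf{u}_s^\top\mathbf{y})=\frac{S_{sc}}{2n}\mathbf{u}_s$, which is your entrywise computation in vector form. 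Your use of the bound $|g_i|\le 1$ to justify differentiating under the expectation is a small addition in rigor that the paper leaves implicit.
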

\begin{proof}
    Let $\mathbf{u}_c \in \mathbb{R}^n$ have entries $(\mathbf{u}_c)_i = \beta^c_i y_i$ and $\mathbf{u}_s \in \mathbb{R}^n$ have entries $(\mathbf{u}_s)_i = \beta^s_i a_i$. Then $K_c = \mathbf{u}_c \mathbf{u}_c^\top$ and $K_s = \mathbf{u}_s \mathbf{u}_s^\top$.
    From the proof of Theorem \ref{thm:homo_velocity}, $\dot{\bar{\mathbf{h}}}_0 = \frac{1}{2n} K \mathbf{y}$.
    $\dot{\bar{\mathbf{h}}}^c_0 = \frac{1}{2n} (\mathbf{u}_c \mathbf{u}_c^\top) \mathbf{y} = \frac{1}{2n} \mathbf{u}_c (\mathbf{u}_c^\top \mathbf{y}) = \frac{1}{2n} \mathbf{u}_c (\sum_i \beta^c_i y_i^2) = \frac{S_{cc}}{2n} \mathbf{u}_c$.
    $\dot{\bar{\mathbf{h}}}^s_0 = \frac{1}{2n} (\mathbf{u}_s \mathbf{u}_s^\top) \mathbf{y} = \frac{1}{2n} \mathbf{u}_s (\mathbf{u}_s^\top \mathbf{y}) = \frac{1}{2n} \mathbf{u}_s (\sum_i \beta^s_i a_i y_i) = \frac{S_{sc}}{2n} \mathbf{u}_s$.
    The $k$-th entry of $\dot{\bar{\mathbf{h}}}^c_0$ is $\frac{S_{cc}}{2n} \beta^c_k y_k$, and for $\dot{\bar{\mathbf{h}}}^s_0$ it is $\frac{S_{sc}}{2n} \beta^s_k a_k$.
\end{proof}

The following corollary refines the simplicity bias condition. It is now a sample-specific condition that depends on the ratio of the sample's local strengths ($\beta^s_k / \beta^c_k$) versus the ratio of the dataset's global alignments ($S_{cc} / S_{sc}$). Interestingly, this explains why some minority samples might be learned while others (with high $\beta^s_k$) are not.
\begin{corollary}[Sample-Wise Simplicity Bias]\label{cor:global_bias_condition}
    The margin velocity for sample $k$ is $\dot{\bar{m}}_k(0) = \frac{1}{2n} (S_{cc} \beta^c_k + S_{sc} \beta^s_k (y_k a_k))$.
    For a minority sample ($k \in G_2$, $y_k a_k = -1$), the loss increases at initialization if:
    \begin{equation}
        \beta^s_k S_{sc} > \beta^c_k S_{cc}
    \end{equation}
\end{corollary}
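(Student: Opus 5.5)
We compute the initial margin velocity directly from Theorem~\ref{thm:global_velocity}, then read off the sign for minority samples. By Assumption~\ref{assum:decomp} the expected logit splits as $\bar{h}_t(x_k) = \bar{h}^c_t(x_k) + \bar{h}^s_t(x_k)$, so the expected margin is $\bar{m}_k(t) = y_k \bar{h}^c_t(x_k) + y_k \bar{h}^s_t(x_k)$.

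\textbf{Margin velocity.} Differentiating at $t=0$ and substituting the two velocities from Theorem~\ref{thm:global_velocity} gives
\begin{equation*}
\dot{\bar{m}}_k(0) = y_k\Big(\tfrac{S_{cc}}{2n}\beta^c_k y_k + \tfrac{S_{sc}}{2n}\beta^s_k a_k\Big) = \tfrac{1}{2n}\big(S_{cc}\beta^c_k + S_{sc}\beta^s_k\, y_k a_k\big),
\end{equation*}
where we used $y_k^2 = 1$. This is the first claim of the corollary.

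\textbf{Minority samples.} For $k \in G_2$ we have $y_k a_k = -1$, so
\begin{equation*}
\dot{\bar{m}}_k(0) = \tfrac{1}{2n}\big(S_{cc}\beta^c_k - S_{sc}\beta^s_k\big),
\end{equation*}
which is strictly negative exactly when $\beta^s_k S_{sc} > \beta^c_k S_{cc}$.

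\textbf{From margin to loss.} The argument mirrors the proof of Theorem~\ref{thm:homo_loss_divergence}. By Assumption~\ref{assum:sym_init}, $\bar{m}_k(0) = 0$, so $\ell(\bar{m}_k(0)) = \log 2$. Since $\ell(m) = \log(1 + e^{-m})$ has $\ell'(0) = -\tfrac12 < 0$, the chain rule gives
\begin{equation*}
\frac{d}{dt}\ell(\bar{m}_k(t))\Big|_{t=0} = -\tfrac12\,\dot{\bar{m}}_k(0) > 0.
\end{equation*}
Hence the loss increases at initialization. Equivalently, by continuity there is a short interval $(0, T)$ on which $\bar{m}_k(t) < 0$ and the loss exceeds $\log 2$.

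\textbf{Main subtlety.} The only delicate point is what "loss" means here. Following the paper's convention, I interpret it as the loss of the expected margin, $\ell(\bar{m}_k(t))$, rather than the expected loss $\mathbb{E}[\ell(m_k(t))]$. If the latter were intended, one would instead differentiate under the expectation, giving $\mathbb{E}[g_k(0)\,\dot{h}_k(0)]$. Since $\dot{h}_k(0) = -\tfrac1n\sum_i K(x_k, x_i)\,g_i(0)$ involves products of the random gradients $g_i(0)$, this is not simply a product of means, and an independence or linearization argument like the one preceding Theorem~\ref{thm:homo_acceleration} would be needed. I would state the result for $\ell(\bar{m}_k)$, consistent with Theorem~\ref{thm:homo_loss_divergence}.
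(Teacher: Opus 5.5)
Your proof is correct and follows the paper's intended argument. The paper states this corollary without a separate proof, as the direct consequence of multiplying the velocities of Theorem~\ref{thm:global_velocity} by $y_k$ and then, exactly as in Theorem~\ref{thm:homo_loss_divergence}, converting a negative margin velocity at $\bar{m}_k(0)=0$ into a loss exceeding $\log 2$; your reading of "loss" as $\ell(\bar{m}_k(t))$ rather than $\mathbb{E}[\ell(m_k(t))]$ matches the paper's convention.
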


\begin{theorem}[Initial Acceleration in Heterogeneous Spiked Model]\label{thm:global_acceleration}
    Recall the global energy constants from Definition \ref{def:global_stats}
    \begin{equation}
    \begin{aligned}
    C_{core} &= S_{cc} Q_{cc} + S_{sc} Q_{cs}, \\
    C_{spur} &= S_{cc} Q_{cs} + S_{sc} Q_{ss}.
    \end{aligned}
    \end{equation}
    The initial acceleration of the expected logit for sample $k$ is
    \begin{equation}
        \bar{a}_{0,k} = -\frac{1}{8n^2} \left[ \beta^c_k C_{core} y_k + \beta^s_k C_{spur} a_k \right]
    \end{equation}
\end{theorem}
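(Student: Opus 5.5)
We follow the route of the proof of Theorem~\ref{thm:homo_acceleration}, now with the rank-1 heterogeneous kernel. The first step is to adopt the same linearization used before Theorem~\ref{thm:homo_acceleration}. We replace $\mathbb{E}[\sigma'(-y_i h_i(0))]$ by $\sigma'(0)=1/4$, which gives $\mathbb{E}[\dot{\mathbf{g}}_0]\approx \frac14 \dot{\bar{\mathbf{h}}}_0$ and hence $\ddot{\bar{\mathbf{h}}}_0 \approx -\frac{1}{4n} K \dot{\bar{\mathbf{h}}}_0$. From the proof of Theorem~\ref{thm:homo_velocity} we have $\dot{\bar{\mathbf{h}}}_0 = \frac{1}{2n}K\mathbf{y}$, which is valid under Assumptions~\ref{assum:sym_init}--\ref{assum:data_structure} regardless of the kernel form. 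Combining the two, $\ddot{\bar{\mathbf{h}}}_0 \approx -\frac{1}{8n^2}K^2\mathbf{y}$. What remains is to compute $K^2\mathbf{y}$ in closed form.

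As in the proof of Theorem~\ref{thm:global_velocity}, set $(\mathbf{u}_c)_i=\beta^c_i y_i$ and $(\mathbf{u}_s)_i=\beta^s_i a_i$. By Proposition~\ref{prop:decomp_kernel} and Assumption~\ref{assum:het_spike}, $K=\mathbf{u}_c\mathbf{u}_c^\top+\mathbf{u}_s\mathbf{u}_s^\top$. We then record the needed inner products in the notation of Definition~\ref{def:global_stats}:
\begin{equation*}
\mathbf{u}_c^\top\mathbf{y}=S_{cc},\quad \mathbf{u}_s^\top\mathbf{y}=S_{sc},\quad \mathbf{u}_c^\top\mathbf{u}_c=Q_{cc},\quad \mathbf{u}_s^\top\mathbf{u}_s=Q_{ss},\quad \mathbf{u}_c^\top\mathbf{u}_s=Q_{cs}.
\end{equation*}
Each follows from $y_i^2=a_i^2=1$. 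For example, $\mathbf{u}_c^\top\mathbf{u}_s=\sum_i\beta^c_i\beta^s_i y_i a_i=Q_{cs}$.

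Next we compute in two stages. First, $K\mathbf{y}=S_{cc}\mathbf{u}_c+S_{sc}\mathbf{u}_s$. Applying $K$ again,
\begin{equation*}
K^2\mathbf{y}=S_{cc}(Q_{cc}\mathbf{u}_c+Q_{cs}\mathbf{u}_s)+S_{sc}(Q_{cs}\mathbf{u}_c+Q_{ss}\mathbf{u}_s)=C_{core}\mathbf{u}_c+C_{spur}\mathbf{u}_s.
\end{equation*}
Reading off the $k$-th entry and multiplying by $-\frac{1}{8n^2}$ gives $\bar{a}_{0,k}=-\frac{1}{8n^2}[\beta^c_k C_{core}y_k+\beta^s_k C_{spur}a_k]$, which is the claimed formula.

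As a consistency check, we specialize to $\beta^c_i\equiv\beta^c$ and $\beta^s_i\equiv\beta^s$. This should recover Theorem~\ref{thm:homo_acceleration}, up to the kernel normalization, which there is $\beta$ rather than $\beta^2$. If we also want subnetwork accelerations, we isolate $-\frac{1}{4n}K_c\dot{\bar{\mathbf{h}}}_0$ and $-\frac{1}{4n}K_s\dot{\bar{\mathbf{h}}}_0$. The terms multiplying $\mathbf{u}_c$ then come from $K_c$, and those multiplying $\mathbf{u}_s$ come from $K_s$.

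The main obstacle is not the algebra. It is being honest that the result holds only under the linearization $\mathbb{E}[\sigma'(-y_ih_i(0))]\approx\sigma'(0)$. The exact statement would require $\mathbb{E}[\sigma'(\cdot)\,\dot h_i]$, and $\sigma'$ and $\dot h_i$ are correlated through the Gaussian-process initialization. We will state the theorem as holding within this same approximation, exactly as in Theorem~\ref{thm:homo_acceleration}. If tighter control is wanted, the first thing to try is to bound the discrepancy. Since $\sigma'\le 1/4$ and $\sigma''$ is bounded, the discrepancy is bounded by a multiple of the initialization variance, so the formula becomes exact as the initial output variance tends to zero.
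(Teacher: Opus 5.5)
Your proposal is correct and follows essentially the same route as the paper. You use the same linearization $\ddot{\bar{\mathbf h}}_0\approx-\frac{1}{4n}K\dot{\bar{\mathbf h}}_0$ and the same factorization $K=\mathbf u_c\mathbf u_c^\top+\mathbf u_s\mathbf u_s^\top$, together with the inner products $S_{cc},S_{sc},Q_{cc},Q_{ss},Q_{cs}$, to get $K^2\mathbf y=C_{core}\mathbf u_c+C_{spur}\mathbf u_s$, and both your consistency check against the homogeneous case (with $\beta_{\text{homo}}=\beta_{\text{het}}^2$) and your caveat that the result holds only within the linearization are accurate.
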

\begin{proof}
    We start with the approximation $\ddot{\bar{\mathbf{h}}}_0 \approx -\frac{1}{4n} K \dot{\bar{\mathbf{h}}}_0$. From Theorem \ref{thm:global_velocity}, $\dot{\bar{\mathbf{h}}}_0 = \frac{1}{2n} (S_{cc} \mathbf{u}_c + S_{sc} \mathbf{u}_s)$.
    \begin{align*}
    \ddot{\bar{\mathbf{h}}}_0
    &\approx -\frac{1}{8n^2} (\mathbf{u}_c \mathbf{u}_c^\top + \mathbf{u}_s \mathbf{u}_s^\top)
        (S_{cc} \mathbf{u}_c + S_{sc} \mathbf{u}_s) \\
    &= -\frac{1}{8n^2} \Big[
        \mathbf{u}_c \big( S_{cc} (\mathbf{u}_c^\top \mathbf{u}_c)
                         + S_{sc} (\mathbf{u}_c^\top \mathbf{u}_s) \big)
      + \mathbf{u}_s \big( S_{cc} (\mathbf{u}_s^\top \mathbf{u}_c)
                         + S_{sc} (\mathbf{u}_s^\top \mathbf{u}_s) \big)
    \Big].
    \end{align*}
    Using the definition of inner product and the quantities defined in Definition \ref{def:global_stats} and the proof of Theorem \ref{thm:global_velocity} we have $\mathbf{u}_c^\top \mathbf{u}_c = Q_{cc}$, $\mathbf{u}_s^\top \mathbf{u}_s = Q_{ss}$, and $\mathbf{u}_c^\top \mathbf{u}_s = \mathbf{u}_s^\top \mathbf{u}_c = Q_{cs}$ leading to
    \begin{align}
    \ddot{\bar{\mathbf{h}}}_0
    &\approx -\frac{1}{8n^2} \Big[
        \mathbf{u}_c (S_{cc} Q_{cc} + S_{sc} Q_{cs})
        + \mathbf{u}_s (S_{cc} Q_{cs} + S_{sc} Q_{ss})
    \Big].
    \end{align}
    Substituting the definitions of $C_{core}$, $C_{spur}$, $\mathbf{u}_c$, and $\mathbf{u}_s$ yields the element-wise result for $\bar{a}_{0,k}$ stated in the Theorem.
\end{proof}
Note that the initial acceleration is also a product of local sample strengths ($\beta_k$) and the global energy constants ($C_{core}, C_{spur}$) that depend on the aggregate statistics of the dataset.

The following corollary demonstrates that the qualitative dynamics hold. Majority samples always decelerate. A minority sample's acceleration depends on its specific $\beta^s_k$ vs $\beta^c_k$ profile, explaining intra-group variance. When all samples decelerate, majority samples do so more rapidly.
\begin{corollary}[Group-Specific Acceleration in Heterogeneous Spiked Model]\label{cor:global_accel_dynamics}
    Let $A_k = y_k \bar{a}_{0,k}$ be the margin acceleration for sample $k$.
    \begin{enumerate}
        \item \textbf{Universal Majority Deceleration:} For any majority sample $k \in G_1$ ($y_k = a_k$):
        \begin{equation}
            A_k = -\frac{1}{8n^2} (\beta^c_k C_{core} + \beta^s_k C_{spur}) < 0
        \end{equation}
        (Assuming $S_{sc}, Q_{cs} > 0$).
        \item \textbf{Minority Acceleration Regime:} For a minority sample $k \in G_2$ ($y_k = -a_k$), the acceleration is:
        \begin{equation}
            A_k = -\frac{1}{8n^2} (\beta^c_k C_{core} - \beta^s_k C_{spur})
        \end{equation}
        The sample accelerates ($A_k > 0$) if its spurious-weighted energy contribution exceeds its core-weighted one:
        \begin{equation}
            \beta^s_k C_{spur} > \beta^c_k C_{core}
        \end{equation}
        
        \item \textbf{Relative Saturation Strength:} In the regime where a minority sample also decelerates ($A_k < 0$, i.e., $\beta^c_k C_{core} > \beta^s_k C_{spur}$), its deceleration is strictly weaker than that of a majority sample with the same local strengths $\beta^c_k, \beta^s_k$:
        \begin{equation}
            |A_{k \in G_1}| - |A_{k \in G_2}| = \frac{1}{4n^2} \beta^s_k C_{spur} > 0
        \end{equation}
    \end{enumerate}
\end{corollary}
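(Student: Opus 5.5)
The three claims follow from Theorem~\ref{thm:global_acceleration} by specializing the sign pattern $y_k a_k = \pm 1$. Multiplying the expression for $\bar{a}_{0,k}$ by $y_k$ and using $y_k^2=1$ gives
\begin{equation*}
A_k = y_k \bar{a}_{0,k} = -\frac{1}{8n^2}\left[\beta^c_k C_{core} + \beta^s_k C_{spur}\,(y_k a_k)\right].
\end{equation*}
For $k\in G_1$ we have $y_ka_k=1$, and for $k\in G_2$ we have $y_ka_k=-1$. This yields the two displayed formulas for $A_k$ directly.

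For \textbf{universal majority deceleration}, it suffices to show $C_{core}>0$ and $C_{spur}>0$, since $\beta^c_k,\beta^s_k>0$ by Assumption~\ref{assum:het_spike}. By definition $S_{cc}>0$, $Q_{cc}>0$ and $Q_{ss}>0$, as they are sums of positive terms. Under the stated hypothesis $S_{sc},Q_{cs}>0$, every summand of $C_{core}=S_{cc}Q_{cc}+S_{sc}Q_{cs}$ and of $C_{spur}=S_{cc}Q_{cs}+S_{sc}Q_{ss}$ is positive, so $A_k<0$.

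The main point to handle carefully is justifying $S_{sc},Q_{cs}>0$ from the data assumptions rather than merely assuming it. Assumption~\ref{assum:data_structure} alone ($\alpha>1/2$) does not guarantee this once the strengths are heterogeneous. For example, $\sum_{G_1}\beta^s_i>\sum_{G_2}\beta^s_i$ holds if $\alpha b^s>(1-\alpha)B^s$, and $Q_{cs}>0$ holds if $\alpha b^cb^s>(1-\alpha)B^cB^s$. I would therefore keep positivity as the explicit hypothesis, as the statement does, and remark that these bound conditions suffice.

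For the \textbf{minority regime}, $A_k>0$ is equivalent to $\beta^c_kC_{core}-\beta^s_kC_{spur}<0$, which is the stated inequality. For \textbf{relative saturation}, take a majority and a minority sample with the same local strengths. In the regime $\beta^c_kC_{core}>\beta^s_kC_{spur}$, both bracketed quantities are positive, so the absolute values drop and
\begin{equation*}
|A_{G_1}|-|A_{G_2}| = \frac{1}{8n^2}\left[(\beta^c_kC_{core}+\beta^s_kC_{spur})-(\beta^c_kC_{core}-\beta^s_kC_{spur})\right] = \frac{1}{4n^2}\beta^s_kC_{spur}.
\end{equation*}
This is strictly positive by $C_{spur}>0$ and $\beta^s_k>0$.
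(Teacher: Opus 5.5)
Your proposal is correct and follows the same route the paper implicitly takes, since the paper states this corollary without a separate proof: multiply Theorem~\ref{thm:global_acceleration} by $y_k$, specialize $y_ka_k=\pm1$, and use $C_{core},C_{spur}>0$ under the hypothesis $S_{sc},Q_{cs}>0$. Your side remark is also correct: with heterogeneous strengths, $\alpha>1/2$ alone does not force $S_{sc},Q_{cs}>0$, so the paper's note that these follow from Assumption~\ref{assum:data_structure} overstates things, and your bound conditions are valid sufficient conditions.
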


%%%% Edge cases subsection for the theory.

\subsection{Validation of Our Theoretical Setting and Assumptions}

\subsubsection{Analysis on the Linearization in Theorem \ref{thm:homo_acceleration}}

We establish guarantees for the linearization approximation $\mathbb{E}_{\mathbf{W}_0}[\sigma'(-y_i h(x_i; \mathbf{W}_0))] \approx \sigma'(0) = 1/4$ utilized in the derivation of the initial expected acceleration. In the infinite-width NTK limit under symmetric initialization, the network output $h(x_i; \mathbf{W}_0)$ converges in distribution to a centered Gaussian process. Consequently, the random variable $Z_i = -y_i h(x_i; \mathbf{W}_0)$ follows a Gaussian distribution $\mathcal{N}(0, \sigma_h^2)$, where $\sigma_h^2 = K(x_i, x_i)$ denotes the initial function space variance.

\begin{proposition}
Let $Z \sim \mathcal{N}(0, \sigma_h^2)$. The absolute error induced by the linearization approximation evaluated at initialization is bounded by
\begin{equation}
0 \le \frac{1}{4} - \mathbb{E}[\sigma'(Z)] \le \frac{\sigma_h^2}{16}
\end{equation}
\end{proposition}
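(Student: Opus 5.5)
Both inequalities follow from a second-order Taylor bound on $\sigma'$ around the origin, using the symmetry of $Z$ to kill the first-order term.

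\textbf{Lower bound.} We have $\sigma'(z)=\sigma(z)(1-\sigma(z))$. This is maximized at $z=0$, where it equals $1/4$, because $p(1-p)\le 1/4$ for every $p\in[0,1]$. Hence $\sigma'(Z)\le 1/4$ pointwise, and taking expectations gives $\tfrac14-\mathbb{E}[\sigma'(Z)]\ge 0$.

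\textbf{Upper bound.} We aim for the pointwise inequality $\sigma'(z)\ge \tfrac14-\tfrac{z^2}{16}$ for all real $z$.

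The first tool is the identity $\sigma'(z)=\tfrac14\bigl(1-\tanh^2(z/2)\bigr)$, which follows from $\sigma(z)=\tfrac12\bigl(1+\tanh(z/2)\bigr)$. This gives
\begin{equation*}
\tfrac14-\sigma'(z)=\tfrac14\tanh^2(z/2).
\end{equation*}
The second tool is the elementary bound $|\tanh u|\le |u|$, valid because $\tanh$ is odd, has derivative $1-\tanh^2\le 1$, and vanishes at $0$. Applying it with $u=z/2$ yields
\begin{equation*}
\tfrac14-\sigma'(z)\le \tfrac14\cdot\tfrac{z^2}{4}=\tfrac{z^2}{16}.
\end{equation*}

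Taking expectations and using $\mathbb{E}[Z^2]=\sigma_h^2$ gives $\tfrac14-\mathbb{E}[\sigma'(Z)]\le \sigma_h^2/16$. Together with the lower bound, this proves the proposition.

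\textbf{Alternative route and obstacles.} One could instead Taylor-expand $\sigma'$ to second order around $0$. Symmetry of $Z$ kills the odd terms, leaving a bound of the form $\tfrac12\sup|\sigma'''|\,\mathbb{E}[Z^2]$. This works because $\sigma'''(0)=-1/8$ is the extremal value, so the constant again comes out as $1/16$. However, it requires checking a global bound on $\sigma'''$, which is the most delicate step. The $\tanh$ identity avoids this check entirely. Nothing beyond finiteness of the second moment is needed; in particular, the statement holds for any symmetric $Z$ with variance $\sigma_h^2$, not only Gaussian $Z$.
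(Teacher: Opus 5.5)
Your proof is correct, but your upper bound follows a different route from the paper's. The paper sets $g(z)=\frac14-\sigma'(z)$ and notes $g(0)=g'(0)=0$. It then asserts that $g''\le\frac18$ globally and integrates twice to get $g(z)\le z^2/16$. This is essentially the ``alternative route'' you sketch, and the paper leaves the global second-derivative bound unverified. The bound does hold: with $s=1-\tanh^2(z/2)\in(0,1]$ one finds $g''(z)=\frac18 s(3s-2)\le\frac18$. Your identity $\frac14-\sigma'(z)=\frac14\tanh^2(z/2)$ reduces the pointwise inequality to the elementary fact $|\tanh u|\le|u|$, so you never compute or bound a third derivative of $\sigma$. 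The cost is that your argument uses the special algebraic structure of the logistic function. The paper's Taylor/integration argument instead transfers to any smooth link for which the second derivative of $g$ can be bounded. One correction to your framing: neither argument uses the symmetry of $Z$. The first-order term vanishes pointwise because $\sigma''(0)=0$, not merely in expectation. Since your inequality $\frac14-\sigma'(z)\le z^2/16$ holds for every real $z$, the conclusion holds for any random variable with $\mathbb{E}[Z^2]=\sigma_h^2$, symmetric or not, Gaussian or not. Your opening sentence about symmetry killing the first-order term therefore does not describe the proof you actually give.
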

\begin{proof}
The derivative of the logistic sigmoid function $\sigma'(z) = (e^z + e^{-z} + 2)^{-1}$ achieves its global maximum at $z=0$ with $\sigma'(0) = 1/4$. Thus, the lower bound $\frac{1}{4} - \mathbb{E}[\sigma'(Z)] \ge 0$ holds trivially. To establish the upper bound, we observe that the function $g(z) = \frac{1}{4} - \sigma'(z)$ satisfies $g(0) = 0$, $g'(0) = 0$, and its second derivative is globally bounded above by $1/8$. Integrating the second derivative yields the global parabolic upper bound
\begin{equation}
\frac{1}{4} - \sigma'(z) \le \frac{z^2}{16}
\end{equation}
Taking the expectation over the Gaussian measure of $Z$ directly provides the upper bound $\frac{1}{16}\mathbb{E}[Z^2] = \frac{\sigma_h^2}{16}$.
\end{proof}

Let us now quantify the initial function space variance $\sigma_h^2$ under standard parameterizations. Consider a fully connected neural network architecture of depth $L$ with input dimension $d_0$ and hidden widths $d_l$ for $l \in \{1, \dots, L\}$. 

Let $x \in \mathbb{R}^{d_0}$ represent the input vector. The forward pass is defined recursively
\begin{equation}
z^{(l)} = W^{(l)} a^{(l-1)}
\end{equation}
\begin{equation}
a^{(l)} = \phi(z^{(l)})
\end{equation}
where $a^{(0)} = x$, $W^{(l)} \in \mathbb{R}^{d_l \times d_{l-1}}$ is the weight matrix at layer $l$, and $\phi(\cdot)$ is the element-wise activation function. The final scalar output is given by the linear transformation
\begin{equation}
h(x; \mathbf{W}_0) = W^{(L+1)} a^{(L)}
\end{equation}
with $W^{(L+1)} \in \mathbb{R}^{1 \times d_L}$.

\begin{proposition}
Assume the network weights are initialized according to the He initialization protocol for the ReLU activation function $\phi(z) = \max(0, z)$. Specifically $W_{ij}^{(l)} \sim \mathcal{N}\left(0, \frac{2}{d_{l-1}}\right)$ for $l \in \{1, \dots, L\}$ and the final linear layer is initialized as $W_{1j}^{(L+1)} \sim \mathcal{N}\left(0, \frac{1}{d_L}\right)$. The initial variance of the network output is exactly
\begin{equation}
\sigma_h^2 = \frac{\|x\|_2^2}{d_0}
\end{equation}
\end{proposition}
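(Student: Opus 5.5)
The plan is to compute the second moment of $h(x;\mathbf{W}_0)$ directly, layer by layer, by conditioning on the previous layer's activations and using the tower property. This works at any finite width, not only in the limit. We first note that $\mathbb{E}[h(x;\mathbf{W}_0)]=0$: conditional on $a^{(L)}$, the output $W^{(L+1)}a^{(L)}$ is a linear combination of independent zero-mean Gaussians, so its conditional mean vanishes. Hence $\sigma_h^2=\mathbb{E}[h(x;\mathbf{W}_0)^2]$, and it suffices to compute this second moment.

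For the last layer, conditional on $a^{(L)}$, the output is $\mathcal{N}\bigl(0,\|a^{(L)}\|_2^2/d_L\bigr)$, because the $d_L$ weights $W^{(L+1)}_{1j}$ are i.i.d. with variance $1/d_L$. Therefore $\sigma_h^2=\mathbb{E}\|a^{(L)}\|_2^2/d_L$. We define $q_l:=\mathbb{E}\|a^{(l)}\|_2^2/d_l$, with $q_0=\|x\|_2^2/d_0$ since $a^{(0)}=x$ is deterministic. The claim then reduces to showing $q_l=q_{l-1}$ for every $l\in\{1,\dots,L\}$ and telescoping.

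For the recursion, fix $l$ and condition on $a^{(l-1)}$. The preactivations satisfy $z^{(l)}_i=\sum_j W^{(l)}_{ij}a^{(l-1)}_j$. The weights $W^{(l)}$ are independent of $a^{(l-1)}$, which depends only on $W^{(1)},\dots,W^{(l-1)}$. So, conditionally, the $z^{(l)}_i$ are i.i.d. $\mathcal{N}\bigl(0,\,2\|a^{(l-1)}\|_2^2/d_{l-1}\bigr)$.

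The key step is the ReLU halving identity. For $Z\sim\mathcal{N}(0,v)$, symmetry of $Z$ gives $\mathbb{E}[\max(0,Z)^2]=\tfrac12\mathbb{E}[Z^2]=v/2$, since $Z^2\mathbbm{1}\{Z>0\}$ and $Z^2\mathbbm{1}\{Z<0\}$ have equal expectation. Hence
\begin{equation*}
\mathbb{E}\bigl[\|a^{(l)}\|_2^2 \,\big|\, a^{(l-1)}\bigr]=d_l\cdot\tfrac12\cdot\tfrac{2\|a^{(l-1)}\|_2^2}{d_{l-1}} .
\end{equation*}
Taking expectations and dividing by $d_l$ gives $q_l=q_{l-1}$. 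The factor $2$ of He initialization exactly cancels the $1/2$ from the ReLU, which is the whole point of the protocol.

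Telescoping yields $\sigma_h^2=q_L=q_0=\|x\|_2^2/d_0$. The only step needing care is the conditioning argument: we must justify that $W^{(l)}$ is independent of $a^{(l-1)}$, and that the conditional law of each preactivation is centered Gaussian, hence symmetric. This is what lets the halving identity hold exactly rather than only asymptotically, so no infinite-width limit is needed for this variance identity.
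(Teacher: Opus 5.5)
Your proof is correct and follows the same route as the paper: propagate the second moment layer by layer, use that the ReLU halves the second moment of a centered symmetric preactivation, let the He factor $2$ cancel it, and finish with the output layer's $1/d_L$ variance. Your conditioning on $a^{(l-1)}$ is a slightly cleaner justification of the halving step for $l\ge 2$. The paper there just says ``applying the ReLU properties again,'' even though at finite width $z_i^{(l)}$ is only conditionally Gaussian, not unconditionally.
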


\begin{proof}
We proceed by induction on the second moment of the activations. For the first hidden layer the pre-activation components are $z_i^{(1)} = \sum_{j=1}^{d_0} W_{ij}^{(1)} x_j$. Since $W_{ij}^{(1)}$ are independent zero-mean Gaussian random variables we have
\begin{equation}
\mathbb{E}\left[\left(z_i^{(1)}\right)^2\right] = \sum_{j=1}^{d_0} \mathbb{E}\left[\left(W_{ij}^{(1)}\right)^2\right] x_j^2 = \frac{2}{d_0} \sum_{j=1}^{d_0} x_j^2 = \frac{2\|x\|_2^2}{d_0}
\end{equation}
Given that $z_i^{(1)}$ follows a symmetric zero-mean normal distribution the ReLU activation halves the expected squared magnitude
\begin{equation}
\mathbb{E}\left[\left(a_i^{(1)}\right)^2\right] = \frac{1}{2}\mathbb{E}\left[\left(z_i^{(1)}\right)^2\right] = \frac{\|x\|_2^2}{d_0}
\end{equation}
Assume the inductive hypothesis holds for layer $l-1$ such that $\mathbb{E}\left[\left(a_j^{(l-1)}\right)^2\right] = \frac{\|x\|_2^2}{d_0}$. For layer $l$ the weights and previous activations are independent
\begin{equation}
\mathbb{E}\left[\left(z_i^{(l)}\right)^2\right] = \sum_{j=1}^{d_{l-1}} \mathbb{E}\left[\left(W_{ij}^{(l)}\right)^2\right] \mathbb{E}\left[\left(a_j^{(l-1)}\right)^2\right] = d_{l-1} \left(\frac{2}{d_{l-1}}\right) \left(\frac{\|x\|_2^2}{d_0}\right) = \frac{2\|x\|_2^2}{d_0}
\end{equation}
Applying the ReLU properties again yields $\mathbb{E}\left[\left(a_i^{(l)}\right)^2\right] = \frac{\|x\|_2^2}{d_0}$. By induction this holds for the final hidden layer $L$. The scalar output evaluates the linear combination with the final weight vector
\begin{equation}
\sigma_h^2 = \mathbb{E}\left[\left(h(x; \mathbf{W}_0)\right)^2\right] = \sum_{j=1}^{d_L} \mathbb{E}\left[\left(W_{1j}^{(L+1)}\right)^2\right] \mathbb{E}\left[\left(a_j^{(L)}\right)^2\right]
\end{equation}
Substituting the variance of the final layer $W_{1j}^{(L+1)} \sim \mathcal{N}\left(0, \frac{1}{d_L}\right)$ directly yields
\begin{equation}
\sigma_h^2 = d_L \left(\frac{1}{d_L}\right) \left(\frac{\|x\|_2^2}{d_0}\right) = \frac{\|x\|_2^2}{d_0}
\end{equation}
\end{proof}

A similar result holds for the Xavier initialization under which $W_{ij}^{(l)} \sim \mathcal{N}\left(0, \frac{1}{d_{l-1}}\right)$ and $\phi(z) = z$ corresponding to the linear regime of symmetric activations near the origin. In this case, a simpler argument leveraging that the variance propagation trivially preserves the second moment at each layer yields $\sigma_h^2 = \frac{\|x\|_2^2}{d_0}$.

We can now substitute this exact variance derivation into the theoretical bound established previously to obtain
\begin{equation}
0 \le \frac{1}{4} - \mathbb{E}_{\mathbf{W}_0}[\sigma'(-y_i h(x_i; \mathbf{W}_0))] \le \frac{\|x_i\|_2^2}{16 d_0}.
\end{equation}

This bound thus demonstrates that the quality of the linearization approximation is  dictated by the input dimension. In standard computer vision datasets the input space dimension $d_0$ is adequately large (around 50K for the ones used in our paper). Assuming a data preprocessing that standardizes the input vectors, i.e. $\|x_i\|_2^2 = 1$, the absolute error scales  as $\mathcal{O}(d_0^{-1})$. The error term thus converges to zero as $d_0 \to \infty$ certifying the soundness of evaluating the expected acceleration dynamics using $\sigma'(0) = 1/4$. This calculation then justifies the approximation at $t=0$ we leveraged in our original theoretical result.

We proceed to address the degradation of this approximation as the optimization progresses for $t > 0$ by characterizing the Jensen's gap (which is an intuitive approach to compare $\mathbb{E}[\sigma'(Z_t)]$ vs. $\sigma'(\mathbb{E}[Z_t])$ when applying the Jensen's inequality). As the model fits the training data, the expected margin diverges  from zero. Let $\mu_t = \mathbb{E}[-y_i h(x_i; \mathbf{W}_t)]$. For samples successfully classified by the network, $\mu_t$ diverges towards $-\infty$.

\begin{proposition}
Assume the logit $Z_t = -y_i h(x_i; \mathbf{W}_t)$ maintains a Gaussian distribution $\mathcal{N}(\mu_t, \sigma_h^2)$ during gradient flow. As the expected margin grows such that $\mu_t \to -\infty$, the Jensen gap between the expected acceleration and the point estimate converges  to a constant multiplicative factor dictated by the variance
\begin{equation}
\lim_{\mu_t \to -\infty} \frac{\mathbb{E}[\sigma'(Z_t)]}{\sigma'(\mathbb{E}[Z_t])} = \exp \left( \frac{\sigma_h^2}{2} \right)
\end{equation}
\end{proposition}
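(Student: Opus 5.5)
The plan is to factor out the exponential tail of the sigmoid derivative and then use an exponential change of measure (Gaussian tilting), so the limit reduces to a bounded convergence argument. Write $s^2 := \sigma_h^2$ and $\mu := \mu_t$, and note that
\begin{equation*}
\sigma'(z) = \frac{e^{z}}{(1+e^{z})^{2}} = e^{z}\, r(z), \qquad r(z) := (1+e^{z})^{-2}.
\end{equation*}
Here $r$ is continuous, satisfies $0 < r(z) \le 1$ for all $z$, and $r(z) \to 1$ as $z \to -\infty$. The ratio in the statement then splits as
\begin{equation*}
\frac{\mathbb{E}[\sigma'(Z_t)]}{\sigma'(\mu)} = \frac{\mathbb{E}[e^{Z_t} r(Z_t)]}{e^{\mu+s^2/2}} \cdot e^{s^2/2} \cdot \frac{1}{r(\mu)},
\end{equation*}
using $\sigma'(\mathbb{E}[Z_t]) = \sigma'(\mu) = e^{\mu} r(\mu)$. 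The last factor tends to $1$ as $\mu \to -\infty$ by the limit of $r$. It therefore suffices to show that the first factor tends to $1$.

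For the first factor, I will use the Gaussian tilting identity. If $Z \sim \mathcal{N}(\mu, s^2)$ has density $p_\mu$, then $e^{z-\mu-s^2/2} p_\mu(z)$ is exactly the density of $\mathcal{N}(\mu+s^2, s^2)$. This follows by completing the square in the exponent, and it is the same computation that gives $\mathbb{E}[e^{Z}] = e^{\mu+s^2/2}$. Consequently,
\begin{equation*}
\frac{\mathbb{E}[e^{Z_t} r(Z_t)]}{e^{\mu+s^2/2}} = \mathbb{E}[r(Y_\mu)], \qquad Y_\mu \sim \mathcal{N}(\mu+s^2, s^2).
\end{equation*}
Realize $Y_\mu = \mu + s^2 + sG$ with a fixed $G \sim \mathcal{N}(0,1)$. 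Then for every fixed value of $G$ we have $Y_\mu \to -\infty$, hence $r(Y_\mu) \to 1$ pointwise as $\mu \to -\infty$. Since $0 < r \le 1$, the dominated (bounded) convergence theorem gives $\mathbb{E}[r(Y_\mu)] \to 1$. Combining the three factors yields the limit $\exp(s^2/2) = \exp(\sigma_h^2/2)$.

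The only step needing care is passing the limit $\mu \to -\infty$ inside the expectation. A naive approach that bounds $\sigma'(Z_t)$ by $e^{Z_t}$ gives only the upper bound $\limsup \le e^{s^2/2}$. The lower bound is the delicate part, because the region where $Z_t$ is not very negative carries non-negligible weight relative to $e^{\mu}$. The tilting step resolves this: it turns the problem into an expectation of a bounded function under a Gaussian whose mean escapes to $-\infty$, so bounded convergence supplies both bounds at once. No further obstacle is expected.
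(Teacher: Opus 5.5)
Your proof is correct. It differs from the paper's only in how the limit is moved inside the expectation.

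The paper substitutes $Z_t=\mu_t+\sigma_h X$ and writes the ratio as $\mathbb{E}\bigl[e^{\sigma_h X}(1+e^{\mu_t})^2/(1+e^{\mu_t+\sigma_h X})^2\bigr]$, whose integrand converges pointwise to $e^{\sigma_h X}$. It then applies dominated convergence with the dominator $4e^{\sigma_h X}$, valid for $\mu_t\le 0$, which is integrable because its expectation is the lognormal moment $4e^{\sigma_h^2/2}$. The constant appears only at the end, as the Gaussian moment generating function.

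Your Gaussian tilting is the same computation after a change of variables. For $X\sim\mathcal{N}(0,1)$ one has $\mathbb{E}[e^{\sigma_h X}\phi(X)]=e^{\sigma_h^2/2}\,\mathbb{E}[\phi(X+\sigma_h)]$. So your bounded convergence under $\mathcal{N}(\mu_t+\sigma_h^2,\sigma_h^2)$ is the paper's dominated convergence in disguise.

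\begin{itemize}
\item Your version buys a bounded integrand: there is no dominator to verify and no need to restrict to $\mu_t\le 0$. It also extracts the factor $e^{\sigma_h^2/2}$ up front.
\item The paper's version buys brevity, since it needs no change-of-measure identity.
\end{itemize}

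Your closing remark overstates the difficulty. The lower bound is not delicate: Fatou's lemma applied to the nonnegative integrand gives $\liminf \ge \mathbb{E}[e^{\sigma_h X}]=e^{\sigma_h^2/2}$ directly. Dominated convergence with $4e^{\sigma_h X}$ gives both bounds at once, so the tilt is a convenience rather than a necessity.
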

\begin{proof}
The derivative of the logistic sigmoid function is $\sigma'(z) = \frac{e^z}{(1+e^z)^2}$. We express the random variable as $Z_t = \mu_t + \sigma_h X$ where $X \sim \mathcal{N}(0, 1)$. Then
\begin{equation}
\frac{\mathbb{E}[\sigma'(Z_t)]}{\sigma'(\mu_t)} = \frac{\mathbb{E}\left[ \frac{e^{\mu_t + \sigma_h X}}{(1 + e^{\mu_t + \sigma_h X})^2} \right]}{\frac{e^{\mu_t}}{(1+e^{\mu_t})^2}} = \mathbb{E}\left[ e^{\sigma_h X} \frac{(1+e^{\mu_t})^2}{(1 + e^{\mu_t + \sigma_h X})^2} \right]
\end{equation}
Let $g(\mu_t, X) = e^{\sigma_h X} \frac{(1+e^{\mu_t})^2}{(1 + e^{\mu_t + \sigma_h X})^2}$. For any fixed $X \in \mathbb{R}$, taking the limit yields
\begin{equation}
\lim_{\mu_t \to -\infty} g(\mu_t, X) = e^{\sigma_h X} \frac{(1+0)^2}{(1+0)^2} = e^{\sigma_h X}
\end{equation}
We need to  pass the limit inside the expectation. For all $\mu_t \le 0$ and $X \in \mathbb{R}$, we have $(1+e^{\mu_t})^2 \le 4$ and  $(1 + e^{\mu_t + \sigma_h X})^2 \ge 1$. Consequently,
\begin{equation}
|g(\mu_t, X)| \le 4 e^{\sigma_h X}
\end{equation}
The dominating function $4 e^{\sigma_h X}$ is integrable with respect to the standard Gaussian measure, as its expectation corresponds to a scaled log-normal moment
\begin{equation}
\mathbb{E}[4 e^{\sigma_h X}] = 4 \exp \left( \frac{\sigma_h^2}{2} \right) < \infty
\end{equation}
By Lebesgue's Dominated Convergence Theorem, the limit of the expectation is
\begin{equation}
\lim_{\mu_t \to -\infty} \mathbb{E}[g(\mu_t, X)] = \mathbb{E}\left[ \lim_{\mu_t \to -\infty} g(\mu_t, X) \right] = \mathbb{E}[e^{\sigma_h X}]
\end{equation}
Evaluating the moment generating function of the standard normal distribution at $\sigma_h$ completes the proof:
\begin{equation}
\mathbb{E}[e^{\sigma_h X}] = \exp \left( \frac{\sigma_h^2}{2} \right).
\end{equation}
\end{proof}

Given that we argued $\sigma_h^2 = \mathcal{O}(\frac{\|x\|_2^2}{d_0})$ previously, we expect the approximation does not degrade significantly as training progresses.

The requirements for the above proposition are further justified by the following result.

\begin{proposition}
The random variable $\sigma'(Z)$ exhibits sub-Gaussian concentration around its mean
\begin{equation}
\mathbb{P} \left( \big| \sigma'(Z) - \mathbb{E}[\sigma'(Z)] \big| \ge \delta \right) \le 2 \exp \left( - \frac{54 \delta^2}{\sigma_h^2} \right)
\end{equation}
\end{proposition}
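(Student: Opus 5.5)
The plan is to apply the Gaussian concentration inequality for Lipschitz functions (the Tsirelson--Ibragimov--Sudakov / Borell inequality) to $f(z)=\sigma'(z)$. The relevant form is: if $Z\sim\mathcal{N}(0,\sigma_h^2)$ and $f:\mathbb{R}\to\mathbb{R}$ is $L$-Lipschitz, then
\begin{equation*}
\mathbb{P}\left(\big|f(Z)-\mathbb{E}[f(Z)]\big|\ge\delta\right)\le 2\exp\left(-\frac{\delta^2}{2L^2\sigma_h^2}\right).
\end{equation*}
Writing $Z=\sigma_h X$ with $X$ standard normal, $z\mapsto f(\sigma_h z)$ is $L\sigma_h$-Lipschitz, so the scaled form follows from the standard-normal version. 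The statement then reduces to showing that the sharp Lipschitz constant of $\sigma'$ satisfies $L^2=1/108$, since $1/(2L^2)=54$.

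To compute $L=\sup_z|\sigma''(z)|$, I will use the identities $\sigma'=\sigma(1-\sigma)$ and $\sigma''=\sigma(1-\sigma)(1-2\sigma)$. Setting $s=\sigma(z)\in(0,1)$, the task becomes maximizing $|p(s)|$ with $p(s)=s(1-s)(1-2s)=s-3s^2+2s^3$. The critical points solve $p'(s)=1-6s+6s^2=0$, that is, $s=(3\pm\sqrt3)/6$. At $s=(3-\sqrt3)/6$ we get $s(1-s)=1/6$ and $1-2s=\sqrt3/3$, so $p(s)=\sqrt3/18=1/(6\sqrt3)$. The other root gives the negative of this value by the symmetry $p(1-s)=-p(s)$, and $p\to 0$ at the endpoints. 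Hence $L=1/(6\sqrt3)$ and $L^2=1/108$. Since $\sigma'$ is smooth, the mean value theorem gives $|\sigma'(z)-\sigma'(z')|\le L|z-z'|$.

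Substituting into the concentration bound gives $2\exp(-54\delta^2/\sigma_h^2)$, as claimed. The only delicate point is that the statement requires the constant $2$ in the denominator of the exponent, and hence the sharp Gaussian isoperimetric form of the inequality. The weaker versions sometimes quoted, for example those with $\pi^2/8$ from Maurey--Pisier, would give a worse constant than $54$. I will therefore cite the sharp form explicitly rather than reprove it. As in the preceding propositions, I will also state that $Z$ is taken to be exactly Gaussian, $Z\sim\mathcal{N}(0,\sigma_h^2)$, in the infinite-width limit. The same argument applies verbatim to $Z_t\sim\mathcal{N}(\mu_t,\sigma_h^2)$, because Lipschitz concentration is translation invariant; this is what supports the assumption of the preceding proposition.
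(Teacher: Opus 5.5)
Your proposal is correct and follows the paper's proof: bound the Lipschitz constant of $\sigma'$ by $\sup_z|\sigma''(z)| = 1/(6\sqrt{3})$, then apply the Gaussian Lipschitz concentration inequality (Wainwright, Theorem 2.26) with constant $(2L^2)^{-1}=54$. Your explicit critical-point computation of $L$ and the rescaling $Z=\sigma_h X$, which produces the $\sigma_h^2$ in the exponent, spell out steps the paper leaves implicit.
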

\begin{proof}
The second derivative of the sigmoid function $\sigma''(z) = \sigma(z)(1-\sigma(z))(1-2\sigma(z))$ admits a global supremum norm $L = \sup_{z \in \mathbb{R}} |\sigma''(z)| = \frac{1}{6\sqrt{3}}$. Therefore, the function $\sigma': \mathbb{R} \to \mathbb{R}$ is uniformly $L$-Lipschitz continuous. Applying the Gaussian Lipschitz concentration inequality (see Chapter 2.3 and Theorem 2.26 in the book ``High-Dimensional Statistics A Non-Asymptotic Viewpoint'' by Martin J. Wainwright) for the function $\sigma'$ evaluated on the Gaussian random variable $Z$ yields the stated exponential tail bound with constant $(2L^2)^{-1} = 54$.
\end{proof}

\subsubsection{Verification of Simplicity Bias Through a Toy Dataset}

Our theoretical analysis on the simplicity bias condition relies on the simplicity bias condition $\beta^s(2\alpha-1) > \beta^c$ which conflates feature strength with data imbalance. To demonstrate how minority group learning dynamics change under varying $\beta^s/\beta^c$ and $\alpha$, we conduct experiments on a toy setup using a synthetic dataset that satisfies our theoretical setting and assumptions. We visualize the learning behavior of a linear model trained with gradient descent under the same setting used in our theoretical analysis.

We use a synthetic dataset $\mathcal{D}$ to visualize minority losses while varying $\beta_s/\beta_c$ and $\alpha$. We adopt the setting in our theoretical study: each sample has label $y\in\{-1,+1\}$ and spurious attribute $a\in\{-1,+1\}$ satisfying $\alpha=P(a=y)$. The input is
\[
x=
\begin{bmatrix}
\beta_c y\\
\beta_s a
\end{bmatrix},
\]
where $\beta_c$ is the core feature strength and $\beta_s$ is the spurious feature strength. This creates majority groups with $a=y$ and minority groups with $a\ne y$. 
We train a linear model $\hat{y}=x^\top w=w_c\beta_c y+w_s\beta_s a$ with full-batch gradient descent while varying $\beta_s/\beta_c$ and $\alpha$. After training, we measure the peak loss among all minority samples. Figure \ref{fig:toy_learning_setup} shows a consistent transition in learned behavior depending on whether $\beta_s(2\alpha-1)>\beta_c$ holds, and it also captures the boundary cases, verifying the simplicity bias condition.

\begin{figure}
    \centering
    \includegraphics[width=\linewidth]{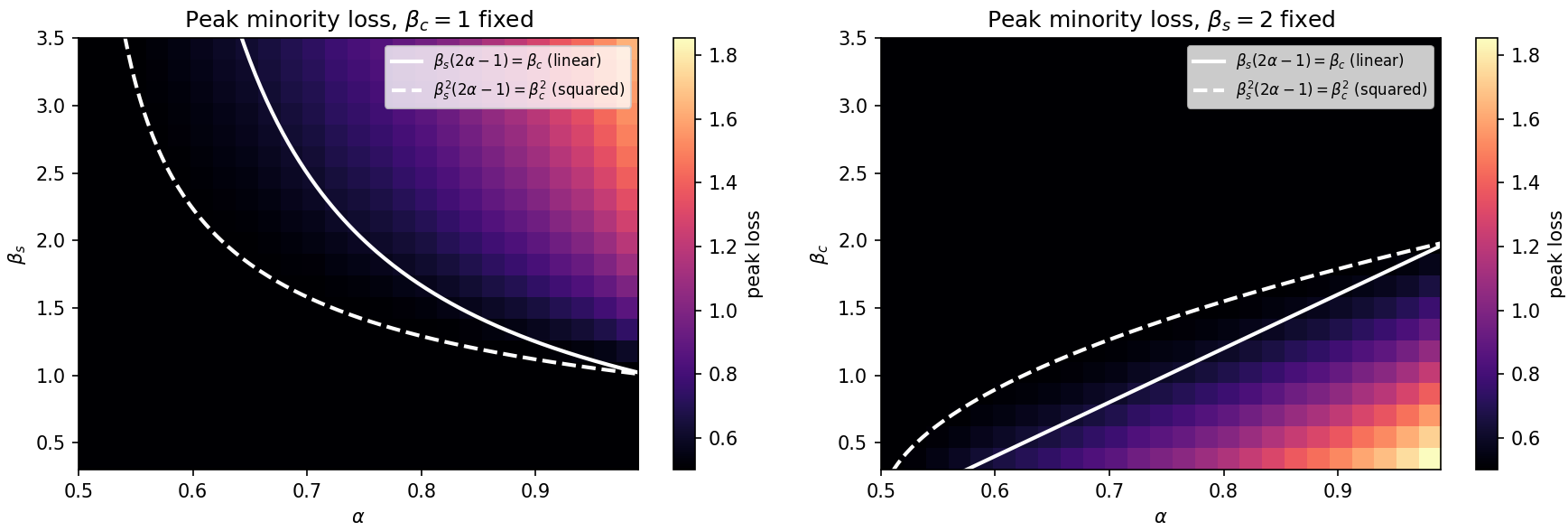}
    \caption{Learning behavior and simplicity bias analysis on a synthetic toy dataset.}
    \label{fig:toy_learning_setup}
\end{figure}

%%%% RELAXATION TO MULTIPLE SPURIOUS CORRELATIONS

\subsubsection{Relaxation to Multiple Spurious Correlations}

Here we extend the theoretical analysis of the homogeneous spiked model to accommodate a dataset where each sample is associated with multiple distinct spurious attributes. Let the training dataset $\mathcal{D}$ of size $n$ consist of samples $(x_i, y_i)$ where $y_i \in \{\pm 1\}$ represents the target label. Each sample $x_i$ contains a core feature vector $x_i^c$ and $M$ independent spurious feature vectors $x_i^{s,m}$ for $m \in \{1, \dots, M\}$. We denote the corresponding unobserved binary spurious attributes by the vector $\mathbf{a}_i \in \{\pm 1\}^M$ where the $m$-th element is $a_{i,m}$. We quantify the consistency of each spurious attribute within the dataset by defining the scalar $\alpha_m \in (0.5, 1]$ as the fraction of samples satisfying $a_{i,m} = y_i$.

\begin{assumption}
The neural network architecture permits an additive decomposition of the output logit into a core subnetwork and $M$ distinct spurious subnetworks
\begin{equation}
h(x; \mathbf{W}) = h_c(x^c; \mathbf{W}^c) + \sum_{m=1}^M h_{s,m}(x^{s,m}; \mathbf{W}^{s,m})
\end{equation}
where the parameter sets $\mathbf{W}^c$ and $\mathbf{W}^{s,m}$ for $m \in \{1, \dots, M\}$ are mutually disjoint.
\end{assumption}

By the linearity of the gradient with respect to disjoint parameter subsets, the NTK decomposes additively into $M+1$ distinct components evaluated on their respective feature subspaces
\begin{equation}
K(x, x') = K_c(x^c, x'^c) + \sum_{m=1}^M K_{s,m}(x^{s,m}, x'^{s,m})
\end{equation}
where $K_c$ and $K_{s,m}$ represent the tangent kernels of the core and spurious subnetworks.

\begin{assumption}
The kernel components exhibit a uniform rank-1 spiked structure corresponding to the latent data generation process
\begin{equation}
K_c(x_j^c, x_i^c) = \beta^c y_j y_i
\end{equation}
and
\begin{equation}
K_{s,m}(x_j^{s,m}, x_i^{s,m}) = \beta^{s,m} a_{j,m} a_{i,m}
\end{equation}
where the scalars $\beta^c > 0$ and $\beta^{s,m} > 0$ capture the inherent feature strengths of the core and spurious components respectively.
\end{assumption}

We assume symmetric initialization such that the expected initial output of the network is zero. Under gradient flow with the logistic loss function, the exact initial dynamics of the expected logits are governed by the dataset statistics and the isolated kernel strengths.

\begin{theorem}
Let $\bar{h}_t(x_j) = \mathbb{E}_{\mathbf{W}_0}[h(x_j; \mathbf{W}_t)]$ denote the expected logit for sample $j$ at time $t$. The initial expected velocities of the subnetworks are given by
\begin{equation}
\left. \frac{\partial \bar{h}_t^c(x_j)}{\partial t} \right|_{t=0} = \frac{\beta^c}{2} y_j
\end{equation}
and
\begin{equation}
\left. \frac{\partial \bar{h}_t^{s,m}(x_j)}{\partial t} \right|_{t=0} = \frac{\beta^{s,m} (2\alpha_m - 1)}{2} a_{j,m}
\end{equation}
for all $m \in \{1, \dots, M\}$.
\end{theorem}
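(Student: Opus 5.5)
The argument is a direct replay of the proof of Theorem~\ref{thm:homo_velocity}, with the spurious block split into $M$ pieces. Only two ingredients change: the kernel decomposition and the inner product $\sum_i a_{i,m} y_i$.

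First, I would establish the additive kernel decomposition. The proof of Proposition~\ref{prop:decomp_kernel} carries over verbatim. Since the parameter sets $\mathbf{W}^c,\mathbf{W}^{s,1},\dots,\mathbf{W}^{s,M}$ are mutually disjoint, the full gradient is the concatenation $[\nabla_{\mathbf{W}^c}h_c;\nabla_{\mathbf{W}^{s,1}}h_{s,1};\dots;\nabla_{\mathbf{W}^{s,M}}h_{s,M}]$, because all cross-derivatives vanish. The inner product therefore splits as $K=K_c+\sum_m K_{s,m}$. The same splitting holds for the velocity of each component logit: under the NTK flow \eqref{eq:flow_ode}, the time derivative of $h_{s,m}(x_j;\mathbf{W}_t)$ only involves $\nabla_{\mathbf{W}^{s,m}}h_{s,m}$. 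This gives
\begin{equation*}
\frac{\partial h_{s,m}(x_j;\mathbf{W}_t)}{\partial t}=-\frac1n\sum_{i=1}^n K_{s,m}(x_j^{s,m},x_i^{s,m})\,g_i(t),
\end{equation*}
and the analogous identity with $K_c$ for the core logit. Here $g_i$ is still the derivative of the loss with respect to the total logit $h$.

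Next, I would take expectations at $t=0$. The kernels are deterministic, so by linearity I can pull them out. By symmetric initialization, the total logit $h(x_i;\mathbf{W}_0)$ is still a centered symmetric Gaussian (it is a sum of independent centered components), so Proposition~\ref{prop:init_grad} applies unchanged and gives $\mathbb{E}[g_i(0)]=-y_i/2$. Hence
\begin{equation*}
\frac{\partial\bar h^{s,m}_t(x_j)}{\partial t}\Big|_{t=0}=\frac1{2n}\sum_{i=1}^n K_{s,m}(x_j^{s,m},x_i^{s,m})\,y_i,
\end{equation*}
and similarly for the core component.

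Finally, I would plug in the spiked structure. For the core, $\frac{1}{2n}\sum_i \beta^c y_j y_i^2=\frac{\beta^c}{2}y_j$. For the $m$-th spurious component, I get $\frac{\beta^{s,m}a_{j,m}}{2n}\sum_i a_{i,m}y_i$. By the definition of $\alpha_m$, a fraction $\alpha_m$ of the terms $a_{i,m}y_i$ equal $+1$ and the rest equal $-1$, so the sum is $n\alpha_m-n(1-\alpha_m)=n(2\alpha_m-1)$. This yields the stated formula.

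The only delicate step is justifying that Proposition~\ref{prop:init_grad} still applies when the output is a sum of $M+1$ subnetworks. I would handle it by noting that Assumption~\ref{assum:sym_init} is imposed on the whole network output, so the symmetry of $h(x_i;\mathbf{W}_0)$ about zero holds directly, and no independence between the attributes $a_{\cdot,m}$ is needed.
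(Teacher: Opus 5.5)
Your proposal is correct and follows the paper's proof. Both use $\dot{\bar{\mathbf h}}_0=\frac{1}{2n}K\mathbf y$ (via Proposition~\ref{prop:init_grad}), split $K$ into $K_c+\sum_m K_{s,m}$, and evaluate $\mathbf y^\top\mathbf y=n$ and $\mathbf a_m^\top\mathbf y=n(2\alpha_m-1)$; your explicit derivation of each subnetwork's velocity from its own parameter block is just a more careful version of the paper's step of projecting $\mathbf y$ onto the decomposed kernels.
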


\begin{proof}
The time derivative of the expected logit evaluated at $t=0$ under the specified flow ordinary differential equation is $\dot{\bar{\mathbf{h}}}_0 = \frac{1}{2n} K \mathbf{y}$. We project the vector $\mathbf{y} \in \mathbb{R}^n$ onto the respective decomposed kernel matrices. For the core subnetwork we have
\begin{equation}
\left. \frac{\partial \bar{\mathbf{h}}_t^c}{\partial t} \right|_{t=0} = \frac{1}{2n} (\beta^c \mathbf{y} \mathbf{y}^\top) \mathbf{y} = \frac{\beta^c}{2n} \mathbf{y} (\mathbf{y}^\top \mathbf{y}) = \frac{\beta^c}{2} \mathbf{y}
\end{equation}
since $\mathbf{y}^\top \mathbf{y} = \sum_{i=1}^n y_i^2 = n$. For the $m$-th spurious subnetwork we project $\mathbf{y}$ onto the rank-1 spurious matrix parameterized by the vector $\mathbf{a}_m \in \mathbb{R}^n$
\begin{equation}
\left. \frac{\partial \bar{\mathbf{h}}_t^{s,m}}{\partial t} \right|_{t=0} = \frac{1}{2n} (\beta^{s,m} \mathbf{a}_m \mathbf{a}_m^\top) \mathbf{y} = \frac{\beta^{s,m}}{2n} \mathbf{a}_m (\mathbf{a}_m^\top \mathbf{y})
\end{equation}
The inner product evaluates to $\mathbf{a}_m^\top \mathbf{y} = \sum_{i=1}^n a_{i,m} y_i$. By the definition of $\alpha_m$, the number of samples where $a_{i,m} = y_i$ is $\alpha_m n$ and the number where $a_{i,m} = -y_i$ is $(1-\alpha_m)n$. Hence the summation equals $n(2\alpha_m - 1)$. Substitution yields the exact velocity element-wise.
\end{proof}

We formalize the generalized simplicity bias phenomenon by evaluating the initial derivative of the true expected margin $\bar{m}_t(x_j) = \mathbb{E}_{\mathbf{W}_0}[y_j h(x_j; \mathbf{W}_t)]$. The evolution of the margin demonstrates how intersecting group alignments either accelerate or hinder the correct classification of any sample $j$.

\begin{theorem}
Define the disjoint index sets $\mathcal{M}_{align}(j) = \{m \mid y_j a_{j,m} = 1\}$ and $\mathcal{M}_{conflict}(j) = \{m \mid y_j a_{j,m} = -1\}$. The expected margin for sample $j$ initially decreases and the corresponding cross-entropy loss  increases if and only if
\begin{equation}
\sum_{m \in \mathcal{M}_{conflict}(j)} \beta^{s,m}(2\alpha_m - 1) > \beta^c + \sum_{m \in \mathcal{M}_{align}(j)} \beta^{s,m}(2\alpha_m - 1)
\end{equation}
\end{theorem}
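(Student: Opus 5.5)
The argument follows the proof of Theorem~\ref{thm:homo_loss_divergence}, applied to the multi-attribute decomposition. First, symmetric initialization (Assumption~\ref{assum:sym_init}) gives $\bar{h}_0(x_j)=0$, so the initial expected margin is $\bar{m}_0(x_j)=0$. Second, since $\bar{m}_t(x_j)=y_j\big(\bar{h}^c_t(x_j)+\sum_{m=1}^M \bar{h}^{s,m}_t(x_j)\big)$, differentiating at $t=0$ and substituting the velocities from the preceding theorem gives
\begin{equation*}
\left.\frac{\partial \bar{m}_t(x_j)}{\partial t}\right|_{t=0} = \frac{\beta^c}{2}y_j^2 + \sum_{m=1}^M \frac{\beta^{s,m}(2\alpha_m-1)}{2}\, y_j a_{j,m}.
\end{equation*}
Third, with $y_j^2=1$ and each $y_j a_{j,m}\in\{\pm1\}$, the index set $\{1,\dots,M\}$ splits into the disjoint sets $\mathcal{M}_{align}(j)$ and $\mathcal{M}_{conflict}(j)$. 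The velocity therefore equals
\begin{equation*}
\frac{1}{2}\Big[\beta^c + \sum_{m\in\mathcal{M}_{align}(j)}\beta^{s,m}(2\alpha_m-1) - \sum_{m\in\mathcal{M}_{conflict}(j)}\beta^{s,m}(2\alpha_m-1)\Big].
\end{equation*}
This is strictly negative exactly when the stated inequality holds. The ``if and only if'' is at the level of the sign of the initial derivative.

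Fourth, the loss follows from the margin. The logistic loss $\ell(\bar m)=\log(1+e^{-\bar m})$ is strictly decreasing and smooth, so by the chain rule $\frac{d}{dt}\ell(\bar m_t)\big|_{t=0}=\ell'(0)\,\dot{\bar m}_0=-\tfrac12\dot{\bar m}_0$. The loss derivative thus has the opposite sign of the margin derivative. A negative initial velocity of the margin is therefore equivalent to a positive initial velocity of the loss. As in Theorem~\ref{thm:homo_loss_divergence}, continuity of $\dot{\bar m}_t$ then gives some $T>0$ such that $\bar m_t(x_j)<0$ and $\ell>\log 2$ on $(0,T)$.

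The main obstacle is interpreting ``initially decreases'' so that the converse direction holds. In the boundary case where the two sides of the inequality are equal, the first derivative vanishes, and the behaviour is governed by higher-order terms such as the acceleration of Theorem~\ref{thm:homo_acceleration}. I would avoid this by stating the equivalence for the sign of $\partial_t\bar m_t|_{t=0}$, and hence of $\partial_t\ell|_{t=0}$. Read that way, the claim is exactly a strict-inequality criterion, and no analysis of the boundary case is needed.
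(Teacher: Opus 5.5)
Your proposal is correct and follows the same route as the paper. You compute $\partial_t \bar m_t(x_j)|_{t=0}$ from the subnetwork velocities, split the sum over $\mathcal{M}_{align}(j)$ and $\mathcal{M}_{conflict}(j)$, and read off the sign. Your chain-rule step $\frac{d}{dt}\ell(\bar m_t)|_{t=0} = -\tfrac12 \dot{\bar m}_0$ and your remark that the equivalence concerns the sign of the initial derivative, which excludes the equality case, spell out what the paper's one-line claim about the loss leaves implicit.
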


\begin{proof}
We compute the initial time derivative of the expected margin by multiplying the total initial logit velocity by $y_j$
\begin{equation}
\left. \frac{\partial \bar{m}_t(x_j)}{\partial t} \right|_{t=0} = y_j \left( \frac{\beta^c}{2} y_j + \sum_{m=1}^M \frac{\beta^{s,m}(2\alpha_m - 1)}{2} a_{j,m} \right)
\end{equation}
Applying $y_j^2 = 1$ and distributing $y_j$ into the summation yields
\begin{equation}
\left. \frac{\partial \bar{m}_t(x_j)}{\partial t} \right|_{t=0} = \frac{\beta^c}{2} + \sum_{m=1}^M \frac{\beta^{s,m}(2\alpha_m - 1)}{2} y_j a_{j,m}
\end{equation}
We partition the summation over the sets $\mathcal{M}_{align}(j)$ and $\mathcal{M}_{conflict}(j)$ where $y_j a_{j,m}$ takes values of $1$ and $-1$ respectively
\begin{equation}
\left. \frac{\partial \bar{m}_t(x_j)}{\partial t} \right|_{t=0} = \frac{\beta^c}{2} + \sum_{m \in \mathcal{M}_{align}(j)} \frac{\beta^{s,m}(2\alpha_m - 1)}{2} - \sum_{m \in \mathcal{M}_{conflict}(j)} \frac{\beta^{s,m}(2\alpha_m - 1)}{2}
\end{equation}
The loss function  increases at initialization if and only if the margin derivative is  negative. Setting the right-hand side to be less than zero directly yields the stated condition.
\end{proof}

%%%%% BACKGROUND INFORMATION

\subsection{Background on the NTK and Spiked Covariance Model}

Below, we provide a brief introduction to NTK and spiked covariance models to make our theoretical analysis more accessible.

\subsubsection{Background on the NTK}
\begin{definition}
Let $h(x; \mathbf{W})$ denote the scalar output of a neural network parameterized by the weight vector $\mathbf{W} \in \mathbb{R}^P$. The NTK evaluates the inner product of the gradients of the network output with respect to its parameters evaluated at two inputs $x$ and $x'$
\begin{equation}
K(x, x'; \mathbf{W}) = \nabla_{\mathbf{W}} h(x; \mathbf{W})^\top \nabla_{\mathbf{W}} h(x'; \mathbf{W})
\end{equation}
\end{definition}

The key idea of the NTK framework is that as the width of the hidden layers approaches infinity, this empirical kernel converges to a fixed limit.

\begin{theorem}
In the infinite-width limit, under symmetric initialization of the parameters $\mathbf{W}_0$, the empirical kernel $K(x, x'; \mathbf{W}_t)$ converges in probability to a deterministic,  positive semi-definite kernel $K(x, x')$. Furthermore, this kernel remains time-invariant throughout the optimization process.
\end{theorem}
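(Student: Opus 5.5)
The plan is to follow the two-part argument of \citep{NTK_1}: first establish convergence of the kernel at initialization, then show that it does not move during training. I work with the fully connected architecture $z^{(l)} = W^{(l)} a^{(l-1)}$, $a^{(l)}=\phi(z^{(l)})$ introduced above. I rewrite it in the NTK parameterization, $z^{(l)} = \tfrac{1}{\sqrt{d_{l-1}}} W^{(l)} a^{(l-1)}$ with $W^{(l)}_{ij}\sim\mathcal{N}(0,1)$ i.i.d.; this is symmetric, so Assumption~\ref{assum:sym_init} holds. I assume $\phi$ is Lipschitz and twice differentiable with bounded second derivative. For ReLU I would either approximate it by smooth activations or handle the kink separately. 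I take the widths $d_1,\dots,d_L\to\infty$ sequentially, as in \citep{NTK_1}.

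\textbf{Step 1 (convergence at initialization).} I proceed by induction on depth. The inductive hypothesis is that each pre-activation $z^{(l)}_i$ converges to an i.i.d.\ centered Gaussian process with covariance $\Sigma^{(l)}$, where
\[
\Sigma^{(l+1)}(x,x') = \mathbb{E}_{f\sim\mathcal{N}(0,\Sigma^{(l)})}[\phi(f(x))\phi(f(x'))].
\]
This step uses the CLT over the $d_l$ independent summands. Next I split $\nabla_{\mathbf W} h$ into the contributions of the last layer and of the earlier layers. The last-layer part gives $\tfrac{1}{d_L}\sum_j \phi(z_j^{(L)}(x))\phi(z_j^{(L)}(x'))$, which converges to $\Sigma^{(L+1)}$ by the law of large numbers. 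The earlier-layer part factors through the chain rule and yields the recursion
\[
\Theta^{(l+1)} = \Theta^{(l)}\,\dot\Sigma^{(l+1)} + \Sigma^{(l+1)},\qquad \dot\Sigma^{(l+1)}(x,x')=\mathbb{E}[\dot\phi(f(x))\dot\phi(f(x'))].
\]
Again by the LLN, the empirical kernel converges in probability to this deterministic $K=\Theta^{(L+1)}$.

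Positive semi-definiteness is immediate. For every finite width and any points $x_1,\dots,x_n$, the matrix $[K(x_i,x_j;\mathbf W)]$ is a Gram matrix of gradient vectors, and the PSD cone is closed under limits in probability.

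\textbf{Step 2 (time invariance).} This is the main obstacle, because the convergence must hold uniformly over $t\in[0,T]$ while the weights are no longer independent. My plan is to show that under gradient flow on $\mathcal{L}$, the parameters of each hidden layer move by $O(1/\sqrt{d})$ in rescaled norm on any fixed horizon $[0,T]$.

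The key inputs are two bounds. First, the logistic loss derivative satisfies $|g_i(t)|\le 1$ (see Proposition~\ref{prop:init_grad}), so the function-space drive is bounded independently of width. Second, $\|\nabla_{\mathbf W} h\|$ is $O(1)$ in total but spread over $\Theta(d)$ coordinates. Then $\|\dot{\mathbf W}_t\| \le \tfrac{1}{n}\sum_i|g_i|\,\|\nabla_{\mathbf W}h(x_i)\|$. Combining this with Grönwall's inequality, applied to the layerwise operator norms $\tfrac{1}{\sqrt d}\|W^{(l)}_t-W^{(l)}_0\|_{op}$, should give $\sup_{t\le T}\tfrac{1}{\sqrt d}\|W^{(l)}_t-W^{(l)}_0\|_{op}\to 0$. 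I then propagate this through the network, using Lipschitzness of $\phi$ and $\dot\phi$, to conclude that the activations and backpropagated gradients change by $o(1)$ uniformly on $[0,T]$. Hence $\sup_{t\le T}|K(x,x';\mathbf W_t)-K(x,x';\mathbf W_0)|\to 0$ in probability, and together with Step 1 the limit kernel is the same deterministic $K$ for all $t$.

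The delicate point is closing the Grönwall loop: the operator-norm bounds at initialization, of order $\sqrt d$ and holding with high probability, must feed back into the growth rate without width-dependent blow-up. I would first attempt this layer by layer, in the same sequential-limit order used in Step 1.
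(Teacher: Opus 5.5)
The paper gives no proof of this statement; it is imported from \citep{NTK_1} (Theorems 1 and 2 there). Your plan follows that source's two-step strategy and adds the right hypotheses. Step 1 and the PSD argument are fine, provided your inductive hypothesis also records that the kernel of the depth-$l$ subnetwork with $d_l$ outputs tends to $\Theta^{(l)}\otimes\mathrm{Id}_{d_l}$; this is what kills the $i\neq i'$ cross terms in the recursion. Also, the switch to the NTK parameterization is a necessary change of dynamics, not a rewriting: under the He parameterization used earlier in the appendix, the last-layer term $\|a^{(L)}\|_2^2$ of $K$ already grows like $d_L$.

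The gap is in Step 2, and it is not the Grönwall loop. With $|g_i|\le 1$, a stopping-time argument closes that loop directly. While every rescaled deviation $d^{-1/2}\|W^{(l)}_t-W^{(l)}_0\|_{op}$ stays below $1$, each layer gradient $d_{l-1}^{-1/2}\delta^{(l)}a^{(l-1)\top}$ has Frobenius norm $O(1)$. So the deviations are $O(T/\sqrt{d})$ and never reach the threshold on $[0,T]$.

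What fails is your step ``propagate using Lipschitzness of $\phi$ and $\dot\phi$'' for the backward vectors. Write $K=\sum_l d_{l-1}^{-1}\langle a^{(l-1)}(x),a^{(l-1)}(x')\rangle\langle\delta^{(l)}(x),\delta^{(l)}(x')\rangle$ with $\delta^{(l)}=\dot\phi(z^{(l)})\odot u^{(l)}$ and $u^{(l)}=d_l^{-1/2}W^{(l+1)\top}\delta^{(l+1)}$; both vectors are of order one in $\ell_2$. Your bounds give $d^{-1/2}\|z_t-z_0\|_2\to 0$, but $\|z_t-z_0\|_2$ itself is generically of order one. Already $z^{(1)}_t-z^{(1)}_0=d_0^{-1/2}(W^{(1)}_t-W^{(1)}_0)x$, with $d_0$ fixed and $\|W^{(1)}_t-W^{(1)}_0\|_F$ of order one. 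Hence $\|(\dot\phi(z_t)-\dot\phi(z_0))\odot u_t\|_2$ is only bounded by $\|\ddot\phi\|_\infty\min\{\|z_t-z_0\|_\infty\|u_t\|_2,\ \|z_t-z_0\|_2\|u_t\|_\infty\}$. That is $O(1)$ unless $z_t-z_0$ or $u_t$ is known to be delocalized, and operator-norm bounds cannot supply this. So $\langle\delta^{(l)}(x),\delta^{(l)}(x')\rangle$, and with it $K$, is not shown to be constant.

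There are two standard repairs.

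The first is to add a delocalization estimate $\|u^{(l)}_0\|_\infty=O(\sqrt{\log d/d})$ w.h.p. This is immediate for $l=L$, where $u^{(L)}=d_L^{-1/2}W^{(L+1)\top}$, and needs a Gaussian-conditioning argument below that. Then run a top-down induction using $\|u^{(l)}_t\|_\infty\le\|u^{(l)}_0\|_\infty+\|u^{(l)}_t-u^{(l)}_0\|_2$ and $\|u^{(l)}_t-u^{(l)}_0\|_2\le d^{-1/2}\|W^{(l+1)}_t-W^{(l+1)}_0\|_{op}\|\delta^{(l+1)}_t\|_2+d^{-1/2}\|W^{(l+1)}_0\|_{op}\|\delta^{(l+1)}_t-\delta^{(l+1)}_0\|_2$.

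The second is what \citep{NTK_1} does in the sequential limit. Take as inductive hypothesis the time-invariance statement itself for a depth-$L$ network with $d_L$ outputs, trained along an \emph{arbitrary} function-space direction $\xi_t$ with stochastically bounded $\int_0^T\|\xi_t\|\,dt$. This generalization is forced: the lower layers are driven by $d_L^{-1/2}\dot\phi(z^{(L)})\odot W^{(L+1)\top}\xi_t$, not by a logistic-loss gradient, so $|g_i|\le 1$ only enters at the top. The lower-layer kernel is then already the constant $\Theta^{(L)}\otimes\mathrm{Id}_{d_L}$. The full kernel becomes a neuron average, $d_L^{-1}\sum_i\bigl[\phi(z_i(x))\phi(z_i(x'))+(W^{(L+1)}_i)^2\dot\phi(z_i(x))\dot\phi(z_i(x'))\Theta^{(L)}(x,x')\bigr]$. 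Cauchy--Schwarz bounds its change by $O\bigl(d_L^{-1/2}(\|z_t-z_0\|_2+\|W^{(L+1)}_t-W^{(L+1)}_0\|_2)\bigr)$. Only rescaled $\ell_2$ control is needed there, so the backward-vector problem never arises.
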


This time-invariance property implies that the highly non-linear parameter space optimization maps to a linear functional space optimization. To formalize the learning dynamics, one typically approximates discrete stochastic gradient descent via continuous-time gradient flow, as done in our paper as well. 

\begin{definition}
Let $\mathcal{L}(\mathbf{W}_t) = \frac{1}{n} \sum_{i=1}^n \ell(y_i, h(x_i; \mathbf{W}_t))$ define the empirical risk over a dataset of size $n$, where $\ell(\cdot, \cdot)$ is a differentiable loss function. Continuous-time gradient flow dictates the parameter evolution according to the ordinary differential equation
\begin{equation}
\frac{d \mathbf{W}_t}{d t} = -\nabla_{\mathbf{W}} \mathcal{L}(\mathbf{W}_t)
\end{equation}
\end{definition}

By applying the chain rule, we map the evolution of the network parameters to the evolution of the network outputs (logits) for any given input $x$.

\begin{theorem}
Under gradient flow in the NTK regime, the exact evolution of the expected network output $\bar{h}_t(x) = \mathbb{E}_{\mathbf{W}_0}[h(x; \mathbf{W}_t)]$ is governed by the deterministic equation
\begin{equation}
\frac{\partial \bar{h}_t(x)}{\partial t} = -\frac{1}{n} \sum_{i=1}^n K(x, x_i) \mathbb{E}_{\mathbf{W}_0} \left[ \frac{\partial \ell(y_i, h(x_i; \mathbf{W}_t))}{\partial h(x_i; \mathbf{W}_t)} \right]
\end{equation}
\end{theorem}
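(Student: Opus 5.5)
The plan is to derive the function-space equation from the parameter-space gradient flow using the chain rule, and then pass the expectation over initialization through the time derivative. Fix an input $x$ and a realization of $\mathbf{W}_0$. Differentiating $h(x;\mathbf{W}_t)$ along the trajectory of the gradient-flow ODE in the preceding definition gives
\begin{equation*}
\frac{\partial h(x;\mathbf{W}_t)}{\partial t} = \nabla_{\mathbf{W}} h(x;\mathbf{W}_t)^\top \frac{d\mathbf{W}_t}{dt} = -\nabla_{\mathbf{W}} h(x;\mathbf{W}_t)^\top \nabla_{\mathbf{W}}\mathcal{L}(\mathbf{W}_t).
\end{equation*}
Expanding $\nabla_{\mathbf{W}}\mathcal{L}(\mathbf{W}_t) = \frac{1}{n}\sum_{i=1}^n \frac{\partial \ell(y_i,h(x_i;\mathbf{W}_t))}{\partial h(x_i;\mathbf{W}_t)}\,\nabla_{\mathbf{W}} h(x_i;\mathbf{W}_t)$ by the chain rule, and recognizing the inner products $\nabla_{\mathbf{W}} h(x;\mathbf{W}_t)^\top \nabla_{\mathbf{W}} h(x_i;\mathbf{W}_t)$ as the empirical kernel $K(x,x_i;\mathbf{W}_t)$ of the NTK definition, yields the pathwise identity
\begin{equation*}
\frac{\partial h(x;\mathbf{W}_t)}{\partial t} = -\frac{1}{n}\sum_{i=1}^n K(x,x_i;\mathbf{W}_t)\,\frac{\partial \ell(y_i,h(x_i;\mathbf{W}_t))}{\partial h(x_i;\mathbf{W}_t)}.
\end{equation*}
This is valid for any finite width.

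Next I would invoke the preceding NTK convergence theorem. In the infinite-width limit, $K(x,x_i;\mathbf{W}_t)$ is replaced by the deterministic, time-invariant kernel $K(x,x_i)$, which recovers \eqref{eq:flow_ode} for each realization of the initialization. The only randomness left on the right-hand side is then in the loss derivatives, through the random initial function $h(\cdot;\mathbf{W}_0)$ and hence through $h(x_i;\mathbf{W}_t)$.

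Then I would take $\mathbb{E}_{\mathbf{W}_0}$ of both sides. Because $K(x,x_i)$ is deterministic, it can be pulled out of the expectation by linearity. This gives $\mathbb{E}_{\mathbf{W}_0}\bigl[\partial_t h(x;\mathbf{W}_t)\bigr]$ equal to the claimed right-hand side.

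The main obstacle is the interchange $\partial_t\,\mathbb{E}_{\mathbf{W}_0}[h(x;\mathbf{W}_t)] = \mathbb{E}_{\mathbf{W}_0}[\partial_t h(x;\mathbf{W}_t)]$. I would handle it with dominated convergence applied to difference quotients, or equivalently by writing $h(x;\mathbf{W}_t) = h(x;\mathbf{W}_0) + \int_0^t \partial_s h(x;\mathbf{W}_s)\,ds$ and applying Fubini. For the logistic loss, $|\partial\ell/\partial h| = \sigma(-y_ih)\le 1$, so the integrand is bounded uniformly in $t$ and in $\mathbf{W}_0$ by $\frac{1}{n}\sum_i |K(x,x_i)|$, which is a finite constant. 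It remains to check that $h(x;\mathbf{W}_0)$ is integrable, which holds since it is Gaussian by the first NTK consequence listed above. Together these justify Fubini, and differentiating the resulting integral identity in $t$, with continuity of the bounded integrand, completes the argument.

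For a general differentiable loss, I would instead assume $\partial\ell/\partial h$ is bounded, or has a Gaussian-integrable envelope along the flow, and run the same argument.
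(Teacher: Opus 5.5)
Your proposal is correct and takes essentially the same route as the paper. The paper only sketches this step: it applies the chain rule, invokes the deterministic, time-invariant limiting kernel, and pulls $K(x,x_i)$ outside the expectation by linearity, as in the proof of Theorem~\ref{thm:homo_velocity}. Your Fubini/dominated-convergence argument for $\partial_t \mathbb{E}_{\mathbf{W}_0}[h(x;\mathbf{W}_t)] = \mathbb{E}_{\mathbf{W}_0}[\partial_t h(x;\mathbf{W}_t)]$, based on $|\partial \ell/\partial h| \le 1$ for the logistic loss, is sound and fills in an interchange the paper uses without comment.
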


This differential equation constitutes the core machinery of our theoretical analysis. It demonstrates that the instantaneous change in the model's prediction for a sample $x$ is a linear combination of the gradients of the loss evaluated on all training samples $x_i$, weighted exactly by the similarity measure defined by the kernel $K(x, x_i)$.

To intuitively understand why this framework is necessary for our paper, consider the separation of features. If an architecture processes a core feature $x^c$ and a spurious feature $x^s$ through disjoint parameter subsets, the definition of the NTK ensures that the global kernel additively decomposes into a core kernel and a spurious kernel. 

Consequently, the differential equation governing the learning dynamics linearly separates into independent velocity components driven by these respective kernels. By substituting the spiked covariance model (discussed in the next section) into this ODE, we can extract the learning velocities of different features. Features that frequently co-occur with the target label yield large cumulative sums in the differential equation, forcing the network to minimize the loss along those feature dimensions at a faster rate. This formulation thus helps us avoid heuristic explanations of simplicity bias and instead, to our knowledge for the first time, concretely quantify the conditions under which a neural network prioritizes spurious correlations.

\subsubsection{Background on the Spiked Covariance Model}
Characterizing the exact optimization dynamics of overparameterized neural networks requires analyzing the spectrum of the data covariance or the induced Gram matrix. When input data lacks specific latent structures, the sample covariance spectrum is bounded and continuously distributed, rendering the isolation of individual feature learning velocities impossible. Accordingly, we propose relying on the widely adopted notion of spiked models.

\begin{definition}
The spiked covariance model posits that the population covariance matrix $\Sigma \in \mathbb{R}^{d \times d}$ decomposes into an isotropic background noise component perturbed by a low-rank structural matrix of rank $K \ll d$
\begin{equation}
\Sigma = \sigma^2 I_d + \sum_{k=1}^K \lambda_k v_k v_k^\top
\end{equation}
where $\sigma^2 > 0$ denotes the uniform noise variance, $\lambda_k > 0$ are the discrete spike eigenvalues representing signal strengths, and $v_k \in \mathbb{R}^d$ are orthonormal vectors defining the principal latent directions of the data distribution.
\end{definition}

In the context of machine learning, these principal directions encode the dominant predictive features embedded within the input space. When evaluating the NTK across the training dataset, the inner products of the network gradients are overwhelmingly governed by these underlying latent factors.

In our paper, we map this classical statistical model directly to the kernel matrix induced by the NTK. We assume the dataset generation is  governed by dominant latent variables corresponding to the core and spurious features. We study the idealized low-rank structure of the kernel matrix $K \in \mathbb{R}^{n \times n}$ by isolating the spectral spikes
\begin{equation}
K = \beta^c \mathbf{y} \mathbf{y}^\top + \beta^s \mathbf{a} \mathbf{a}^\top
\end{equation}
The orthonormal vectors $v_k$ from the classical statistical formulation are replaced by the dataset-level structural vectors $\mathbf{y}$ and $\mathbf{a}$, while the spike eigenvalues $\lambda_k$ map to the isolated feature strengths $\beta^c$ and $\beta^s$.

This framework thus helps bridge random matrix theory and deep learning optimization, which we found essential for our ensuing theoretical analysis: It guarantees that the gradient flow dynamics are constrained to a low-dimensional subspace spanned precisely by the core and spurious features. Without adopting the spiked covariance formulation, the continuous-time ordinary differential equations governing the expected logits would entangle across all $n$ data dimensions. By restricting the kernel to have discrete structural spikes, we decouple the dataset complexity and derive exact, closed-form velocities for the competing feature components.

%%%%%

%% file: appendix/sec_b.tex
\section{Memorization and CSL Definitions}\label{sec:app_memo_csl}

\subsection{Memorization Score}
The memorization score introduced by \cite{Memorization} is quantitatively defined and calculated for each training sample as follows:
\begin{equation}
\begin{split}
    \text{mem}(f, \mathcal{D}, i) &= \mathbb{P}_{\hat{f} = f(\mathcal{D})} \big[ \argmax(\hat{f}(x_i)) = y_i\big]
     - \mathbb{P}_{\hat{f} = f(\mathcal{D}^{\setminus i})} \big[\argmax(\hat{f}(x_i)) = y_i\big],
\end{split}
\end{equation}
Here, $f$ denotes the learning algorithm trained on the dataset, and $\hat{f}$ represents the trained model. $\mathcal{D}^{\backslash i}$ denotes the subportion of the dataset $\mathcal{D}$ with the sample $i$ removed. While memorization score has proven effective for various tasks including coreset selection~\citep{SelfSup}, its computation is prohibitively expensive, requiring the model to be retrained from scratch after removing each individual sample. Consequently, researchers have proposed several computationally efficient proxies.

\subsection{Cumulative Sample Loss (CSL)}
The Cumulative Sample Loss (CSL) was recently proposed by \cite{CSL} as an efficient proxy for the memorization score and is computed as
\begin{equation}
\text{CSL}(x_i) = \frac{1}{T}\sum_{t=1}^{T}\ell\big(y_i,f(x_i,\mathbf{W}_t)\big),
\end{equation}
where the cross-entropy loss $\ell(\cdot)$ is calculated for each training sample $x_i$. Sample losses are recorded at the end of every training epoch and the final CSL is obtained by averaging the losses across all training epochs.

%% file: appendix/sec_c.tex
\section{Formal Definitions of Helper Algorithms}\label{sec:app_helper_algs}

Helper algorithms used in the main paper are given in \textbf{Algorithms \ref{alg:wkmeans}, \ref{alg:select-bottom} and \ref{alg:select-hist}}.

\begin{algorithm}[t]
\caption{$\textproc{wKMEANS}$}
\label{alg:wkmeans}
\textbf{Input:}  weights $\mathbf{w}=\{\mathrm{w}_i\}_{i=1}^n$, sample representations $\mathbf{L}=\{\mathbf{L}_i\}_{i=1}^n$, number of clusters $K\!=\!2$. \\
\textbf{Output:} Cluster assignment sets $\{G_k\}_{k=1}^K$.
\begin{algorithmic}[1]
\State \textbf{Initialization:} Initialize centers $\{\mu_k\}_{k=1}^K$ randomly from $\mathbf{L}$. Set maximum iterations to $200$.
\While{cluster assignments $\{c_i\}$ change and the maximum iterations has not reached}
    \State \textbf{Assignment step:}
    \For{$i = 1$ \textbf{to} $n$}
        \State Assign $x_i$ to the closest center
        \[
            c_i \leftarrow \arg\min_{k \in \{1,\dots,K\}} \|\mathbf{L}_i - \mu_k\|_2^2
        \]
    \EndFor
    \State Define clusters $G_k \leftarrow \{i\!:\!c_i\!=\!k\!\}$ for $k=1,\dots,K$.
    \State \textbf{Update step:}
    \For{$k = 1$ \textbf{to} $K$}
        \State Update center using weighted mean:
        \[
            \mu_k \leftarrow 
            \frac{\sum_{i \in G_k} \mathrm{w}_i \mathbf{L}_i}{\sum_{i \in G_k} \mathrm{w}_i}.
        \]
    \EndFor
\EndWhile
\State \textbf{Return} clusters $\{G_k\}_{k=1}^K$.
\end{algorithmic}
\end{algorithm}

\begin{algorithm}[t]
\caption{$\textproc{SelectBot}$}
\label{alg:select-bottom}
\textbf{Input:} Sorted group $\tilde{G}$ by TCSL$_c$ (ascending), number of samples to select $\tilde{n}$.\\
\textbf{Output:} Selected set $\tilde{\mathcal{D}}$.
\begin{algorithmic}[1]
\State \textbf{Return} $\tilde{\mathcal{D}} \gets \tilde{G}[1\!:\!\tilde{n}]$
\end{algorithmic}
\end{algorithm}

\begin{algorithm}[t]
\caption{$\textproc{SelectHist}$}
\label{alg:select-hist}
\textbf{Input:} Sorted group $\tilde{G}$ by TCSL$_c$, number of samples to select $\tilde{n}$, number of bins $B$.\\
\textbf{Output:} Selected set $\tilde{\mathcal{D}}$.
\begin{algorithmic}[1]
\State Partition $\tilde{G}$ into $B$ equal-size bins $\{\tilde{G}_1,\ldots,\tilde{G}_B\}$
\State $\tilde{\mathcal{D}} \gets \emptyset$
\While{$|\tilde{\mathcal{D}}| < \tilde{n}$}
    \For{each bin $\tilde{G}_b$}
        \If{$|\tilde{\mathcal{D}}| = \tilde{n}$}      \State \textbf{break} 
        \EndIf
        \If{$\tilde{G}_b \neq \emptyset$}
            \State pick a random $\tilde{x} \in \tilde{G}_b$
            \State $\tilde{\mathcal{D}} \gets \tilde{\mathcal{D}} \cup \{\tilde{x}\}$
            \State $\tilde{G}_b \gets \tilde{G}_b \setminus \{\tilde{x}\}$
        \EndIf
    \EndFor
\EndWhile
\State \textbf{Return} $\tilde{\mathcal{D}}$
\end{algorithmic}
\end{algorithm}

%% file: appendix/sec_d.tex
\section{Dataset Details}\label{sec:app_dataset_details}

We present detailed explanations of the datasets used in our experiments. Further information regarding the groups is illustrated in Table \ref{tab:dataset_details}. \\

\noindent \textbf{Waterbirds}~\citep{Waterbirds} The labels are landbird and waterbird, where the spurious attribute corresponds to the background type: land or water. The dataset is synthetically constructed by placing bird images from the Caltech-UCSD Birds-200-2011 dataset~\citep{CaltechBirds} onto background images from the Places dataset~\citep{Places}. \\

\noindent \textbf{cMNIST}~\citep{Cmnist} A synthetic variant of the MNIST dataset consisting of $10$ digit classes, where each digit is assigned a distinct color that serves as the spurious attribute. \\

\noindent \textbf{MetaShift}~\citep{Metashift} This dataset contains cat and dog images, where the spurious attribute corresponds to the environment of the animal, given as indoor or outdoor. \\

\noindent \textbf{UrbanCars-B}~\citep{Urbancars} The goal is to classify samples as urban or country cars, where the background serves as the spurious attribute. The dataset is synthetically constructed by placing car images onto background images from the Places dataset~\citep{Places}.

\begin{table}[t]
\centering
\begin{tabular}{lcccc}
\toprule
Dataset & $y$ & $a=y$ & $a=-y$ & $\alpha$ \\
\midrule
Waterbirds~\citep{Waterbirds} & 2 & 4555 & 240 & 0.950 \\
cMNIST~\citep{Cmnist} & 10 & 52551 & 257 & 0.995 \\
MetaShift~\citep{Metashift} & 2 & 1500 & 300 & 0.882 \\
UrbanCars-B~\citep{Urbancars} & 2 & 7600 & 400 & 0.950 \\
\bottomrule
\end{tabular}
\caption{Dataset details including the number of classes, number of samples where the spurious attribute agrees with the label, number of samples where the spurious attribute does not agree with the label and the $\alpha$ parameter.}
\label{tab:dataset_details}
\end{table}

%% file: appendix/sec_e.tex
\section{Baseline Methods and Sample Scores}\label{sec:app_info_baseline}
In this section, we describe the baseline debiasing methods used in our experiments, as well as the sample scoring functions employed in the coreset selection experiments.

CB ERM and GB ERM denote the standard ERM model trained with samples reweighted by the inverse of their class sizes and group sizes, respectively. GroupDRO \citep{GroupDRO} uses group labels during training to upweight samples from the worst performing group and directly minimizes the worst group loss. LC \citep{LC} is a two-stage training algorithm that uses information from a first model as prior knowledge about the groups to shift the second model towards a more group-balanced solution. DFR \citep{DFR} retrains the last layer of a standard ERM model, initially trained on the entire dataset, using a group-balanced validation set. CNC \citep{CNC} is a two-stage training algorithm in which the second model is trained with a contrastive loss that aligns representations of samples within a class while mitigating spurious correlations. LfF \citep{LFF} is a two-stage training algorithm where the debiased model is trained to upweight samples on which the biased model fails to predict accurately. JTT \citep{JTT} first trains an ERM model and marks the samples it misclassifies, then retrains another model from scratch while upweighting these misclassified samples. ULA \citep{uLA} is a two-stage training algorithm where the biased model is a self-supervised pretrained network and the debiased model is trained with a logit adjustment similar to LC. EIIL \citep{EIIL} learns an invariant model based on groups (environments) identified by a reference model. GEORGE \citep{GEORGE} clusters the feature space of a standard ERM model within each class to identify groups and then uses the inferred groups to train a new model with an objective similar to GroupDRO.

For our coreset selection experiments, following the setting in \citep{CS_LR}, we employ four baseline scoring functions to compute sample scores. EL2N \citep{EL2N} assigns a difficulty score to each sample based on the norm of the difference between the model’s predicted probability vector and the one-hot label vector. SelfSup \citep{SelfSup} uses an embedding-based scoring method, defined as the norm of the difference between the feature vector of a sample and the mean feature vector of its assigned cluster. Random \citep{CS_LR} randomly selects samples according to the coreset selection ratio $r$. RGbal \citep{CS_LR} uses group labels to always select samples from the minority group and randomly selects from the majority group to satisfy the remaining quota. We additionally include the state-of-the-art coreset selection algorithm D2~\cite{D2} in our experiments. D2 combines a chosen sample scoring function with feature embeddings to capture feature similarity and employs a message-passing scheme to select coresets with high distributional coverage. As the sample scoring function of D2, we use the EL2N and SelfSup scores. For feature embeddings, we consider (i) representations extracted from a ResNet model trained with ERM on the corresponding training dataset and (ii) representations from the pretrained CLIP model~\cite{CLIP}. This results in four variants of the D2 algorithm.

%% file: appendix/sec_f.tex
\section{Hyperparameters for Model Training and Coreset Selection}\label{sec:app_hyperparams}

\subsection{Model Training}
To maintain consistency with prior work, we use SGD as our optimization algorithm. Following the literature, we adopt the ResNet50 architecture for Waterbirds, MetaShift and UrbanCars-B, and ResNet18 for cMNIST. All architectures are initialized with ImageNet-1K \citep{Imagenet} pretrained weights. The hyperparameters used for the ERM model in our experiments are listed in Table~\ref{table:hyperparams}. Both the model architectures and hyperparameters are chosen to match the most commonly employed configurations in baseline studies; therefore, we do not perform hyperparameter tuning or early stopping. For both $f_s$ and $f_c$, we use the same hyperparameters as the ERM model, changing only the total training epochs to $T_s\!=\!T/10$ for the spurious network. Our goal is to demonstrate that the performance gains achieved by retraining the ERM model on the coreset selected by TCSL-CS arise solely from the effectiveness of our proposed coreset selection method, rather than from any hyperparameter adjustments.

\begin{table}[t]
\centering
\small
\setlength{\tabcolsep}{4pt}
\begin{tabular}{lcccc}
\toprule
\textbf{Parameter} & \textbf{Waterbirds} & \textbf{cMNIST} & \textbf{MetaShift} & \textbf{UrbanCars-B} \\
\midrule
Learning rate    & 1e-4 & 1e-3 & 1e-3 & 1e-4 \\
Weight decay     & 1e-1 & 1e-3 & 1e-3 & 1e-1 \\
Momentum         & 0.9  & 0.9  & 0.9  & 0.9  \\
Batch size       & 128  & 32   & 32   & 128  \\
Training epochs  & 300  & 50   & 200  & 300  \\
\bottomrule
\end{tabular}
\caption{Hyperparameters for the datasets used in our experiments.}
\label{table:hyperparams}
\end{table}

\subsection{Coreset Selection}
Our coreset selection algorithm TCSL-CS has two hyperparameters: the number of bins $B$ used for histogram-based selection and the threshold $\tau$, which determines when to switch from bottom-based selection ($\textproc{SelectBot}$) to histogram-based selection ($\textproc{SelectHist}$). Staying consistent with the coreset selection literature, we treat the coreset selection ratio $r$ as a user-specified input to the algorithm. Following prior histogram-based coreset selection methods~\citep{HistCS}, we set $B\!=\!50$ by default and do not tune it. For all datasets except cMNIST, we set $\tau\!=\!0.4$. For cMNIST, due to the lower complexity of digit images, the $\text{TCSL}_c$ scores do not form a sufficiently diverse distribution. Therefore, we disable $\textproc{SelectHist}$ by setting $\tau\!=\!0$.

For retraining the CB ERM model on the selected coresets, we fix the total number of training iterations by setting the number of training epochs to $T/r$, where $r$ is the coreset selection ratio and $T$ denotes the total training epochs of the CB ERM model trained on the full dataset. We compute the EL2N \citep{EL2N} scores using the CB ERM model after $T/10$ epochs of training on the full dataset.

%% file: appendix/sec_g.tex
\section{Additional Experiments}\label{sec:app_add_exp}
We provide additional coreset selection results for different selection ratios. We demonstrate results for ratios $0.2$, $0.4$, $0.6$ and $0.8$ in Tables \ref{tab:tab_0.2}, \ref{tab:tab_0.4}, \ref{tab:tab_0.6} and \ref{tab:tab_0.8}, respectively.

\begin{table*}[!hbtp]
\centering
\resizebox{\textwidth}{!}{
\begin{tabular}{lccccc}
\toprule
\textbf{Method} & {\textbf{Group Info}} & \textbf{Waterbirds} & \textbf{cMNIST} & \textbf{MetaShift} & \textbf{UrbanCars-B} \\
& Train & WGA (AVG) & WGA (AVG) & WGA (AVG) & WGA (AVG) \\
\midrule
EL2N (Bot)     & x & $44.89${\scriptsize$\pm2.14$} ($93.98${\scriptsize$\pm0.42$}) & $0.00${\scriptsize$\pm0.00$} ($13.92${\scriptsize$\pm0.67$}) & $52.78${\scriptsize$\pm5.93$} ($77.95${\scriptsize$\pm1.71$}) & $34.00${\scriptsize$\pm5.23$} ($66.73${\scriptsize$\pm4.40$}) \\
EL2N (Top)     & x & $\underline{89.56}${\scriptsize$\pm0.93$} ($98.05${\scriptsize$\pm0.16$}) & $49.66${\scriptsize$\pm7.07$} ($75.06${\scriptsize$\pm2.09$}) & $77.55${\scriptsize$\pm2.23$} ($87.50${\scriptsize$\pm0.60$}) & $62.53${\scriptsize$\pm0.46$} ($74.87${\scriptsize$\pm1.34$}) \\
EL2N (Hist)    & x & $74.28${\scriptsize$\pm2.13$} ($96.57${\scriptsize$\pm0.07$}) & $10.40${\scriptsize$\pm5.71$} ($35.18${\scriptsize$\pm1.39$}) & $68.98${\scriptsize$\pm2.63$} ($84.14${\scriptsize$\pm0.86$}) & $57.07${\scriptsize$\pm3.35$} ($78.30${\scriptsize$\pm0.95$}) \\
SelfSup (Bot)       & x & $61.21${\scriptsize$\pm0.26$} ($90.98${\scriptsize$\pm0.15$}) & $0.00${\scriptsize$\pm0.00$} ($19.16${\scriptsize$\pm2.00$}) & $47.92${\scriptsize$\pm2.08$} ($81.02${\scriptsize$\pm0.70$}) & $43.47${\scriptsize$\pm2.95$} ($77.50${\scriptsize$\pm0.80$}) \\
SelfSup (Top)       & x & $83.85${\scriptsize$\pm1.80$} ($97.80${\scriptsize$\pm0.29$}) & $52.70${\scriptsize$\pm8.00$} ($76.05${\scriptsize$\pm1.28$}) & $48.38${\scriptsize$\pm6.42$} ($78.88${\scriptsize$\pm1.71$}) & $28.13${\scriptsize$\pm3.63$} ($68.63${\scriptsize$\pm1.10$}) \\
SelfSup (Hist)      & x & $76.88${\scriptsize$\pm0.34$} ($96.84${\scriptsize$\pm0.39$}) & $5.40${\scriptsize$\pm2.81$} ($34.16${\scriptsize$\pm0.91$}) & $67.59${\scriptsize$\pm2.44$} ($83.97${\scriptsize$\pm1.16$}) & $60.00${\scriptsize$\pm5.64$} ($80.17${\scriptsize$\pm1.69$}) \\
D2 (EL2N+ResNet) & x &
$\underline{89.56}${\scriptsize$\pm0.41$} ($98.26${\scriptsize$\pm0.01$}) &
$20.79${\scriptsize$\pm11.92$} ($42.64${\scriptsize$\pm8.86$}) &
$\underline{79.17}${\scriptsize$\pm0.69$} ($87.91${\scriptsize$\pm0.70$}) &
$70.00${\scriptsize$\pm2.23$} ($84.37${\scriptsize$\pm1.03$}) \\
D2 (SelfSup+ResNet) & x &
$83.84${\scriptsize$\pm0.40$} ($96.97${\scriptsize$\pm0.13$}) &
$25.57${\scriptsize$\pm7.09$} ($58.65${\scriptsize$\pm5.23$}) &
$48.84${\scriptsize$\pm6.85$} ($78.99${\scriptsize$\pm1.84$}) &
$41.33${\scriptsize$\pm1.97$} ($75.10${\scriptsize$\pm0.66$}) \\
D2 (EL2N+CLIP) & x &
$86.09${\scriptsize$\pm1.00$} ($98.22${\scriptsize$\pm0.03$}) &
$27.87${\scriptsize$\pm11.64$} ($56.75${\scriptsize$\pm4.07$}) &
$73.84${\scriptsize$\pm1.06$} ($86.86${\scriptsize$\pm0.61$}) &
$62.80${\scriptsize$\pm3.49$} ($80.20${\scriptsize$\pm1.35$}) \\
D2 (SelfSup+CLIP) & x &
$80.32${\scriptsize$\pm0.74$} ($97.72${\scriptsize$\pm0.03$}) &
$22.05${\scriptsize$\pm9.23$} ($50.58${\scriptsize$\pm11.16$}) &
$68.29${\scriptsize$\pm2.81$} ($83.97${\scriptsize$\pm0.70$}) &
$37.47${\scriptsize$\pm4.28$} ($72.07${\scriptsize$\pm0.23$}) \\
Random             & x & $73.44${\scriptsize$\pm2.31$} ($97.15${\scriptsize$\pm0.08$}) & $5.40${\scriptsize$\pm0.72$} ($34.60${\scriptsize$\pm1.39$}) & $67.36${\scriptsize$\pm1.39$} ($83.97${\scriptsize$\pm2.24$}) & $56.00${\scriptsize$\pm0.69$} ($78.97${\scriptsize$\pm0.86$})\\
RGbal              & \checkmark & $87.49${\scriptsize$\pm1.82$} ($96.98${\scriptsize$\pm0.15$}) & $\underline{73.83}${\scriptsize$\pm3.65$} ($85.58${\scriptsize$\pm1.10$}) & $\boldsymbol{80.79}${\scriptsize$\pm1.06$} ($87.85${\scriptsize$\pm1.22$}) & $\underline{81.73}${\scriptsize$\pm1.10$} ($87.60${\scriptsize$\pm0.10$}) \\
TCSL-CS           & x & $\boldsymbol{90.60}${\scriptsize$\pm0.48$} ($96.68${\scriptsize$\pm0.08$}) & $\boldsymbol{85.26}${\scriptsize$\pm1.72$} ($92.22${\scriptsize$\pm0.59$}) & $76.85${\scriptsize$\pm4.24$} ($88.60${\scriptsize$\pm1.12$}) & $\boldsymbol{83.60}${\scriptsize$\pm0.40$} ($88.57${\scriptsize$\pm0.38$}) \\
\bottomrule
\end{tabular}
}
\caption{We compare the WGA and AVG of different coreset selection methods at $r\!=\!0.2$ across datasets. Results are averaged over $3$ seeds. We use CB ERM for retraining on all of the identified coresets. The best WGA for each dataset is shown in \textbf{bold} and the second best value is \underline{underlined}.
}
\label{tab:tab_0.2}
\end{table*}

\begin{table*}[!hbtp]
\centering
\resizebox{\textwidth}{!}{
\begin{tabular}{lccccc}
\toprule
\textbf{Method} & {\textbf{Group Info}} & \textbf{Waterbirds} & \textbf{cMNIST} & \textbf{MetaShift} & \textbf{UrbanCars-B} \\
& Train & WGA (AVG) & WGA (AVG) & WGA (AVG) & WGA (AVG) \\
\midrule
EL2N (Bot)     & x & $35.57${\scriptsize$\pm4.43$} ($92.71${\scriptsize$\pm0.45$}) & $0.00${\scriptsize$\pm0.00$} ($14.21${\scriptsize$\pm0.04$}) & $60.42${\scriptsize$\pm7.05$} ($80.21${\scriptsize$\pm2.13$}) & $29.87${\scriptsize$\pm6.02$} ($65.63${\scriptsize$\pm4.49$}) \\
EL2N (Top)     & x & $85.25${\scriptsize$\pm0.39$} ($98.39${\scriptsize$\pm0.04$}) & $52.18${\scriptsize$\pm14.40$} ($75.43${\scriptsize$\pm4.29$}) & $75.46${\scriptsize$\pm2.12$} ($88.54${\scriptsize$\pm0.52$}) & $73.33${\scriptsize$\pm3.21$} ($85.67${\scriptsize$\pm1.10$}) \\
EL2N (Hist)    & x & $78.09${\scriptsize$\pm0.01$} ($97.46${\scriptsize$\pm0.04$}) & $25.76${\scriptsize$\pm8.73$} ($51.65${\scriptsize$\pm3.97$}) & $71.76${\scriptsize$\pm1.75$} ($85.19${\scriptsize$\pm0.40$}) & $62.40${\scriptsize$\pm2.80$} ($80.87${\scriptsize$\pm0.21$}) \\
SelfSup (Bot)       & x & $50.95${\scriptsize$\pm1.32$} ($89.60${\scriptsize$\pm0.03$}) & $0.00${\scriptsize$\pm0.00$} ($17.58${\scriptsize$\pm0.08$}) & $61.34${\scriptsize$\pm4.62$} ($84.84${\scriptsize$\pm1.61$}) & $50.40${\scriptsize$\pm2.40$} ($80.00${\scriptsize$\pm0.44$}) \\
SelfSup (Top)       & x & $83.64${\scriptsize$\pm0.16$} ($98.22${\scriptsize$\pm0.14$}) & $60.12${\scriptsize$\pm4.10$} ($76.64${\scriptsize$\pm0.70$}) & $71.30${\scriptsize$\pm3.43$} ($85.76${\scriptsize$\pm0.90$}) & $42.13${\scriptsize$\pm4.67$} ($74.27${\scriptsize$\pm0.55$}) \\
SelfSup (Hist)      & x & $78.20${\scriptsize$\pm2.54$} ($97.50${\scriptsize$\pm0.07$}) & $24.51${\scriptsize$\pm3.26$} ($50.54${\scriptsize$\pm8.80$}) & $71.06${\scriptsize$\pm2.12$} ($85.94${\scriptsize$\pm0.46$}) & $61.47${\scriptsize$\pm4.84$} ($80.63${\scriptsize$\pm0.91$}) \\
D2 (EL2N+ResNet) & x &
$84.79${\scriptsize$\pm0.63$} ($98.35${\scriptsize$\pm0.05$}) &
$40.22${\scriptsize$\pm4.86$} ($67.71${\scriptsize$\pm2.44$}) &
$75.46${\scriptsize$\pm2.12$} ($88.25${\scriptsize$\pm0.20$}) &
$70.93${\scriptsize$\pm2.54$} ($83.50${\scriptsize$\pm0.89$}) \\
D2 (SelfSup+ResNet) & x &
$83.43${\scriptsize$\pm0.51$} ($97.61${\scriptsize$\pm0.20$}) &
$64.07${\scriptsize$\pm7.23$} ($79.94${\scriptsize$\pm0.94$}) &
$70.60${\scriptsize$\pm3.56$} ($85.36${\scriptsize$\pm0.96$}) &
$60.53${\scriptsize$\pm1.15$} ($80.23${\scriptsize$\pm0.15$}) \\
D2 (EL2N+CLIP) & x &
$76.17${\scriptsize$\pm0.71$} ($97.88${\scriptsize$\pm0.05$}) &
$33.30${\scriptsize$\pm5.13$} ($64.89${\scriptsize$\pm2.97$}) &
$70.60${\scriptsize$\pm2.00$} ($85.88${\scriptsize$\pm0.82$}) &
$63.73${\scriptsize$\pm0.61$} ($81.53${\scriptsize$\pm0.55$}) \\
D2 (SelfSup+CLIP) & x &
$75.03${\scriptsize$\pm1.92$} ($97.82${\scriptsize$\pm0.08$}) &
$36.48${\scriptsize$\pm4.46$} ($67.23${\scriptsize$\pm1.68$}) &
$71.76${\scriptsize$\pm0.80$} ($86.05${\scriptsize$\pm0.44$}) &
$61.47${\scriptsize$\pm0.92$} ($79.67${\scriptsize$\pm0.93$}) \\
Random             & x & $75.47${\scriptsize$\pm0.77$} ($97.38${\scriptsize$\pm0.11$}) & $28.71${\scriptsize$\pm17.27$} ($61.84${\scriptsize$\pm1.12$}) & $71.30${\scriptsize$\pm1.75$} ($85.94${\scriptsize$\pm0.63$}) & $61.20${\scriptsize$\pm2.12$} ($80.57${\scriptsize$\pm0.46$}) \\
RGbal              & \checkmark & $\underline{86.55}${\scriptsize$\pm2.52$} ($97.97${\scriptsize$\pm0.01$}) & $\underline{66.02}${\scriptsize$\pm5.78$} ($81.58${\scriptsize$\pm0.28$}) & $\underline{78.70}${\scriptsize$\pm1.06$} ($89.35${\scriptsize$\pm0.82$}) & $\underline{76.27}${\scriptsize$\pm0.83$} ($86.57${\scriptsize$\pm0.25$}) \\
TCSL-CS           & x & $\boldsymbol{88.63}${\scriptsize$\pm0.61$} ($92.55${\scriptsize$\pm0.11$}) & $\boldsymbol{75.75}${\scriptsize$\pm8.18$} ($88.38${\scriptsize$\pm0.67$}) & $\boldsymbol{82.64}${\scriptsize$\pm0.69$} ($87.27${\scriptsize$\pm0.72$}) & $\boldsymbol{81.87}${\scriptsize$\pm1.40$} ($84.17${\scriptsize$\pm0.25$}) \\
\bottomrule
\end{tabular}
}
\caption{We compare the WGA and AVG of different coreset selection methods at $r\!=\!0.4$ across datasets. Results are averaged over $3$ seeds. We use CB ERM for retraining on all of the identified coresets. The best WGA for each dataset is shown in \textbf{bold} and the second best value is \underline{underlined}.
}
\label{tab:tab_0.4}
\end{table*}

\begin{table*}[!hbtp]
\centering
\resizebox{\textwidth}{!}{
\begin{tabular}{lccccc}
\toprule
\textbf{Method} & {\textbf{Group Info}} & \textbf{Waterbirds} & \textbf{cMNIST} & \textbf{MetaShift} & \textbf{UrbanCars-B} \\
& Train & WGA (AVG) & WGA (AVG) & WGA (AVG) & WGA (AVG) \\
\midrule
EL2N (Bot)     & x & $30.92${\scriptsize$\pm0.62$} ($93.17${\scriptsize$\pm0.39$}) & $0.00${\scriptsize$\pm0.00$} ($16.50${\scriptsize$\pm1.12$}) & $60.65${\scriptsize$\pm5.36$} ($80.21${\scriptsize$\pm2.23$}) & $32.93${\scriptsize$\pm8.34$} ($67.37${\scriptsize$\pm4.10$}) \\
EL2N (Top)     & x & $76.40${\scriptsize$\pm8.41$} ($98.25${\scriptsize$\pm0.11$}) & $51.31${\scriptsize$\pm3.94$} ($77.48${\scriptsize$\pm0.50$}) & $74.07${\scriptsize$\pm0.65$} ($87.79${\scriptsize$\pm0.22$}) & $68.67${\scriptsize$\pm2.44$} ($83.80${\scriptsize$\pm0.20$}) \\
EL2N (Hist)    & x & $81.31${\scriptsize$\pm1.22$} ($97.81${\scriptsize$\pm0.04$}) & $36.42${\scriptsize$\pm0.56$} ($62.40${\scriptsize$\pm0.26$}) & $72.22${\scriptsize$\pm1.70$} ($86.11${\scriptsize$\pm0.57$}) & $65.73${\scriptsize$\pm3.26$} ($82.57${\scriptsize$\pm0.45$}) \\
SelfSup (Bot)       & x & $49.25${\scriptsize$\pm1.49$} ($93.99${\scriptsize$\pm0.31$}) & $0.00${\scriptsize$\pm0.00$} ($15.36${\scriptsize$\pm1.20$}) & $67.59${\scriptsize$\pm0.65$} ($86.11${\scriptsize$\pm0.25$}) & $58.00${\scriptsize$\pm0.69$} ($82.50${\scriptsize$\pm0.61$}) \\
SelfSup (Top)       & x & $80.74${\scriptsize$\pm0.80$} ($98.16${\scriptsize$\pm0.03$}) & $\underline{55.62}${\scriptsize$\pm19.36$} ($78.88${\scriptsize$\pm2.55$}) & $\underline{75.46}${\scriptsize$\pm1.18$} ($86.75${\scriptsize$\pm0.43$}) & $53.60${\scriptsize$\pm8.65$} ($79.43${\scriptsize$\pm1.76$}) \\
SelfSup (Hist)      & x & $81.09${\scriptsize$\pm0.17$} ($97.77${\scriptsize$\pm0.02$}) & $39.27${\scriptsize$\pm8.77$} ($62.18${\scriptsize$\pm0.11$}) & $71.99${\scriptsize$\pm1.73$} ($86.52${\scriptsize$\pm0.46$}) & $66.00${\scriptsize$\pm4.92$} ($82.00${\scriptsize$\pm0.66$}) \\
D2 (EL2N+ResNet) & x &
$81.00${\scriptsize$\pm1.18$} ($98.27${\scriptsize$\pm0.07$}) &
$41.21${\scriptsize$\pm4.88$} ($68.75${\scriptsize$\pm7.38$}) &
$73.61${\scriptsize$\pm0.00$} ($87.73${\scriptsize$\pm0.27$}) &
$66.53${\scriptsize$\pm3.00$} ($82.93${\scriptsize$\pm1.68$}) \\
D2 (SelfSup+ResNet) & x &
$82.50${\scriptsize$\pm1.26$} ($97.99${\scriptsize$\pm0.06$}) &
$58.32${\scriptsize$\pm1.96$} ($77.02${\scriptsize$\pm1.72$}) &
$\underline{75.46}${\scriptsize$\pm1.45$} ($86.63${\scriptsize$\pm0.52$}) &
$61.20${\scriptsize$\pm3.67$} ($80.73${\scriptsize$\pm0.45$}) \\
D2 (EL2N+CLIP) & x &
$76.32${\scriptsize$\pm2.04$} ($97.66${\scriptsize$\pm0.04$}) &
$30.45${\scriptsize$\pm7.00$} ($65.55${\scriptsize$\pm4.61$}) &
$72.22${\scriptsize$\pm0.69$} ($85.76${\scriptsize$\pm0.17$}) &
$66.27${\scriptsize$\pm0.23$} ($81.67${\scriptsize$\pm0.21$}) \\
D2 (SelfSup+CLIP) & x &
$74.04${\scriptsize$\pm0.94$} ($97.70${\scriptsize$\pm0.03$}) &
$37.58${\scriptsize$\pm5.15$} ($66.85${\scriptsize$\pm3.38$}) &
$72.69${\scriptsize$\pm0.80$} ($86.28${\scriptsize$\pm0.17$}) &
$59.47${\scriptsize$\pm0.61$} ($79.03${\scriptsize$\pm1.29$}) \\
Random             & x & $78.04${\scriptsize$\pm1.10$} ($97.69${\scriptsize$\pm0.05$}) & $48.23${\scriptsize$\pm7.14$} ($68.54${\scriptsize$\pm2.20$}) & $71.53${\scriptsize$\pm0.57$} ($86.05${\scriptsize$\pm0.46$}) & $64.40${\scriptsize$\pm0.69$} ($82.00${\scriptsize$\pm0.44$}) \\
RGbal              & \checkmark & $\boldsymbol{83.54}${\scriptsize$\pm1.48$} ($97.91${\scriptsize$\pm0.11$}) & $54.13${\scriptsize$\pm1.87$} ($77.46${\scriptsize$\pm1.03$}) & $75.23${\scriptsize$\pm0.87$} ($88.31${\scriptsize$\pm0.22$}) & $\underline{72.80}${\scriptsize$\pm1.06$} ($85.03${\scriptsize$\pm0.84$}) \\
TCSL-CS           & x & $\underline{83.52}${\scriptsize$\pm0.33$} ($95.08${\scriptsize$\pm0.03$}) & $\boldsymbol{70.80}${\scriptsize$\pm1.61$} ($86.14${\scriptsize$\pm1.16$}) & $\boldsymbol{79.40}${\scriptsize$\pm1.64$} ($89.29${\scriptsize$\pm0.30$}) & $\boldsymbol{80.40}${\scriptsize$\pm1.06$} ($88.03${\scriptsize$\pm0.84$}) \\
\bottomrule
\end{tabular}
}
\caption{We compare the WGA and AVG of different coreset selection methods at $r\!=\!0.6$ across datasets. Results are averaged over $3$ seeds. We use CB ERM for retraining on all of the identified coresets. The best WGA for each dataset is shown in \textbf{bold} and the second best value is \underline{underlined}.
}
\label{tab:tab_0.6}
\end{table*}

\begin{table*}[!hbtp]
\centering
\resizebox{\textwidth}{!}{
\begin{tabular}{lccccc}
\toprule
\textbf{Method} & {\textbf{Group Info}} & \textbf{Waterbirds} & \textbf{cMNIST} & \textbf{MetaShift} & \textbf{UrbanCars-B} \\
& Train & WGA (AVG) & WGA (AVG) & WGA (AVG) & WGA (AVG) \\
\midrule
EL2N (Bot)     & x & $28.56${\scriptsize$\pm2.29$} ($94.81${\scriptsize$\pm0.13$}) & $0.00${\scriptsize$\pm0.00$} ($18.00${\scriptsize$\pm0.79$}) & $65.97${\scriptsize$\pm3.45$} ($82.06${\scriptsize$\pm1.66$}) & $32.67${\scriptsize$\pm7.49$} ($68.83${\scriptsize$\pm4.76$}) \\
EL2N (Top)     & x & $81.07${\scriptsize$\pm2.09$} ($98.07${\scriptsize$\pm0.03$}) & $\underline{55.48}${\scriptsize$\pm0.10$} ($74.35${\scriptsize$\pm0.76$}) & $72.92${\scriptsize$\pm0.57$} ($87.27${\scriptsize$\pm0.30$}) & $68.80${\scriptsize$\pm0.40$} ($83.70${\scriptsize$\pm0.36$}) \\
EL2N (Hist)    & x & $80.49${\scriptsize$\pm1.97$} ($97.86${\scriptsize$\pm0.02$}) & $40.11${\scriptsize$\pm16.23$} ($68.80${\scriptsize$\pm2.47$}) & $73.15${\scriptsize$\pm1.31$} ($86.86${\scriptsize$\pm0.22$}) & $66.93${\scriptsize$\pm1.85$} ($82.60${\scriptsize$\pm0.75$}) \\
SelfSup (Bot)  & x & $61.63${\scriptsize$\pm2.65$} ($96.13${\scriptsize$\pm0.18$}) & $0.00${\scriptsize$\pm0.00$} ($16.22${\scriptsize$\pm1.95$}) & $70.37${\scriptsize$\pm0.33$} ($86.69${\scriptsize$\pm0.30$}) & $64.13${\scriptsize$\pm0.83$} ($83.70${\scriptsize$\pm0.17$}) \\
SelfSup (Top)  & x & $80.89${\scriptsize$\pm1.06$} ($98.09${\scriptsize$\pm0.01$}) & $55.35${\scriptsize$\pm4.53$} ($74.87${\scriptsize$\pm0.74$}) & $74.31${\scriptsize$\pm0.57$} ($87.27${\scriptsize$\pm0.30$}) & $64.00${\scriptsize$\pm1.60$} ($81.57${\scriptsize$\pm0.59$}) \\
SelfSup (Hist) & x & $\underline{82.24}${\scriptsize$\pm1.10$} ($97.93${\scriptsize$\pm0.17$}) & $49.94${\scriptsize$\pm10.04$} ($71.10${\scriptsize$\pm1.54$}) & $71.99${\scriptsize$\pm1.31$} ($86.75${\scriptsize$\pm0.78$}) & $67.47${\scriptsize$\pm1.80$} ($82.83${\scriptsize$\pm0.40$}) \\
D2 (EL2N+ResNet) & x &
$81.26${\scriptsize$\pm1.94$} ($98.09${\scriptsize$\pm0.04$}) &
$45.95${\scriptsize$\pm2.50$} ($68.54${\scriptsize$\pm0.41$}) &
$72.92${\scriptsize$\pm0.69$} ($87.21${\scriptsize$\pm0.27$}) &
$67.60${\scriptsize$\pm3.82$} ($82.77${\scriptsize$\pm1.11$}) \\
D2 (SelfSup+ResNet) & x &
$79.75${\scriptsize$\pm2.29$} ($97.93${\scriptsize$\pm0.03$}) &
$47.69${\scriptsize$\pm9.67$} ($75.62${\scriptsize$\pm2.21$}) &
$\boldsymbol{74.77}${\scriptsize$\pm1.06$} ($87.38${\scriptsize$\pm0.36$}) &
$57.60${\scriptsize$\pm2.08$} ($80.03${\scriptsize$\pm0.21$}) \\
D2 (EL2N+CLIP) & x &
$77.05${\scriptsize$\pm1.65$} ($97.75${\scriptsize$\pm0.03$}) &
$37.67${\scriptsize$\pm7.74$} ($63.48${\scriptsize$\pm4.71$}) &
$68.98${\scriptsize$\pm2.12$} ($85.07${\scriptsize$\pm0.80$}) &
$64.80${\scriptsize$\pm1.06$} ($80.90${\scriptsize$\pm0.61$}) \\
D2 (SelfSup+CLIP) & x &
$73.62${\scriptsize$\pm0.18$} ($97.76${\scriptsize$\pm0.12$}) &
$34.48${\scriptsize$\pm6.20$} ($64.26${\scriptsize$\pm4.38$}) &
$70.60${\scriptsize$\pm1.06$} ($85.36${\scriptsize$\pm0.44$}) &
$59.33${\scriptsize$\pm2.72$} ($79.43${\scriptsize$\pm1.02$}) \\
Random         & x & $79.44${\scriptsize$\pm0.66$} ($97.83${\scriptsize$\pm0.02$}) & $36.02${\scriptsize$\pm1.45$} ($69.29${\scriptsize$\pm4.13$}) & $72.45${\scriptsize$\pm0.65$} ($86.92${\scriptsize$\pm0.22$}) & $61.87${\scriptsize$\pm3.00$} ($82.10${\scriptsize$\pm0.82$}) \\
RGbal          & \checkmark & $81.52${\scriptsize$\pm1.06$} ($97.88${\scriptsize$\pm0.11$}) & $48.29${\scriptsize$\pm16.12$} ($76.51${\scriptsize$\pm0.12$}) & $72.45${\scriptsize$\pm1.43$} ($87.62${\scriptsize$\pm0.08$}) & $\underline{70.27}${\scriptsize$\pm0.46$} ($84.20${\scriptsize$\pm0.70$}) \\
TCSL-CS        & x & $\boldsymbol{83.71}${\scriptsize$\pm1.34$} ($96.87${\scriptsize$\pm0.03$}) & $\boldsymbol{62.09}${\scriptsize$\pm1.71$} ($82.20${\scriptsize$\pm2.72$}) & $\underline{74.54}${\scriptsize$\pm0.33$} ($89.00${\scriptsize$\pm0.16$}) & $\boldsymbol{73.87}${\scriptsize$\pm4.69$} ($86.77${\scriptsize$\pm0.90$}) \\
\bottomrule
\end{tabular}
}
\caption{We compare the WGA and AVG of different coreset selection methods at $r\!=\!0.8$ across datasets. Results are averaged over $3$ seeds. We use CB ERM for retraining on all of the identified coresets. The best WGA for each dataset is shown in \textbf{bold} and the second best value is \underline{underlined}.
}
\label{tab:tab_0.8}
\end{table*}

%% file: appendix/sec_h.tex
\section{Additional Ablation Studies}\label{sec:app_abb_study}

In Figure \ref{fig:comp_densities}, we compare the distribution of EL2N scores with our $\text{TCSL}_s$ and $\text{TCSL}_c$ scores on Waterbirds. While both EL2N and $\text{TCSL}_s$ consistently assign higher scores to minority-group samples (i.e., without spurious background correlations), $\text{TCSL}_c$ remains largely invariant to background and assigns comparable scores across groups. Next, we provide additional ablation studies for TCSL-CS. As the TCSL score consists of two components, $\text{TCSL}_c$ and $\text{TCSL}_s$, we evaluate the need for each part separately. Finally, we present ablation studies on the choice of $B$ and $T_s$ in our algorithm, to illustrate the robustness of our proposed coreset selection method. 

\begin{figure*}[t]
    \centering
    \includegraphics[width=\linewidth]{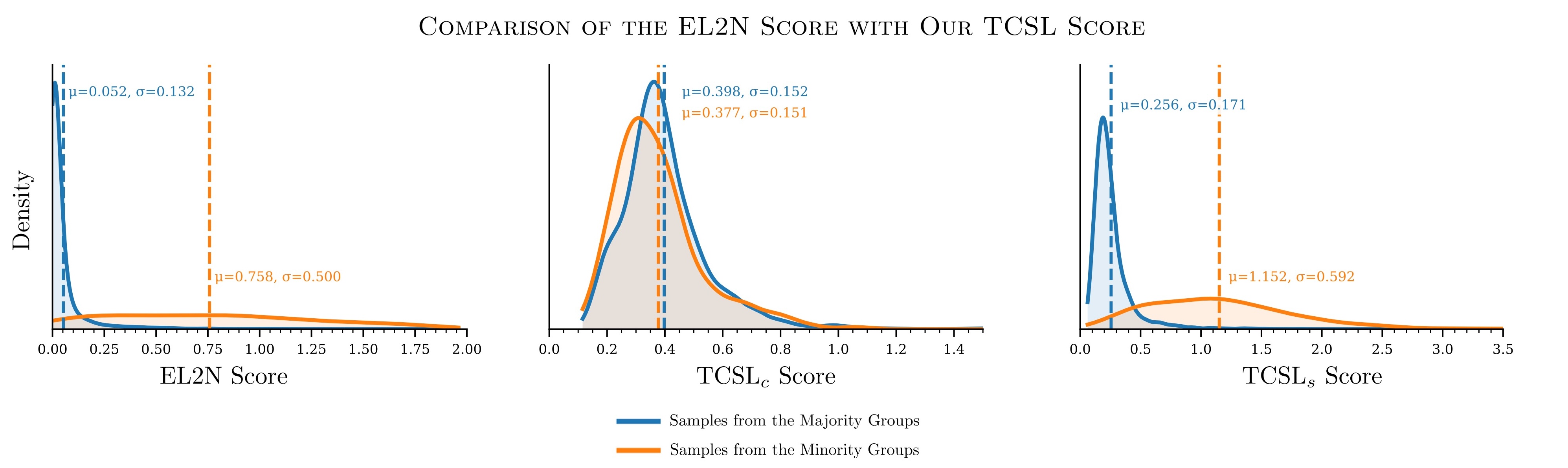}
    \caption{Density Comparison of EL2N and TCSL Scores.}
    \label{fig:comp_densities}
\end{figure*}

First, we assume that the group identification stage based on the $\text{TCSL}_s$ score in \textbf{Algorithm~\ref{alg:tcsl_coreset}} of the main paper is replaced with the exact group labels. Hence, our coreset selection algorithm first selects samples from the minority groups and then fills the remaining quota based on the $\text{TCSL}_c$ scores of the majority group. In this way, we are able to demonstrate the performance gains achieved by selecting based on the $\text{TCSL}_c$ score rather than using random selection as in RGbal. We illustrate the results in Table \ref{tab:tab_tcslc}. Since $\alpha > 0.9$ for all datasets except MetaShift, we set $r\!=\!0.1$ as in the main paper. For MetaShift, as $\alpha = 0.882$, we set $r\!=\!0.2$. The results show that TCSL-CS improves performance on all datasets when the group labels are made available. We note that the performance of TCSL-CS could be further improved with group labels, since our $\text{TCSL}_c$ score is still computed under a debiased training scheme constructed without access to group labels. With access to group labels, one could build even stronger biased and debiased model training schemes, which would in turn improve the computation of the $\text{TCSL}_c$ score.

Next, we compare the $\text{TCSL}_s$ scores obtained from our proposed biased model structure given in \textbf{Algorithm 1} of the main paper, with the scores obtained from a standard baseline biased model from the literature. Specifically, we train a biased model using the generalized cross entropy (GCE) loss \citep{GCE} for the same number of epochs as our biased model and compute the $\text{TCSL}_s$ scores for both cases. 

As our goal is to capture information related only to the spurious feature part of the image, we compute the cosine similarity between the computed $\text{TCSL}_s$ scores and the CSL scores obtained by manually removing and training on only the spurious or core feature parts of the images. We choose Waterbirds for this study, since the dataset is artificially constructed, allowing for clean separation of the feature parts. The results are presented in Table \ref{tab:tab_tcsls}.
Since GCE requires tuning the hyperparameter $q$, we report results for $q \in \{0.7, 0.8, 0.9\}$. As shown, the scores obtained with our biased model achieve higher similarity to $\text{CSL}(x^s)$ and lower similarity to $\text{CSL}(x^c)$. This demonstrates that our biased model architecture is a more suitable choice for our setting than the widely used GCE-based biased training.

To demonstrate that the performance gains obtained from training on our identified coresets are not specific to a particular architecture, we present cross-architectural results in Table~\ref{tab:tab_cross_arc_wga}. Models are trained on coresets selected by TCSL-CS ($r=0.1$) using a ResNet backbone (ResNet18 for cMNIST and ResNet50 for all other datasets). We report WGA improvements relative to ERM baselines of the same target architectures. The results indicate that the selected coresets are architecture-agnostic, yielding consistent performance gains across different model architectures.

\begin{table*}[!hbtp]
\centering
\resizebox{\linewidth}{!}{
\begin{tabular}{lccccc}
\toprule
\textbf{Method} & {\textbf{Group Info}} & \textbf{Waterbirds} & \textbf{cMNIST} & \textbf{MetaShift} & \textbf{UrbanCars-B} \\
& Train & WGA (AVG) & WGA (AVG) & WGA (AVG) & WGA (AVG) \\
\midrule
TCSL-CS           & x & $91.91${\scriptsize$\pm0.35$} ($92.83${\scriptsize$\pm0.55$}) & $\underline{83.37}${\scriptsize$\pm1.33$} ($91.76${\scriptsize$\pm0.44$}) & $76.85${\scriptsize$\pm4.24$} ($88.60${\scriptsize$\pm1.12$}) & $\underline{84.27}${\scriptsize$\pm1.67$} ($86.93${\scriptsize$\pm0.78$}) \\
TCSL-CS           & \checkmark & $\underline{92.06}${\scriptsize$\pm0.12$} ($93.32${\scriptsize$\pm0.43$}) & $\boldsymbol{88.26}${\scriptsize$\pm2.07$} ($94.01${\scriptsize$\pm1.67$}) & $\boldsymbol{85.42}${\scriptsize$\pm4.42$} ($88.02${\scriptsize$\pm1.84$}) & $\boldsymbol{85.20}${\scriptsize$\pm0.14$} ($86.70${\scriptsize$\pm0.07$}) \\
RGbal              & \checkmark & $\boldsymbol{92.67}${\scriptsize$\pm0.58$} ($93.68${\scriptsize$\pm0.41$}) & $71.68${\scriptsize$\pm13.83$} ($87.59${\scriptsize$\pm2.24$}) & $\underline{80.79}${\scriptsize$\pm1.06$} ($87.85${\scriptsize$\pm1.22$}) & $84.00${\scriptsize$\pm1.20$} ($86.57${\scriptsize$\pm0.06$}) \\
\bottomrule
\end{tabular}
}
\caption{We conduct an ablation study on the $\text{TCSL}_c$ score. Since $\alpha > 0.9$ for all datasets except MetaShift, we set $r\!=\!0.1$ as in the main paper. For MetaShift, as $\alpha = 0.882$, we set $r\!=\!0.2$. We use CB ERM for retraining on all of the identified coresets. The best WGA for each dataset is shown in \textbf{bold} and the second best value is \underline{underlined}.}
\label{tab:tab_tcslc}
\end{table*}

\begin{table}[!hbtp]
\centering
\small % or \footnotesize if still big
\setlength{\tabcolsep}{4pt} % tighten columns slightly
\begin{tabular}{lcccc}
\toprule
\textbf{Method} & \textbf{Ours} & \textbf{GCE(q=0.7)} & \textbf{GCE(q=0.8)} & \textbf{GCE(q=0.9)} \\
\midrule
$\text{CSL}(x^c)$ & 0.568 & 0.637 & 0.639 & 0.641 \\
$\text{CSL}(x^s)$ & 0.823 & 0.769 & 0.763 & 0.756 \\
\bottomrule
\end{tabular}
\caption{Cosine similarity between the CSL scores obtained with models trained on only the core and spurious feature parts of images and the $\text{TCSL}_s$ scores obtained from biased models trained with different methods. We compare our proposed biased model ($f_s$) against biased models trained with the generalized cross-entropy (GCE) loss.}
\label{tab:tab_tcsls}
\end{table}

\begin{table}[!hbtp]
\centering
\small
\setlength{\tabcolsep}{4pt} % slightly tighter spacing
\begin{tabular}{lcccc}
\toprule
\textbf{Dataset} & \textbf{Original} & \textbf{ResNet101} & \textbf{InceptionV3} & \textbf{DenseNet121} \\
\midrule
Waterbirds   & +10.76 & +9.97 & +8.10 & +15.81 \\
cMNIST       & +26.76 & +24.09 & +28.20 & +25.14 \\
MetaShift    & +6.25  & +4.86 & +4.86 & +4.08 \\
UrbanCars-B  & +18.27 & +18.80 & +15.60 & +13.30 \\
\bottomrule
\end{tabular}
\caption{We evaluate performance gains relative to ERM baselines across different architectures. Each model is trained on the coresets $(r=0.1)$ identified by TCSL-CS, where the coreset selection is conducted using a ResNet backbone (ResNet18 for cMNIST and ResNet50 for all other datasets). The results show that the selected coresets are architecture-agnostic and transfer effectively across different model architectures.}
\label{tab:tab_cross_arc_wga}
\end{table}

In Tables \ref{tab:b_ablation} and \ref{tab:ts_ablation} we vary $B$ and $T_s$ in our algorithm, respectively, to support the effectiveness of our proposed coreset selection method. The results show that performance remains robust across these choices and consistently outperforms the strongest baselines on all four datasets.

\begin{table*}[!hbtp]
\centering
\resizebox{0.8\linewidth}{!}{
\begin{tabular}{lccc}
\toprule
\textbf{Method} & \textbf{Waterbirds} & \textbf{MetaShift} & \textbf{UrbanCars-B} \\
& WGA (AVG) & WGA (AVG) & WGA (AVG) \\
\midrule
$B=25$  & 91.22$\pm$0.31 (92.75$\pm$0.76) & \textbf{79.49$\pm$2.46} (87.85$\pm$0.49) & \underline{85.00$\pm$1.13} (87.60$\pm$0.14) \\
$B=50$  & \textbf{91.91$\pm$0.35} (92.83$\pm$0.55) & 79.40$\pm$2.23 (84.95$\pm$1.52) & 84.27$\pm$1.67 (86.93$\pm$0.78) \\
$B=100$ & \underline{91.75$\pm$0.87} (92.53$\pm$1.45) & \underline{79.48$\pm$3.44} (85.59$\pm$1.23) & \textbf{85.60$\pm$0.28} (87.35$\pm$0.07) \\
\bottomrule
\end{tabular}
}
\caption{We compare the WGA and AVG for different $B$ at $r=0.1$ across datasets. Results are averaged over 3 seeds. The best WGA for each dataset is shown in \textbf{bold} and the second-best value is \underline{underlined}. $B=50$ represents what was originally reported.}
\label{tab:b_ablation}
\end{table*}

\begin{table*}[!hbtp]
\centering
\resizebox{\linewidth}{!}{
\begin{tabular}{lcccc}
\toprule
\textbf{Method} & \textbf{Waterbirds} & \textbf{cMNIST} & \textbf{MetaShift} & \textbf{UrbanCars-B} \\
& WGA (AVG) & WGA (AVG) & WGA (AVG) & WGA (AVG) \\
\midrule
$T_s=T/12$ & \textbf{92.28$\pm$0.50} (93.18$\pm$0.49) & 82.88$\pm$2.24 (88.78$\pm$0.06) & \textbf{80.86$\pm$1.44} (85.97$\pm$1.97) & \underline{83.72$\pm$1.62} (87.17$\pm$0.61) \\
$T_s=T/10$ & 91.91$\pm$0.35 (92.83$\pm$0.55) & \underline{83.37$\pm$1.33} (91.76$\pm$0.44) & 79.40$\pm$2.23 (84.95$\pm$1.52) & \textbf{84.27$\pm$1.67} (86.93$\pm$0.78) \\
$T_s=T/8$  & \underline{92.06$\pm$0.52} (92.78$\pm$0.36) & \textbf{84.07$\pm$1.94} (92.75$\pm$0.77) & \underline{80.14$\pm$2.89} (85.60$\pm$1.86) & 81.85$\pm$1.80 (85.60$\pm$1.04) \\
\bottomrule
\end{tabular}
}
\caption{We compare the WGA and AVG for different $T_s$ at $r=0.1$ across datasets. Results are averaged over 3 seeds. The best WGA for each dataset is shown in \textbf{bold} and the second-best value is \underline{underlined}. $T_s=T/10$ represents what was originally reported.}
\label{tab:ts_ablation}
\end{table*}

%% file: appendix/sec_rcm_anal.tex
\section{Computational Efficiency}\label{sec:app_rcsm}

Below, we analyze the runtime, computational complexity, and memory usage of our algorithm.

\subsection{Complexity Analysis}

We assume that the cost of a single forward pass through a given deep neural network is $F$, and that the cost of one backpropagation step is $B$. Thus, the cost of one forward-backward pass is $F+B$. Let $T_s$ denote the number of epochs used to train the biased model, $T_c$ the number of epochs used to train the debiased model, and $n$ the size of the training dataset.

Under standard empirical risk minimization (ERM), training for $T$ epochs incurs a computational cost of $Tn(F+B)$. In TCSL, we use $T_s=T/10$ and $T_c=T$ by default. The TCSL scores are obtained during training without additional forward passes, since per-sample losses are already computed as part of optimization. Hence, the total computational cost of TCSL is $1.1Tn(F+B)$. Assuming $B \approx 2F$, ERM requires $\approx 3TnF$ operations, whereas TCSL requires $\approx 3.3TnF$ operations.

Common two-stage algorithms, including LC, CNC, LfF, JTT, and ULA, typically train two models for $T$ epochs each. Their total computational cost is therefore $2Tn(F+B)=6TnF$. By contrast, TCSL trains the biased model for only a fraction of the full training budget, resulting in lower computational overhead.

We note that the debiased model $f_c$ trained within the TCSL framework can itself be used as a final model, since it follows the structure of two-stage and logit-correction based approaches. However, the primary objective of TCSL is to identify a \textbf{debiased coreset} such that a standard ERM model trained on this subset achieves state-of-the-art performance. Consequently, evaluating the quality of the selected coreset requires an additional ERM training run.

\subsection{Runtime Analysis}

We report the average runtime per training epoch for TCSL across all four datasets, along with the total runtime of the coreset selection procedure in Table~\ref{tab:tab_runtime}.

\begin{table}[t]
\centering
\begin{tabular}{lcc}
\toprule
\textbf{Dataset} & \textbf{Training (1 Epoch)} & \textbf{Coreset Selection (Total)} \\
\midrule
Waterbirds  & 6.576s & 1.161s \\
cMNIST      & 25.30s & 1.735s \\
MetaShift   & 3.201s & 0.732s \\
UrbanCars-B & 9.610s & 1.392s \\
\bottomrule
\end{tabular}
\caption{Runtime of each stage of the TCSL algorithm. All measurements were obtained on a single NVIDIA A100 GPU.}\label{tab:tab_runtime}
\end{table}

\subsection{Memory Usage Analysis}

Our implementation of wKMeans takes the biased model's loss trajectories as input, which requires storing $T_s \cdot n$ scalar values that are then used to compute the TCSL$_s$ scores. The TCSL$_c$ scores are computed as the average per-sample losses over the debiased model's training trajectory. These values are accumulated online during training, so only $n$ scalar values are stored. Hence, the total memory overhead is $(T_s + 1) \cdot n$ scalar values. Compared with the datasets used in our experiments, which typically require $224 \cdot 224 \cdot 3 \cdot n$ scalar values for image storage, this introduces a negligible memory overhead of approximately $0.02$\% for $T=300$. Furthermore, the losses can be stored in CPU memory to reduce GPU memory pressure.

%% file: appendix/sec_i.tex
% \section{Limitations and Future Work}\label{sec:app_limit_future}

% While our work is the first coreset selection algorithm specifically designed to mitigate spurious correlations and effectively select coresets to represent each group within the data, we assume only two groups: a majority group, where the spurious attribute agrees with the class label and a minority group, where the spurious attribute disagrees with the class label. Although we demonstrate that our method effectively selects coresets while overcoming spurious correlations across various datasets and settings, we do not consider the more complex scenario in which multiple spurious correlations, and hence multiple minority groups, are present in the data. We plan to extend our work to more complex settings, such as the presence of multiple spurious correlations and case where the same spurious correlation can appear in multiple classes.